\documentclass{article}

\PassOptionsToPackage{numbers,sort&compress}{natbib}

\usepackage[preprint]{neurips_2026}

\usepackage[utf8]{inputenc} 
\usepackage[T1]{fontenc}    
\usepackage{hyperref}       
\usepackage{url}            
\usepackage{booktabs}       
\usepackage{amsfonts}       
\usepackage{nicefrac}       
\usepackage{microtype}      
\usepackage{xcolor}         

\usepackage{amsmath,amsfonts,bm}

\def\eqref#1{equation~\ref{#1}}

\def\1{\bm{1}}

\def\mG{{\bm{G}}}

\def\mI{{\bm{I}}}

\def\mM{{\bm{M}}}

\def\mP{{\bm{P}}}
\def\mQ{{\bm{Q}}}
\def\mR{{\bm{R}}}

\def\mU{{\bm{U}}}
\def\mV{{\bm{V}}}
\def\mW{{\bm{W}}}
\def\mX{{\bm{X}}}
\def\mY{{\bm{Y}}}

\def\mSigma{{\bm{\Sigma}}}

\DeclareMathAlphabet{\mathsfit}{\encodingdefault}{\sfdefault}{m}{sl}
\SetMathAlphabet{\mathsfit}{bold}{\encodingdefault}{\sfdefault}{bx}{n}
\newcommand{\tens}[1]{\bm{\mathsfit{#1}}}

\def\tW{{\tens{W}}}

\usepackage{microtype}
\usepackage{graphicx}
\usepackage{subcaption}
\usepackage{booktabs}
\usepackage{placeins}
\usepackage{multirow}
\usepackage{comment}
\usepackage{float}
\usepackage{listings}
\usepackage{hyperref}
\usepackage{url}
\usepackage{wrapfig}
\usepackage{enumitem}
\usepackage{xcolor}
\usepackage{pgffor}
\usepackage{etoolbox}

\usepackage{amsmath}
\usepackage{amssymb}
\usepackage{mathtools}
\usepackage{amsthm}
\usepackage{pifont}
\usepackage[capitalize,noabbrev,nameinlink]{cleveref}
\usepackage{thmtools}
\usepackage{thm-restate}
\usepackage[textsize=tiny]{todonotes}
\hypersetup{
  colorlinks=true,
  citecolor=blue,
  linkcolor=blue,
  urlcolor=blue
}

\lstdefinestyle{mypython}{
  language=Python,
  basicstyle=\ttfamily\footnotesize,
  keywordstyle=\color{blue},
  commentstyle=\color{teal},
  stringstyle=\color{red!60!black},
  showstringspaces=false,
  breaklines=true,
  frame=single,
  columns=fullflexible,
  keepspaces=true,
  tabsize=4
}

\usepackage{algorithm}
\usepackage{algpseudocode}

\theoremstyle{plain}

\theoremstyle{definition}

\theoremstyle{remark}

\newif\ifdeanon
\deanontrue

\ifdeanon
  \newcommand{\githuburl}{https://github.com/davidgonmar/SLORR}
  \newcommand{\acknowledgements}{
    \begin{ack}
We would like to thank Keyu Wang, Philipp Nazari, and Neeratyoy Mallik for helpful discussions and feedback on this work.
    \end{ack}
  }
\else
  \newcommand{\githuburl}{https://anonymous.4open.science/r/SLORR-E4F8}
  \newcommand{\acknowledgements}{}
\fi

\newcommand{\good}[1]{\textcolor{green!45!black}{#1}}
\newcommand{\bad}[1]{\textcolor{red!80!black}{#1}}
\newcommand{\medium}[1]{\textcolor{orange!85!yellow!70!black}{#1}}

\title{SLORR: Simple and Efficient In-Training Low-Rank Regularization}

\newcommand{\affilmark}[1]{\textsuperscript{\normalfont #1}}

\author{%
  David Gonz\'alez-Mart\'inez\affilmark{1,2,3,4}
  \quad
  Shiwei Liu\affilmark{1,3,4}
  \\
  \affilmark{1}Max Planck Institute for Intelligent Systems
  \quad
  \affilmark{2}University of T\"ubingen
  \\
  \affilmark{3}ELLIS Institute T\"ubingen
  \quad
  \affilmark{4}T\"ubingen AI Center
  \\
  \texttt{david.martinez@tuebingen.mpg.de}
  \quad
  \texttt{sliu@tue.ellis.eu}
}

\ExplSyntaxOn
\cs_if_exist:NF \expandableinput
  {
    \cs_new:Npn \expandableinput #1
      { \use:c { @@input } { \file_full_name:n {#1} } }
  }
\ExplSyntaxOff
\begin{document}

\maketitle

\begin{abstract}
Low-rank factorization is widely used to compress neural networks, but modern models are often not naturally amenable to aggressive factorization without significant accuracy loss. Existing training-time low-rank regularizers can improve compressibility, but they often require SVDs of large weight matrices, modify the model architecture (introducing additional trainable parameters), or rely on stateful cached quantities. To address these limitations, we introduce SLORR, a simple, stateless, and architecture-preserving framework for in-training low-rank regularization, instantiated with two main variants based on the Hoyer sparsity metric and the nuclear norm. SLORR directly regularizes the original weight matrices using GPU-friendly approximations for the forward and backward passes of the regularizers, for which we provide approximation guarantees. We first evaluate SLORR on ImageNet-1K across short-horizon continued training of ResNet-50, ViT-B/16, and ViT-L/16, and pretraining of ResNet-18, where SLORR induces compressibility while introducing less than 8\% training overhead. We further evaluate SLORR-Hoyer in LLM pretraining at 135M and 560M scales: SLORR-trained compressed models preserve performance substantially better than unregularized models while adding less than 1\% average training overhead.
\end{abstract}

\section{Introduction}\label{sec:intro}

Low-rank factorization is a widely used approach for reducing the computational and memory costs of neural networks: a dense weight matrix is replaced by a product of smaller factors, yielding cheaper inference when the retained rank is sufficiently small \citep{ben-noach-goldberg-2020-compressing, 10.5555/2968826.2968968, Jaderberg2014SpeedingUC, NEURIPS2021_f56de5ef}. However, compression quality depends on the spectral properties of weight matrices: discarding meaningful spectral directions can degrade model performance. Thus, while moderate post-training compression is generally possible, modern models are often not naturally amenable to aggressive factorization without significant accuracy loss.

This motivates training-time methods that encourage low-rank structure before compression. A natural formulation is to encourage sparsity in the singular values of weight matrices, concentrating spectra into fewer dominant directions. Although direct spectral regularization can be effective \citep{10.5555/3294771.3294853,DBLP:journals/corr/abs-2004-14566}, large singular value decompositions (SVDs) are needed at every training iteration, which is prohibitively expensive for modern training pipelines.

Several ways of avoiding SVDs have been proposed \citep{DBLP:journals/corr/abs-2004-09031,alkhouri2024structurepreserving,ghosh2025qr}; however, these generally introduce new limitations that hinder practicality. For example, some works replace layers with factorized parameterizations, changing the architecture \citep{alkhouri2024structurepreserving,DBLP:journals/corr/abs-2004-09031}, but this generally increases the number of trainable parameters and alters optimization dynamics. A further class of methods maintains cached spectral quantities that are only periodically updated \citep{ghosh2025qr}, reducing but not eliminating SVD cost, and introducing statefulness and additional hyperparameters.

Our proposed method, SLORR, addresses these limitations. In particular, SLORR operates directly on the original weight matrices without altering the architecture, is SVD-free and efficient, and does not maintain any cached quantities, providing a versatile framework. The key idea is simple: we approximate the spectral quantities needed for the regularizer forward and backward passes using GPU-efficient polar factor approximations. We explore two main variants: SLORR-Hoyer and SLORR-Nuc, which respectively correspond to the squared Hoyer sparsity metric \citep{10.5555/1005332.1044709} and the nuclear norm, two common low-rank penalties. For both variants, the resulting approximations admit guarantees for the regularizer values and their gradients (\cref{prop:SLORR-approx-guarantees}). A summary of directly relevant related works appears in \cref{tab:method_comparison}, with a more extensive discussion in \cref{sec:bg_related_work}.

\begin{table}[t]
\centering
\caption{\textbf{Comparison of different low-rank-inducing regularizers}. Detailed discussion in \cref{sec:bg_related_work}.}
\label{tab:method_comparison}
\setlength{\tabcolsep}{2pt}
\renewcommand{\arraystretch}{0.7}
\begin{tabular}{@{}lccccc@{}}
\toprule
Method & Changes arch. & SVD & Efficiency \& scalability & Stateful & Prior target rank \\
\midrule
SVD-based \citep{10.5555/3294771.3294853} & \good{No} & \bad{Yes} & \bad{Poor} & \good{No} & \good{No} \\
Factorize + reg. \citep{DBLP:journals/corr/abs-2004-09031} & \bad{Yes} & \good{No} & \good{Good} & \good{No} & \good{No} \\
LoRITa \citep{alkhouri2024structurepreserving} & \bad{Yes} & \good{No} & \good{Good} & \good{No} & \good{No} \\
Q3R \citep{ghosh2025qr} & \good{No} & \medium{Periodic} & \medium{Moderate} & \bad{Yes} & \bad{Yes} \\
\textbf{SLORR (ours)} & \good{No} & \good{No} & \good{Good} & \good{No} & \good{No} \\
\bottomrule
\end{tabular}
\end{table}

Empirically, we evaluate SLORR across vision and language-modeling settings. On ImageNet-1K \citep{imagenet}, we study short-horizon continued training of ResNet-50 \citep{resnet}, ViT-B/16, and ViT-L/16 \citep{dosovitskiy2021an}, as well as ResNet-18 pretraining \citep{resnet}. Across these settings, SLORR improves post-training compressibility over unregularized training while adding less than 8\% training overhead. We further evaluate SLORR-Hoyer on LLM pretraining at the 135M and 560M scales, where SLORR-trained models preserve performance substantially better after compression than unregularized models, with less than 1\% average training overhead.

Overall, our contributions are as follows. First, we introduce an SVD-free framework for low-rank-inducing regularization using GPU-friendly approximations. Second, we show how this framework can be used to regularize the spectra of the original weight matrices without architectural changes, cached spectral quantities, or a pre-specified target rank. Third, we demonstrate across vision and language-modeling experiments that SLORR improves post-training low-rank compressibility with small training overhead.

\section{Background and Related Work}\label{sec:bg_related_work}
Various factorization approaches have been extensively studied over time \citep{10.5555/2968826.2968968,Jaderberg2014SpeedingUC,ben-noach-goldberg-2020-compressing}. In this work, we focus on factorization schemes that separate a single matrix multiplication into two sequential ones:
\[
f(\mX;\mW) = \mX \mW \approx \hat{f}(\mX; \mW_0, \mW_1) = (\mX \mW_0) \mW_1,
\]
where $\mX \in \mathbb{R}^{B \times D_i}$ is the input, $\mW \in \mathbb{R}^{D_i \times D_o}$ the weight of the original layer $f$, and $\mW_0 \in \mathbb{R}^{D_i \times P}$ and $\mW_1 \in \mathbb{R}^{P \times D_o}$ are the weights of the new approximate layer $\hat{f}$. With sufficiently small $P$, we can achieve inference speedups and memory reduction. In particular, we focus on those that employ the SVD of weight matrices \citep{jaiswal2025from,ben-noach-goldberg-2020-compressing}, which is optimal in weight space according to the Eckart--Young--Mirsky theorem \citep{golub2013matrix}. In particular, the factors can be obtained from the SVD of $\mW$:
\[
\mW_0 = \mU_{:,:P} \mSigma_{:P,:P}^{1/2},
\qquad
\mW_1 = \mSigma_{:P,:P}^{1/2} \mV_{:,:P}^{\top}.
\]

In convolutional neural networks (CNNs) \citep{LeCun1998GradientbasedLA}, SVD-based factorization can be performed by reshaping parameter tensors into matrices and subsequently decomposing them (normally called channel-wise decomposition) \citep{DBLP:journals/corr/abs-2004-09031, alkhouri2024structurepreserving}; we follow this approach throughout this work. Other recent works, such as activation-aware methods \citep{qinsi2025dobisvd, wang2025svdllm,wang2025svdllmv2optimizingsingular,NEURIPS2021_f56de5ef,yuan2025asvdactivationawaresingularvalue}, introduce different ways of computing factors that can lead to better overall accuracy.

\subsection{Inducing Low-Rank Parameters}\label{sec:inducing_lowrank}
Since compressing layers by SVD truncation essentially amounts to truncating their singular values, the accuracy degradation incurred is directly related to their spectral properties: a desirable property is that the layer weight is (approximately) low-rank. Hence, an intuitive approach to obtaining more compressible layers is to regularize the model to gradually decrease the (approximate) rank of its weight matrices during training. One can roughly think of the rank minimization objective as a sparsity objective on the singular values of the parameters, i.e.,
\[
\mathcal{L}= \mathcal{L}_{\mathrm{task}} + \lambda \sum_i \operatorname{rank}(\mW^i) = \mathcal{L}_{\mathrm{task}} + \lambda \sum_i \|\bm{\sigma}(\mW^i)\|_0,
\]
where $\bm{\sigma}$ denotes the singular values of $\mW^i$. This objective is considered to be intractable \citep{1384521}. Prior work has addressed this by using proxy objectives, such as minimizing the nuclear norm \citep{10.5555/3294771.3294853, DBLP:journals/corr/abs-2004-14566}; however, this generally requires computing the SVDs of each weight at each training step, which is computationally prohibitive for larger models. Several solutions have been proposed.

Some approaches rely on modifying the architecture. For example, \citet{DBLP:journals/corr/abs-2004-09031} obtain the SVD $\mU, \mV, \bm{\sigma}$ of each layer and substitute each layer with a factorized form (retaining all singular values). The authors then incentivize $\bm{\sigma}$ to be sparse through the Hoyer sparsity metric \citep{10.5555/1005332.1044709} (they also explore the $\ell_1$ norm, akin to nuclear-norm regularization). An additional regularizer is required to keep $\mU$ and $\mV$ orthogonal and maintain general SVD semantics, making it less practical. A recent approach, named LoRITa \citep{alkhouri2024structurepreserving}, reparametrizes each layer weight $\mW$ with an ordered product $\prod_i^N\mW_i = \mW$ and penalizes the Frobenius norm of each $\mW_i$, which can be shown to work as a proxy for the nuclear norm minimization objective of the overall $\prod_i^N\mW_i$. A similar approach was proposed in \citet{kliegl2018tracenormregularizationfaster} for depth-2 decompositions. These approaches generally increase the number of trainable parameters and can alter optimization dynamics.

Another line of work reuses computed factors instead of recomputing them at every iteration. For example, the recent Q3R \citep{ghosh2025qr} uses a surrogate objective that can be approximated with periodic SVDs\footnote{Their main experiments are run with a refresh period of 5 iterations, which is expensive. In our main Q3R experiments, we also use 5; however, we found that using a larger period (which is less expensive) can also work well (\cref{sec:app_q3r_refresh_period_ablation}). The main concern is scalability. As one uses larger models, maintaining a low cost with Q3R requires enlarging the refresh period. It is expected that, with a sufficiently large refresh period, its effectiveness will be reduced.}. However, it is stateful (as it needs to store the temporary factors until they are recomputed). It also requires specifying a target rank in advance, making hyperparameter tuning challenging.

Other approaches for approximate computations exist: \citet{ijcai2025p0625} propose randomized differentiable surrogates for generalized low-rank regularization, applying them to tasks including matrix completion, video foreground/background separation, and denoising. While the method is SVD-free in theory, it is still impractical, requiring backpropagation through long chains of matrix multiplications. Moreover, its practical implementation\footnote{\url{https://github.com/naiqili/EDLRR}} still requires a (smaller) SVD and matrix inverse. This makes it infeasible for weight regularization, even at moderate scales. \citet{qin2025lowrankprehabpreparingneural} propose a method to make models amenable to activation-aware methods (in particular, SVD-LLM \citep{wang2025svdllm}), but it is focused on very short-horizon post-training, while we focus on longer training periods (from continued training to pretraining).

Relatedly, low-rank induction can also happen (either explicitly or implicitly) at the optimizer level. For example, \citet{zimmer2024compressionawaretrainingneuralnetworks} use a Frank--Wolfe-based optimizer to promote low rank in weight matrices using partial SVDs. \citet{dolatabadi2026numuonnuclearnormconstrainedmuoncompressible} find that, surprisingly, Muon-trained LLMs are more compressible and that truncated Muon updates can further improve compressibility. Here, we restrict our focus to explicit regularization techniques, typically formulated either as additional loss terms or as decoupled penalties applied outside the primary gradient update, analogous to decoupled weight decay \citep{loshchilov2018decoupled}. In \cref{tab:method_comparison}, we compare relevant direct low-rank regularization methods. Our proposed approach is simpler: it operates on the original model, is SVD-free and efficient in practice, and is stateless.

Lastly, a related area, but fundamentally different in nature, is that of low-rank training, where the rank is generally pre-defined before training, and a low-rank architecture is directly trained. Different variants, such as full-rank warmup or combinations with sparse training, and aspects like initialization, have been investigated over time \citep{lialin2023relorahighranktraininglowrank,khodak2021initialization,kamalakara2022exploringlowranktraining,wei2024building,han2024sltrain,li2025lostlowranksparsepretraining}.

\section{Our Framework}\label{sec:our_framework}

For a weight matrix $\mW \in \mathbb{R}^{I \times O}$, we use $\mU \in \mathbb{R}^{I \times R}, \mSigma \in \mathbb{R}^{R \times R}, \mV \in \mathbb{R}^{O \times R}$ to denote its thin SVD factors ($\mW = \mU \mSigma \mV^{\top}$), where $R = \operatorname{rank}(\mW) \leq \min (O,I)$. Additionally, we use $\bm{\sigma}$ to denote the vector of singular values ($\bm{\sigma} = \operatorname{diag}(\mSigma)$). Since the SVD is not unique, when mentioning it or any factor, we implicitly refer to an arbitrary valid choice.

We view the problem from the perspective of inducing sparsity in singular values. We adopt (1) the $\ell_1$ norm of singular values, i.e., the nuclear norm, and (2) the squared $\ell_1/\ell_2$-ratio (the nonsquared version is also known as the unnormalized Hoyer metric \citep{10.5555/1005332.1044709}) of singular values, which is effective as a sparsity-inducing regularizer \citep{Yang2020DeepHoyer} (the nonsquared version has also been shown to work well in factorization-based low-rank regularization \citep{DBLP:journals/corr/abs-2004-09031}). We have:
\[
\mathcal{L}_{\mathrm{hoyer}}
= \sum_i \|\bm{\sigma}^i\|^2_1 / \|\bm{\sigma}^i\|^2_2
= \sum_i \|\mW^i\|^2_* / \|\mW^i\|^2_F, \quad \mathcal{L}_{\mathrm{nuc}}
= \sum_i \|\mW^i\|_*,
\]
where $i$ denotes the layer index. From now on, we will omit layer indices for brevity.

We optimize these regularizers together with the task loss. Strictly speaking, the regularizers are not differentiable everywhere. We use the natural backpropagation rule, which corresponds to the minimum-Frobenius-norm element of the Clarke generalized gradient (see \cref{sec:app_backprop_computations} for the derivations). We use gradient notation for simplicity. We have that:
\begin{equation}\label{eq:SLORR_grads}
\nabla{\mathcal{L}_{\mathrm{hoyer}}}({\mW}) = 2\frac{\|\mW\|_*}{\|\mW\|_F}\left(\frac{1}{\|\mW\|_F}\mU\mV^{\top} - \frac{\|\mW\|_*}{\|\mW\|_F^3}\mW\right), \quad \nabla{\mathcal{L}_{\mathrm{nuc}}}({\mW}) = \mU \mV^\top.
\end{equation}

\begin{wrapfigure}{r}{0.42\textwidth}
\vspace{-1.5\baselineskip}
\begin{minipage}{\linewidth}
\begin{lstlisting}[
  style=mypython,
  basicstyle=\ttfamily\tiny,
  keywordstyle={\color[HTML]{003B73}\bfseries},
  commentstyle={\color[HTML]{5A5A5A}\itshape},
  stringstyle={\color[HTML]{7A3E00}},
  identifierstyle={\color[HTML]{111111}},
  emph={LowRankRegularizer,Function},
  emphstyle={\color[HTML]{267F99}},
  emph={[2]forward,backward,norm,polar_express,sum,save_for_backward},
  emphstyle={[2]\color[HTML]{795E26}},
  emph={[3]torch},
  emphstyle={[3]\color[HTML]{005A5A}},
  caption={Example PyTorch implementation of SLORR-Hoyer.},
  label={lst:lowrank_reg}
]
class LowRankRegularizer(torch.autograd.Function):
    @staticmethod
    def forward(ctx, W, eps, steps=6):
        frob = torch.linalg.norm(W, ord="fro") + eps
        UV = polar_express(W, steps=steps)
        nuc = torch.sum(W * UV)
        reg = nuc / frob
        ctx.save_for_backward(W, UV, frob, nuc, reg)
        return reg ** 2

    @staticmethod
    def backward(ctx, grad_output):
        W, UV, frob, nuc, reg = ctx.saved_tensors
        G = (UV / frob) - (nuc / (frob**3)) * W
        return grad_output * 2 * reg * G, None, None

\end{lstlisting}
\end{minipage}
\vspace{-1.5\baselineskip}
\end{wrapfigure}

Both gradients have terms that are generally computed through SVDs, which is prohibitively expensive to do at every iteration. Fortunately, one can (through iterative methods) approximate every term needed in \cref{eq:SLORR_grads}. The resulting approximations have provable convergence guarantees (\cref{prop:SLORR-approx-guarantees}).

\paragraph{Computing the polar factor. } Notice that we do not need to compute the factors of $\mU \mV^{\top}$ separately; we only need their product\footnote{Note that we are using the left and right singular vectors corresponding to the nonzero singular values of the matrix. This quantity is unique for a given matrix \citep{doi:10.1137/S0895479801394623}.}, which is generally known as the generalized polar factor (we use ``polar factor'' for short) of $\mW$. It can be computed efficiently on GPUs using iterative methods (e.g., \citep{HIGHAM19943}; a similar approach is used in the Muon optimizer \citep{jordan2024muon} to obtain polar factors of gradients). In particular, we use the recent Polar Express \citep{amsel2026the}, which approximates the polar factor iteratively, to efficiently compute an approximation of $\mU \mV^{\top}$.

Polar Express depends on several hyperparameters, including the number of iterations $T$, most of which we set to their recommended values. It first normalizes the matrix so that its largest singular value is at most 1. This is achieved by dividing by the Frobenius norm, i.e., it uses $\mW / (\|\mW\|_F + \varepsilon)$\footnote{In \cref{lst:lowrank_reg}, normalization is performed inside the Polar Express function.}. Moreover, it also needs a lower bound estimate of the smallest nonzero singular value of the normalized matrix, $\ell$, which we set to $10^{-3}$ following their recommendations. We provide a more detailed discussion of our usage of the Polar Express in \cref{sec:app_polar_express_details}.

\paragraph{Computing the nuclear and Frobenius norms. } After we have (an approximation of) $\mU \mV^{\top}$, we can compute an estimate of the nuclear norm with elementwise multiplications and a summation. In particular, it holds that (for any exact $\mU \mV^{\top}$)
\begin{align*}
\operatorname{sum}(\mW \odot \mU \mV^\top) = \operatorname{tr}(\mW^{\top}\mU\mV^{\top}) = \operatorname{tr}(\mV\mSigma\mU^{\top}\mU\mV^{\top}) = \operatorname{tr}(\mSigma) = \|\mW\|_*,
\end{align*}
where $\odot$ denotes elementwise multiplication, and the intermediate steps follow from the cyclic property of the trace operator. The Frobenius norm is simply $\|\mW\|_F = \sqrt{\sum_{i,j} \mW^2_{i,j}}$.

\paragraph{Combining everything.}
We now have all the elements needed to compute approximate values and approximate backpropagation rules for both variants, which can be implemented, for example, as a custom PyTorch function. Forward and backward computations of SLORR-Hoyer are summarized in \cref{lst:lowrank_reg}\footnote{As shown in \cref{lst:lowrank_reg}, in practice, we stabilize the forward and backward expressions separately (another option is to use a stabilized forward expression and derive the exact backward expression from it) by substituting $\|\mW\|_F \mapsto \|\mW\|_F + \varepsilon$. This is done to avoid a possible division by zero.}; SLORR-Nuc admits a similar implementation. Note that we do not backpropagate through the Polar Express iterations. Instead, the approximate polar factor is used in place of the $\mU\mV^\top$ term that appears in the regularizer values and in the gradient rules in \cref{eq:SLORR_grads}. During training, we compute the task loss and regularization loss, weighted by the regularization strength $\lambda$, and use gradient-based optimization as usual\footnote{The autograd-based implementation is generally easy to integrate into existing codebases. However, in practice, for PyTorch-based implementations, one can save memory by following the approach in \cref{sec:app_memory_efficient_approach}. This approach is used for our vision experiments.}. In the case of Adam \citep{kingma2017adammethodstochasticoptimization}, one can also use a decoupled low-rank regularization term, similar to Q3R \citep{ghosh2025qr}; see \cref{alg:decoupled-SLORR}.

\paragraph{Computational cost. } In terms of matrix multiplication FLOPs (the main cost), each iteration of Polar Express (with our exact settings) with a matrix of shape $(M, N)$ uses $4P^{2}Q+2P^{3}$ FLOPs, where $P=\min\{M,N\}$ and $Q=\max\{M,N\}$, and we generally use 6 iterations. For convolutional layers, we reshape kernels into matrices (details in \cref{sec:app_interpreting_conv_layers}); in which case we get $M = C_o$ and $N = C_{i} K_w K_h$, denoting the output and input channels, as well as the width and height of the convolutional kernel, respectively. We discuss practical SLORR overhead measures in \cref{sec:overhead_exps_main} and \cref{sec:llm_experiments_main}, where we find that it generally remains below or around 8\% in our ImageNet-scale settings and 1\% in LLM pretraining at the 135M and 560M scales, whereas SVD remains prohibitively expensive.

\paragraph{Approximation guarantees.} We remark that the computed forward and backward passes of both variants are approximate in practice. Polar Express has worst-case approximation guarantees, which can be used to provide guarantees on the values and gradients of both SLORR variants (\cref{prop:SLORR-approx-guarantees}). We provide further discussion of this result and its proof in \cref{sec:app_approx_guarantees}.

\begin{restatable}[Approximation guarantees]{proposition}{SLORRApproxGuarantees}\label{prop:SLORR-approx-guarantees}
Let $\mW \neq \bm 0$ have thin SVD $\mW = \mU \mSigma \mV^\top$, and assume the nonzero singular values of a normalized version of $\mW$ lie in $[\ell,1]$. Let $\widehat{\mP}$ denote the Polar Express approximation of the polar factor $\mU \mV^\top$ after $T$ steps, and define $\delta = \left|1-\ell^2\right|^{(q+1)^T}$, where $d=2q+1$ is the Polar Express polynomial degree. Additionally, let $\hat n := \operatorname{tr}(\mW^\top \widehat{\mP})$ denote the approximate nuclear norm estimate, and let $n := \|\mW\|_*$ and $f := \|\mW\|_F$. Moreover, let $\mR$ denote the exact SLORR-Hoyer gradient and $\widehat{\mR}$ its approximation. Then
\[
\begin{aligned}
\frac{|\hat n - n|}{n} &\le \delta,\quad
\|\widehat{\mP} - \mU \mV^\top\|_2 \le \delta
&&\text{(SLORR-Nuc)},\\
\frac{|\hat n^2/f^2 - n^2/f^2|}{n^2/f^2} &\le \delta(2+\delta),\quad
\|\mR - \widehat{\mR}\|_2 \le 2\delta(2+\delta)
\left(
\frac{n}{f^2}
+
\frac{n^2}{f^4}\|\mW\|_2
\right)
&&\text{(SLORR-Hoyer)}.
\end{aligned}
\]
In particular, errors vanish as $T \to \infty$.
\end{restatable}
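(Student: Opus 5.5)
Because Polar Express applies the same odd polynomial to every singular value, the approximation is diagonal in the SVD basis of $\mW$. The plan is to reduce everything to scalar bounds on singular values and then push them through the formulas for $n$, $\hat n$ and $\mR$.

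\textbf{Scalar reduction.} Polar Express applies a composition of odd polynomials $p = p_T\circ\cdots\circ p_1$ to the normalized matrix $\mX = \mW/c$. Each $p_t$ acts spectrally, so
\[
\widehat{\mP} = \mU\, p(\mSigma/c)\,\mV^\top,
\]
possibly plus components on the null space. Oddness and $p(0)=0$ rule out such components. From the Polar Express worst-case guarantee (Amsel et al.), with degree $d=2q+1$, I would invoke
\[
\max_{x\in[\ell,1]} |1-p(x)| \le |1-\ell^2|^{(q+1)^T} = \delta .
\]
This is the one step taken from outside the paper. It is the convergence rate of the optimal composition, which is at least as good as the Newton--Schulz-type rate of the same degree. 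Then
\[
\widehat{\mP} - \mU\mV^\top = \mU\,\mathrm{diag}\bigl(p(\sigma_i/c)-1\bigr)\mV^\top,
\]
so $\|\widehat{\mP}-\mU\mV^\top\|_2 \le \delta$ directly.

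\textbf{Nuclear norm and Hoyer value.} For the nuclear norm,
\[
\hat n = \operatorname{tr}(\mW^\top\widehat{\mP}) = \sum_i \sigma_i\, p(\sigma_i/c),
\]
so $|\hat n - n| \le \sum_i \sigma_i \,|p(\sigma_i/c)-1| \le \delta n$. For the Hoyer value, write $\hat n = n(1+\epsilon)$ with $|\epsilon|\le\delta$. Then
\[
\hat n^2/n^2 - 1 = 2\epsilon+\epsilon^2,
\]
whose absolute value is at most $\delta(2+\delta)$. Since $f$ is computed exactly, this gives the relative bound on $\hat n^2/f^2$.

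\textbf{Hoyer gradient.} I would write $\mR = \frac{2n}{f^2}\mU\mV^\top - \frac{2n^2}{f^4}\mW$, and $\widehat{\mR}$ as the same expression with $\hat n,\widehat{\mP}$ in place of $n,\mU\mV^\top$. Then
\[
\mR-\widehat{\mR} = \frac{2}{f^2}\bigl(n\mU\mV^\top - \hat n\widehat{\mP}\bigr) - \frac{2}{f^4}(n^2-\hat n^2)\mW .
\]

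The first term is split as
\[
n\mU\mV^\top - \hat n\widehat{\mP} = n(\mU\mV^\top - \widehat{\mP}) + (n-\hat n)\widehat{\mP}.
\]
Using $\|\widehat{\mP}\|_2\le 1+\delta$, its norm is at most $n\delta + n\delta(1+\delta) = n\delta(2+\delta)$. The second term has norm at most $\frac{2}{f^4}n^2\delta(2+\delta)\|\mW\|_2$. Summing gives
\[
\|\mR-\widehat{\mR}\|_2 \le 2\delta(2+\delta)\left(\frac{n}{f^2}+\frac{n^2}{f^4}\|\mW\|_2\right),
\]
which is exactly the stated bound.

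\textbf{Limit and main obstacle.} Since $\ell\in(0,1]$, we have $|1-\ell^2|<1$ for $\ell>0$, so $\delta\to0$ as $T\to\infty$. The main obstacle is justifying the scalar bound $\max|1-p|\le\delta$ for the specific Polar Express polynomials. I would quote it from the Polar Express paper, or fall back to a direct induction: each step maps $1-x^2$ to at most $(1-x^2)^{q+1}$, so after $T$ steps the error is at most $(1-\ell^2)^{(q+1)^T}$. A secondary point is making sure the $\varepsilon$-stabilized normalization keeps the spectrum of $\mX$ in $[\ell,1]$, which is what the hypothesis on the normalized matrix assumes.
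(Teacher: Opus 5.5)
Your proof is correct and follows the paper's argument. Like the paper, you invoke Theorem~3.3 of Amsel et al.\ for $\|\widehat{\mP}-\mU\mV^\top\|_2\le\delta$ and propagate it through $\hat n$, $\hat n^2/f^2$, and the gradient by the triangle inequality, reaching exactly the same constants. The differences are minor. The paper bounds $|\hat n-n|$ by trace duality, $|\operatorname{tr}(\mW^\top(\widehat{\mP}-\mU\mV^\top))|\le\|\mW\|_*\|\widehat{\mP}-\mU\mV^\top\|_2$, which needs only the spectral-norm error and not your shared-singular-vector formula $\hat n=\sum_i\sigma_i p(\sigma_i/c)$. In the gradient it splits the first term as $(n-\hat n)\mU\mV^\top+\hat n(\mU\mV^\top-\widehat{\mP})$ with $|\hat n|\le(1+\delta)n$, the mirror image of your split using $\|\widehat{\mP}\|_2\le 1+\delta$.
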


\paragraph{Practical remarks.}
The singular value range in \cref{prop:SLORR-approx-guarantees} is a theoretical assumption on the normalized matrix, inherited from the Polar Express analysis in \citet{amsel2026the}. In practice, Polar Express treats $\ell$ as an approximate lower bound parameter; the authors report that inaccurate guesses are typically not severe. Therefore, the displayed worst case bound should be interpreted as applying to normalized nonzero singular values lying in $[\ell,1]$, while our choice $\ell=10^{-3}$ follows their recommended value. We also emphasize that this is a worst case bound in an idealized setting. In practice, a smaller number of iterations works well even when the bound is not tight, and floating point error and implementation details may introduce additional approximation error. We follow the practical implementation choices of \citet{amsel2026the}, including their recommended parameter settings and numerical stabilizations. Further discussion is provided in \cref{sec:app_polar_express_details} and \cref{sec:app_approx_guarantees}.

\subsection{Understanding SLORR-Hoyer}\label{sec:understanding_slorr_hoyer_main}

Our analysis is performed in an exact regularizer-only gradient descent setting.  In this section, $\eta>0$ denotes the step size of the regularizer-only update, with the training-time regularization strength absorbed into it. Additional comments and proofs appear in \cref{sec:app_slorr_hoyer_theoretical_analysis}. We let $\mW = \mU\operatorname{diag}{(\bm \sigma)}\mV^{\top}$ be a weight matrix and its thin SVD. Additionally, let $\mW^+$ be the weight matrix after a regularizer-only gradient step from $\mW$, and define $\bm{\sigma}^+$, $\mU^+$, and $\mV^+$ as its thin SVD. We assume ordered singular values in decreasing order ($\bm{\sigma}_1\ge\cdots\ge\bm{\sigma}_r>0$) and $\mW \neq \mathbf{0}$. We first remark that SLORR-Hoyer is scale-invariant (in the same way that the original version for vector sparsity is \citep{10.5555/1005332.1044709,Yang2020DeepHoyer}).\footnote{This is easy to see: $\| \alpha \mW\|_*^2 / \| \alpha \mW\|_F^2
= \alpha^2 \| \mW\|_*^2 / \alpha^2 \|\mW\|_F^2
= \| \mW\|_*^2 / \|\mW\|_F^2$.} Intuitively, this means it aims to redistribute the spectral energy rather than grow or shrink it. Note, however, that in a practical discretized optimization setting, the overall norm of the weight tends to increase (note that the gradient is orthogonal to the weight matrix):
\[
\begin{aligned}
\|\mW - \eta\nabla \mathcal{L}_{\mathrm{hoyer}}(\mW)\|_F^2
&=
\|\mW\|_F^2
-2\eta \left\langle \mW, \nabla \mathcal{L}_{\mathrm{hoyer}}(\mW) \right\rangle_F
+\eta^2 \|\nabla \mathcal{L}_{\mathrm{hoyer}}(\mW)\|_F^2
\geq
\|\mW\|_F^2 .
\end{aligned}
\]

In preliminary experiments, we tried a norm-correction step that did not seem to help in practice.

\begin{restatable}[SLORR-Hoyer shrinkage and growth of singular values]{proposition}{noGradBehaviorProp}\label{prop:shrinkandgrouth}
For a sufficiently small regularizer-only gradient step, it holds that
\[
\bm{\sigma}^+_i \diamond\bm{\sigma}_i \iff \bm{\sigma}_i \diamond\frac{\|\mW\|_F^2}{\|\mW\|_*}, \quad \text{for $\diamond \in \{<, >, =\}$.}
\]
\end{restatable}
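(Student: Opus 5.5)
The plan is to compute the exact regularizer-only update explicitly and observe that it preserves the singular vectors of $\mW$, so that the singular values update coordinate-wise. Write $n=\|\mW\|_*$ and $f=\|\mW\|_F$. By \cref{eq:SLORR_grads},
\[
\mW^+ = \mW - \eta\nabla\mathcal{L}_{\mathrm{hoyer}}(\mW)
= \mU\,\operatorname{diag}\!\Big(\bm\sigma_i - \tfrac{2\eta n}{f^2}\big(1 - \tfrac{n}{f^2}\bm\sigma_i\big)\Big)\mV^\top ,
\]
because $\mU\mV^\top=\mU\mI\mV^\top$ and $\mW=\mU\operatorname{diag}(\bm\sigma)\mV^\top$ share the same factors. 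Define the candidate values
\[
\tilde{\bm\sigma}_i=\bm\sigma_i\Big(1+\tfrac{2\eta n^2}{f^4}\Big)-\tfrac{2\eta n}{f^2}.
\]
Then $\mW^+=\mU\operatorname{diag}(\tilde{\bm\sigma})\mV^\top$.

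The first step is to check that this expression is a valid thin SVD. We need $\tilde{\bm\sigma}_i>0$ for every $i\le r$. Since $\bm\sigma_r>0$ and the subtracted term $\tfrac{2\eta n}{f^2}$ is $O(\eta)$, this holds whenever
\[
\eta<\bm\sigma_r f^2/(2n).
\]
This is the meaning we give to ``sufficiently small step.'' Under this condition the vector of singular values of $\mW^+$ is exactly $\tilde{\bm\sigma}$, and no new directions appear, so the rank is unchanged. The map $\sigma\mapsto\tilde\sigma$ is increasing and affine, so the ordering is preserved as well. This lets us identify $\bm\sigma^+_i=\tilde{\bm\sigma}_i$ index-by-index, which is the point where ordering matters.

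The second step is immediate from the formula:
\[
\bm\sigma^+_i-\bm\sigma_i=\tfrac{2\eta n}{f^2}\big(\tfrac{n}{f^2}\bm\sigma_i-1\big).
\]
The prefactor $\tfrac{2\eta n}{f^2}$ is strictly positive because $\mW\neq\bm 0$. Hence the sign of $\bm\sigma^+_i-\bm\sigma_i$ equals the sign of $\bm\sigma_i-f^2/n$, which gives the three-way equivalence for $\diamond\in\{<,>,=\}$ simultaneously.

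The main subtlety is the identification of singular values, i.e., ensuring that no $\tilde{\bm\sigma}_i$ becomes negative. A negative value would flip a sign and reorder the spectrum, breaking the index-wise comparison. The explicit step-size bound handles this. One should also remark that the polar factor used is the exact one (the exact setting), so that $\mU\mV^\top$ uses the same $\mU,\mV$ as the SVD of $\mW$. Because the generalized polar factor is unique, this holds for any valid SVD choice.
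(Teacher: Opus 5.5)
Your proposal is correct and follows essentially the same route as the paper. Both write the update in the SVD basis as $\mU\operatorname{diag}(\cdot)\mV^\top$ with affine, order-preserving diagonal entries, require the step to be small enough that these entries stay positive, and read off the sign of $\bm{\sigma}_i^+-\bm{\sigma}_i$. The only difference is that you make ``sufficiently small'' explicit as $\eta<\bm{\sigma}_r\|\mW\|_F^2/(2\|\mW\|_*)$, where the paper leaves it implicit.
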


\Cref{prop:shrinkandgrouth} suggests that SLORR-Hoyer enlarges ``large'' singular values and shrinks ``small'' ones, where ``large'' and ``small'' are determined by the dynamic quantity $\|\mW\|_F^2/\|\mW\|_*$, which varies as the weights evolve. Intuitively, this concentrates the spectrum onto the already large singular values. In turn, under certain assumptions, it induces low-rank compressibility (\cref{prop:energy_redistr}).

\begin{restatable}[Sufficiently small steps of SLORR-Hoyer induce compressibility]{proposition}{energyRedistributionProp}\label{prop:energy_redistr}
Let $\tau=\|\mW\|_F^2/\|\mW\|_*$. For a sufficiently small regularizer-only gradient step, if $\bm{\sigma}_k>\tau$, then
\[
\frac{\|\mU^+_{:,:k}\operatorname{diag}(\bm{\sigma}_{:k}^+)\mV_{:,:k}^{+\top}-\mW^+\|_F^2}{\|\mW^+\|_F^2}
<
\frac{\|\mU_{:,:k}\operatorname{diag}(\bm{\sigma}_{:k})\mV_{:,:k}^\top-\mW\|_F^2}{\|\mW\|_F^2}.
\]
\end{restatable}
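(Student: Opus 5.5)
The plan is to exploit the fact that a regularizer-only step of SLORR-Hoyer keeps the singular vectors fixed. This reduces the claim to a one-dimensional statement about how the singular values move, which is then settled by a first-order expansion in $\eta$.

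\textbf{Step 1: the update in the SVD basis.} Write $n=\|\mW\|_*$, $f=\|\mW\|_F$ and $\tau=f^2/n$. By \cref{eq:SLORR_grads}, the gradient is
\[
\nabla\mathcal{L}_{\mathrm{hoyer}}(\mW)=\mU\operatorname{diag}(\bm g)\mV^\top,
\qquad
\bm g_i=\frac{2n}{f^2}\left(1-\frac{\bm\sigma_i}{\tau}\right).
\]
Hence
\[
\mW^+=\mU\operatorname{diag}(\bm\sigma_i-\eta\bm g_i)\mV^\top,
\qquad
\bm\sigma_i-\eta\bm g_i=(1+\beta)\bm\sigma_i-\alpha,
\]
with $\alpha=2n\eta/f^2$ and $\beta=\alpha/\tau$. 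This map is affine and increasing in $\bm\sigma_i$. For $\eta$ small enough every entry stays positive, since $\bm\sigma_r>0$. Therefore it is a valid thin SVD of $\mW^+$ with the same ordering, and $\mU^+=\mU$, $\mV^+=\mV$ (up to the usual non-uniqueness, which does not affect truncation errors). This is essentially the computation behind \cref{prop:shrinkandgrouth}. It follows that both truncation ratios equal $1-H/F$, where $H=\sum_{i\le k}\bm\sigma_i^2$ and $F=\sum_i\bm\sigma_i^2$, with the analogous quantities $H^+,F^+$ after the step. The claim is thus equivalent to $H^+/F^+>H/F$.

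\textbf{Step 2: the case $k=r$ cannot occur.} $\tau=\sum_i\bm\sigma_i^2/\sum_i\bm\sigma_i$ is a $\bm\sigma$-weighted average of the singular values, so $\tau\ge\bm\sigma_r$. The hypothesis $\bm\sigma_k>\tau$ therefore forces $k<r$. The tail is then nonempty, $H<F$, and neither side of the inequality is trivially zero.

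\textbf{Step 3: first-order expansion.} Since $\bm\sigma_i^+$ is affine in $\eta$, the quantity $H^+/F^+$ is a smooth function of $\eta$ near $0$. Its derivative at $\eta=0$ has the sign of $H'F-HF'$, where
\[
\frac{d}{d\eta}\bm\sigma_i^{+2}\Big|_{\eta=0}=-2\bm\sigma_i\bm g_i\propto\bm\sigma_i\left(\frac{\bm\sigma_i}{\tau}-1\right)
\]
with a positive constant. The key observation is that the full sum vanishes:
\[
\sum_i\bm\sigma_i\left(\frac{\bm\sigma_i}{\tau}-1\right)=\frac{F}{\tau}-n=0.
\]
This is the scale invariance noted before \cref{prop:shrinkandgrouth}, equivalently the orthogonality of the gradient to $\mW$, so $F'=0$. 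The head sum satisfies
\[
\sum_{i\le k}\bm\sigma_i\left(\frac{\bm\sigma_i}{\tau}-1\right)>0,
\]
because every term is strictly positive when $\bm\sigma_i\ge\bm\sigma_k>\tau$. So $H'>0$, and the derivative of $H^+/F^+$ at $\eta=0$ is strictly positive. A strictly positive derivative gives $H^+/F^+>H/F$ for all sufficiently small $\eta>0$, which is exactly the statement.

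\textbf{Main obstacle.} The only delicate point is Step 1: justifying that the perturbed diagonal is still a legitimate ordered SVD, so that the rank-$k$ truncation of $\mW^+$ is spanned by the first $k$ columns of $\mU$ and $\mV$. Repeated singular values are harmless here, because the map preserves ties and truncation errors depend only on the multiset of singular values. After that, the identity $F'=0$ makes the sign computation immediate.
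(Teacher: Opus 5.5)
Your proof is correct and follows essentially the same route as the paper. Both arguments rewrite the step as $\bm\sigma_i^+=\bm\sigma_i+\varepsilon(\bm\sigma_i-\tau)$ and rely on two facts: the first-order change of the total energy $\sum_i\bm\sigma_i^2$ vanishes (orthogonality of the gradient to $\mW$), while the first-order change of the head energy $\sum_{i\le k}\bm\sigma_i^2$ is strictly positive. Your derivative-of-the-ratio argument is a compact form of the paper's comparison of $A^+-A$ with $(A^++B^+)-(A+B)$. Your explicit remarks that $\bm\sigma_k>\tau$ forces $k<r$, and that non-uniqueness of the SVD does not affect the truncation error, are details the paper leaves implicit.
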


\section{Experiments}\label{sec:experiments_main}

Our goal is to evaluate the effectiveness of SLORR in inducing compressibility during training. Our experiments cover two main domains: image classification on ImageNet-1K \citep{imagenet}, including both continued training from a checkpoint and pretraining, and LLM pretraining. Our general pipeline consists of first training regularized and unregularized models, and then compressing them and assessing their retained performance. Unless stated otherwise, regularization and compression are applied to all linear and convolutional layers, except for the first and last layers; for transformers, this corresponds to the layers inside the transformer blocks. In our experiments, SLORR-Hoyer-D denotes the decoupled variant, following \cref{sec:app_decoupled_impl_details}. 

Our code is available at \url{\githuburl}.

\subsection{Image Classification Experiments}\label{sec:img_classification_exps}
We first evaluate SLORR on ImageNet-1K classification. We cover two settings: pretraining and continued training, across four architectures. We compare against recent baselines, which have largely focused on the image classification domain. In total, we perform more than 150 runs across settings.

\paragraph{Baselines and comparisons. } We compare SLORR against Q3R \citep{ghosh2025qr} and LoRITa \citep{alkhouri2024structurepreserving}. These baselines are used because: (1) they are the most recent relevant methods, to the best of our knowledge, and (2) they allow us to compare different objectives, whereas factorization-based approaches \citep{DBLP:journals/corr/abs-2004-09031} generally have similar objectives to ours but optimize them after altering the architecture.

For Q3R, following their experiments, the regularization term is added to the gradients in a decoupled form. We sweep over both the regularization strength and their so-called ``target rank'', fixing its refresh interval to 5 (following their main experiments). We note, however, that in an ablation, we found that higher (and cheaper) refresh intervals can also work well in the continued training setting (see \cref{tab:efficiency_measures_per_method} for overhead measurements and \cref{sec:app_q3r_refresh_period_ablation} for the Q3R ablation). While the authors of Q3R recommend smaller regularization strengths for their settings ($\lambda \in [0.001, 0.01]$), we found in our initial experiments that higher values provided substantially better performance. For LoRITa, we follow the approach described in their paper; however, we use a more efficient and practical implementation (their original implementation significantly increases runtime and memory costs). We also adapt it to allow initialization from a trained checkpoint, which is necessary for our continued training setting. We fix $N = 2$ and vary the regularization strength. For SLORR, we use 6 Polar Express steps (following their recommended setup, which we found to work well; an ablation is presented in \cref{sec:app_polar_express_details}) and sweep over the regularization strength $\lambda$. We also include an unregularized baseline following the same training recipe. Details are given in \cref{sec:app_vision_experiment_details}. While LoRITa (fixing $N = 2$) and SLORR have one hyperparameter, Q3R has two that play a strong role together, which makes tuning significantly more difficult. Details regarding selection, which mainly followed a best-effort manual approach, as well as the final configurations, can be found in \cref{sec:app_vision_experiment_details} and \cref{sec:app_additional_vision_results}, respectively.

\begin{figure*}[t]
  \centering
  \includegraphics[width=\textwidth]{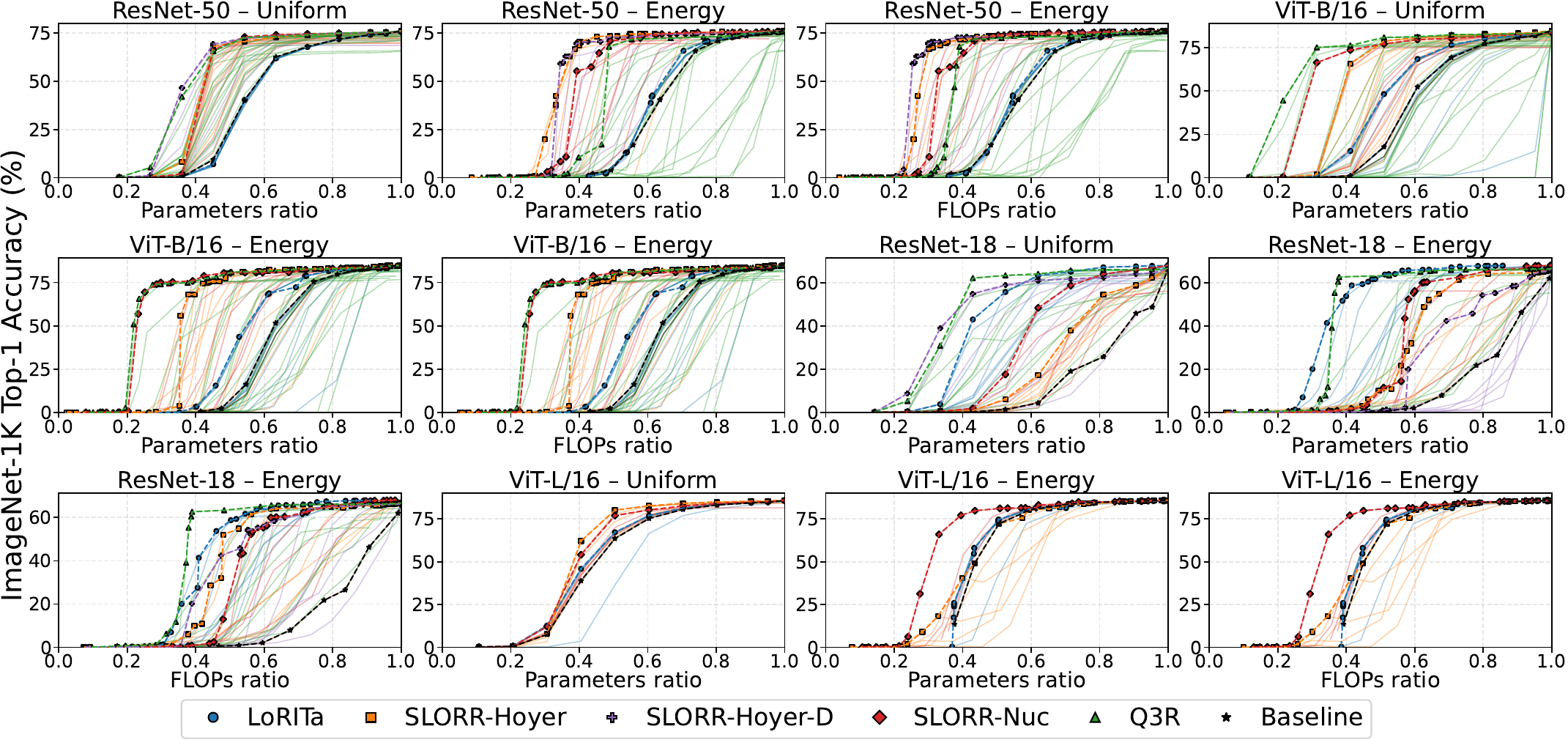}
  \caption{\textbf{Accuracy--compression curves of regularized ResNet-50, ViT-B/16, ViT-L/16, and ResNet-18 models on ImageNet-1K.} The y-axis reports ImageNet-1K validation top-1 accuracy of the compressed model, and the x-axis reports either the retained parameter ratio or the retained FLOPs ratio, as indicated in each panel. Factorization is performed using the energy and uniform rank-selection criteria (see \cref{sec:experiments_main} for details). Faint curves indicate different hyperparameter selections, while highlighted curves indicate Pareto frontiers. For uniform compression, the parameter and FLOPs ratios are equal up to a small shift, so separate FLOPs ratio panels are omitted.}
  \label{fig:main_compression_results}
\end{figure*}

\paragraph{Compression.} After training, we perform compression using two different approaches. The first is uniform compression, in which every layer retains (approximately) a uniform ratio of parameters and FLOPs across layers. Separately, we perform energy-based truncation, in which a retention threshold $\rho$ is used to compress each layer to the minimum rank such that the weight relative squared Frobenius approximation error is at most $1 - \rho$. Details are given in \cref{sec:app_vision_experiment_details}.

\paragraph{ImageNet-1K continued training.}
We perform continued training on ViT-B/16 \citep{dosovitskiy2021an} and ResNet-50 \citep{resnet} using pretrained checkpoints on the ImageNet-1K dataset \citep{imagenet}. We train for 30 epochs and compare against Q3R and LoRITa. We use the AdamW optimizer \citep{loshchilov2018decoupled}, a batch size of 1024, cosine learning rate decay, and linear warmup. For ViT-B/16, we use base learning rates of $10^{-5}$ and $5\times 10^{-5}$; for ResNet-50, we use a base learning rate of $5\times 10^{-5}$. We also use the standard data augmentation protocol for each model. The exact training details and motivation for these hyperparameters are provided in \cref{sec:app_vision_experiment_details}.

The base learning rates are chosen to be sufficiently small so that an unregularized baseline maintains its pretrained performance. However, we found that Q3R did not perform well on ViT-B/16 with the smaller learning rate, so we also conducted experiments using the larger learning rate. We additionally perform continued training on ViT-L/16 \citep{dosovitskiy2021an}, comparing SLORR with LoRITa and an unregularized baseline. Unless otherwise noted, we use the same settings as in the ViT-B/16 experiments, with a learning rate of $10^{-5}$. We use a batch size of 896 because LoRITa caused out-of-memory errors at a batch size of 1024. To provide a matched comparison, we also use a batch size of 896 for SLORR.

For SLORR, we used both the SLORR-Nuc and SLORR-Hoyer variants. For ResNets, we also include the SLORR-Hoyer-D variant, which applies regularization in a decoupled way (see \cref{sec:app_decoupled_impl_details}). For ViT-B/16, we tried it preliminarily, but it did not seem to help, so we did not continue exploring it. We did not explore the decoupled variant for SLORR-Nuc.

\paragraph{ImageNet-1K pretraining. } Additionally, we pretrain ResNet-18 for 110 epochs on ImageNet-1K (the rest of the settings are similar to ResNet-50, but we use a base learning rate of $10^{-3}$). We compare against Q3R and LoRITa and conduct experiments with SLORR-Nuc, SLORR-Hoyer and SLORR-Hoyer-D.

\begin{wrapfigure}{r}{0.52\textwidth}
    \centering
    \vspace{-3.5em}

    \begin{minipage}{\linewidth}
        \centering
        \includegraphics[width=\linewidth]{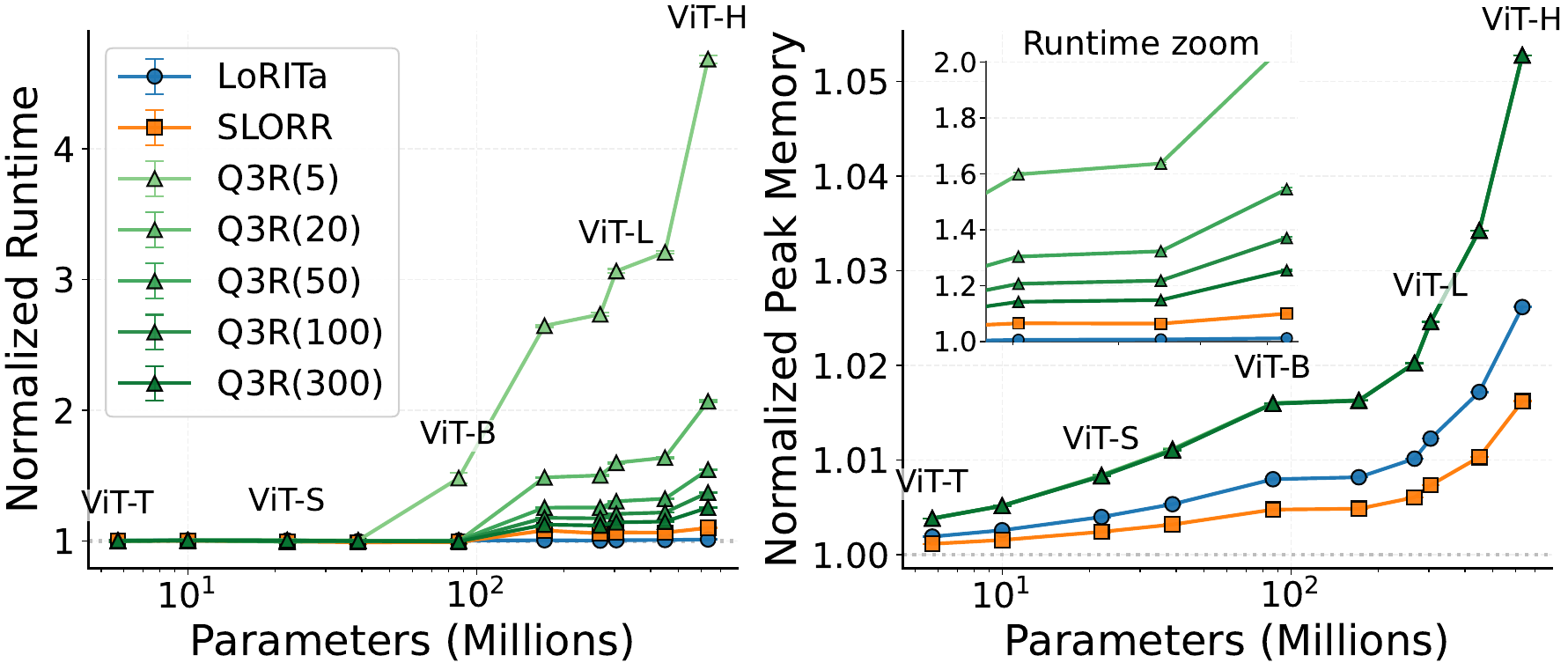}
\captionof{figure}{\textbf{Time and memory overhead of different regularizers across ViT model scales.} Runtime (left) and peak memory (right) are normalized to unregularized training. For Q3R, the number in parentheses denotes the SVD refresh interval. The inset shows a zoomed view of the normalized-runtime curves near ViT-H scale.}
        \label{fig:vit_gradual_overhead}
    \end{minipage}

    \vspace{0.5em}

    \begin{minipage}{\linewidth}
        \centering
        \small
        \renewcommand{\arraystretch}{1.0}
            \setlength{\aboverulesep}{0.2ex}
    \setlength{\belowrulesep}{0.2ex}
    
    \begin{tabular}{llrr}
    \toprule
    Model & Method & Time & Mem. (GB) \\
    \midrule
    \multirow{5}{*}{ViT-B/16} & SLORR & $\times1.037$ & $+0.34$ \\
     & LoRITa & $\times1.007$ & $+0.57$ \\
     & Q3R(5) & $\times1.656$ & $+1.14$ \\
     & Q3R(50) & $\times1.106$ & $+1.14$ \\
     & Q3R(300) & $\times1.055$ & $+1.14$ \\
    \midrule
    \multirow{5}{*}{ResNet-50} & SLORR & $\times1.048$ & $+0.09$ \\
     & LoRITa & $\times1.005$ & $+0.13$ \\
     & Q3R(5) & $\times1.222$ & $+0.31$ \\
     & Q3R(50) & $\times1.062$ & $+0.31$ \\
     & Q3R(300) & $\times1.049$ & $+0.31$ \\
    \midrule
    \multirow{5}{*}{ViT-L/16} & SLORR & $\times1.055$ & $+1.21$ \\
     & LoRITa & $\times1.010$ & $+2.02$ \\
     & Q3R(5) & $\times3.059$ & $+4.05$ \\
     & Q3R(50) & $\times1.301$ & $+4.04$ \\
     & Q3R(300) & $\times1.139$ & $+4.04$ \\
    \bottomrule
    \end{tabular}
        \captionof{table}{\textbf{Training overhead in our runs.} We report normalized time vs. unregularized training and increase in peak GPU memory. For Q3R, the number in parentheses denotes the SVD refresh interval. All methods achieve near-zero overhead on ResNet-18.}
            \vspace{-3.5em}
        \label{tab:main_training_overhead}
    \end{minipage}
    
\end{wrapfigure}

\paragraph{Compression results. } We report results for the different settings in \cref{fig:main_compression_results}. The accuracy--compression Pareto frontier is highlighted, with fainter lines showing trajectories for each hyperparameter configuration. In general, for all methods, there appears to be an accuracy/compressibility tradeoff. Under our settings, we find that at least one variant of SLORR generally performs close to Q3R, sometimes surpassing it. LoRITa is relatively weak in most of our continued training settings, though it is more effective in the ResNet-18 pretraining setting. Note that, in some circumstances, some runs of different methods underperform the baseline in terms of compressibility (in particular, Q3R seems not to perform well on the ViT-B/16 runs with smaller learning rates; see \cref{tab:vitb16_imagenet_acc} in \cref{sec:app_additional_vision_results}). Interestingly, each exact setting appears to favor different methods, including different SLORR variants, suggesting that there might not be a single one-size-fits-all objective.

\subsubsection{Training Overhead for Classification Models}\label{sec:overhead_exps_main}

\paragraph{Overhead in our training runs. } We measure the overhead in a controlled environment (full details and measurements can be found in \cref{sec:app_overhead_details}). In \cref{tab:main_training_overhead}, we report the overhead of different methods\footnote{We remark that, in general, the training overhead is related to different factors. For example, a slow data-loading pipeline might mask a large overhead. Moreover, smaller batch sizes will generally suffer from higher relative overhead, as the regularization time is fixed. Similarly, implementation specifics can slightly alter the exact results.}. We find that SLORR consistently maintains a small training overhead of less than 8\% on average\footnote{Our implementation of the decoupled variant has a slightly larger overhead. For the default variant, overhead is less than 6\% on average}. Furthermore, in these experiments, SLORR consistently achieves the smallest memory overhead. Although we use a refresh period of 5 in our Q3R experiments, which is expensive, we also note that larger periods, significantly cheaper than what \citet{ghosh2025qr} used in their main experiments, appear to work well in the settings we tried (as discussed earlier); these timings are included in the table for transparency.

\paragraph{Scaling up: overhead across different ViT scales. }
We evaluate how regularization overhead scales on modern hardware by measuring the relative time and memory costs on B200 GPUs from ViT-T through ViT-H scale (\cref{fig:vit_gradual_overhead}). This reveals a key scaling tradeoff: while Q3R with large periods can be efficient, larger refresh periods are needed to maintain efficiency as model size increases. While large refresh periods work in our settings, we expect sufficiently infrequent refreshes to eventually reduce regularization quality. Although we do not reach that ``saturation point'' in our scales, this is an important aspect to take into account in terms of scalability.\footnote{While Q3R relies on exact SVDs, it might be an interesting avenue for future work to try approximate SVDs, which might improve scalability.}

\subsection{LLM Pretraining}\label{sec:llm_experiments_main}

We now study regularized LLM pretraining using SLORR-Hoyer. In particular, we train standard Llama-like \citep{touvron2023llamaopenefficientfoundation} transformer models on FineWeb-Edu \citep{penedo2024finewebdatasetsdecantingweb}, using the AdamW optimizer and distributed data parallelism (DDP), following standard practice. Exact training details are provided in \cref{sec:app_llm_training_details}. Our main models are trained on approximately 20 training tokens per parameter, following the compute-optimal scaling rule of \citet{hoffmann2022trainingcomputeoptimallargelanguage}. We train 135M and 560M models using both unregularized standard pretraining and SLORR-Hoyer for different regularization strengths $\lambda$. We find that, across both scales, SLORR incurs less than 1\% training overhead on average; see \cref{sec:app_llm_training_overhead} for concrete details.

After training, we compress models using both plain SVD and the recent SVD-LLM \citep{wang2025svdllm}, which uses activation statistics to compute a whitening transformation before compressing each layer, generally enabling stronger compression\footnote{We use a slightly modified version. Details are provided in \cref{sec:app_llm_training_details}.}. Plain SVD is known to be impractical for LLMs, as degradation is significant even at very small compression ratios \citep{wang2025svdllm}. After compression, we evaluate the models. For both scales, we report validation perplexity on a held-out FineWeb-Edu set of approximately 2M tokens. For the 560M model, we additionally report zero-shot downstream task accuracy using the \texttt{lm-evaluation-harness} library \citep{eval-harness}. Downstream tasks include ARC-Easy and ARC-Challenge \citep{clark2018thinksolvedquestionanswering}, HellaSwag \citep{zellers-etal-2019-hellaswag}, LAMBADA \citep{paperno-etal-2016-lambada}, OpenBookQA \citep{mihaylov-etal-2018-suit}, and PIQA \citep{bisk2019piqareasoningphysicalcommonsense}.

\begin{figure}[t]
    \centering

    \begin{subfigure}[t]{\linewidth}
        \centering
        \includegraphics[width=\linewidth]{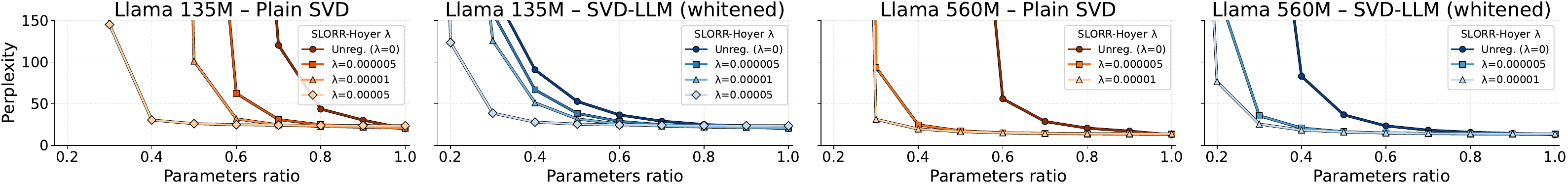}
    \end{subfigure}

    \vspace{0.1em}

    \begin{subfigure}[t]{\linewidth}
        \centering
        \includegraphics[width=\linewidth]{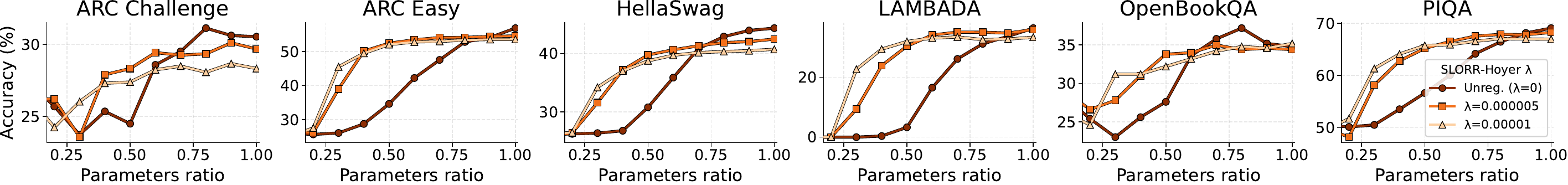}
    \end{subfigure}

    \vspace{0.1em}

    \begin{subfigure}[t]{\linewidth}
        \centering
        \includegraphics[width=\linewidth]{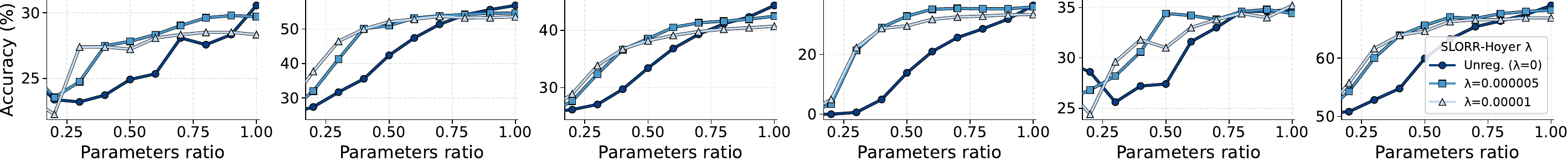}
    \end{subfigure}

 \caption{\textbf{LLM compression results with SLORR-Hoyer regularization.} Colors indicate the SLORR-Hoyer regularization strength $\lambda$ used during pretraining. $\lambda=0$ corresponds to the unregularized baseline. \textbf{Top row:} FineWeb-Edu validation perplexity for Llama 135M and Llama 560M after compression with either plain SVD or SVD-LLM whitening, as indicated in each panel title. Perplexity values are clipped at 150 for visibility; full tabular results are reported in \cref{sec:app_complete_llm_results}. \textbf{Middle row:} Downstream zero-shot accuracy for Llama 560M compressed with plain SVD. \textbf{Bottom row:} Downstream zero-shot accuracy for Llama 
 560M compressed with SVD-LLM whitening.}
    \label{fig:compressibility_llm_combined}
\end{figure}

\paragraph{Compression results. } Results for the 135M and 560M models are presented in \cref{fig:compressibility_llm_combined}. The reported parameter ratios are computed over the transformer blocks. In particular, embeddings and the final layer are not compressed, following common practice in this area.  We find that SLORR-Hoyer improves compressibility under both plain SVD and SVD-LLM compression. Plain SVD is particularly brittle for LLMs, as perplexity degrades sharply even when only a small fraction of block parameters is removed. Although SLORR-trained models have slightly higher uncompressed perplexity, they outperform unregularized models after mild compression, e.g., 10\% parameter removal, corresponding to a retained parameter ratio of 0.9. The gap widens significantly at lower parameter count ratios.

For the 560M model, we observe similar trends in downstream task performance. While the unregularized model quickly suffers from significant degradation, regularized models maintain substantially higher downstream performance in general, even at higher compression ratios, at the cost of a slight hit in uncompressed performance on some tasks.

\paragraph{Longer training. } As noted above, our main models are trained to the compute-optimal token budget \citep{hoffmann2022trainingcomputeoptimallargelanguage}. We also train 135M models for longer, specifically for 4 and 8 times the compute-optimal token budget, to explore whether the gains transfer to longer training. We find that, as models are trained for longer, they become less compressible in our setting, both under unregularized and regularized training. However, regularized models remain more compressible. These results are shown in \cref{sec:llm_overtraining}, with tabular values in \cref{sec:app_complete_llm_results}.

\paragraph{Visualizing the effect of SLORR on weight spectra. } SLORR promotes low rank with the objective of making models amenable to factorization. Naturally, this is directly reflected in the visual properties of weight spectra, where stronger regularization induces faster singular value decay. We include plots for all Llama 135M layers in the appendix (\cref{fig:llama135m_singular_values}), where the spectral concentration effect is apparent. Perhaps surprisingly, there are visible spectral differences in the embedding matrix, but not in the final linear layer. Since the embedding layer was not regularized, this indicates that modifying the spectra of regularized layers can affect other parts of the model.

\section{Discussion}\label{sec:discussion}

\paragraph{Limitations. } Weight low-rank regularization may not be effective in every setting. In exploratory experiments, we found that fine-tuning LLMs with low-rank regularization while preserving general capabilities was challenging, suggesting that the data and training setup are important, as one might expect. Additionally, there is generally a tradeoff between final accuracy and compressibility, and excessive regularization can cause training instabilities, particularly for the Hoyer variant. Moreover, although our experiments are significantly more extensive than those of prior work (more than 150 runs on ImageNet-1K), the performance of regularizers can vary depending on the training setting and hyperparameters. Similarly, we did not explore every SLORR variant in every setting. Therefore, although we provide a broad best-effort comparison, our results should not be interpreted as a definitive ranking of all methods. Finally, our theory, while informative, relies on specific assumptions that may not hold in every practical case.

\paragraph{Future work.} One promising direction is to combine regularization with complementary techniques, such as periodic truncation, similar to \citet{DBLP:journals/corr/abs-2004-14566}, or multiple rounds of retraining and compression. Regularization strength schedules, as in \citet{dolatabadi2026numuonnuclearnormconstrainedmuoncompressible}, would likely also yield better results. Moreover, the need for orthogonalization in optimization has sparked research into more effective polar-function approximations, e.g., \citep{GramNewtonSchulz}, which could be applied to SLORR. Finally, we believe our work could motivate broader research beyond the scope of this paper. Although the main application of this work is to directly regularize weights, the versatility of SLORR may open new avenues in settings where one cannot rely on architectural modifications or the validity of cached quantities. For example, low-rank regularization of activations via the nuclear norm has been shown to enhance generalization \citep{shi2023domain}; while that work uses SVDs, an efficient version could potentially be computed using our approach. Similarly, although nontrivial, it would be interesting to extend low-rank regularization to encourage activations to lie in a subspace, enabling better use of activation-aware factorization methods. More broadly, an accessible way to regularize rank may motivate work on broader applications of rank regularization beyond those studied here, including settings where encouraging a higher rank may be beneficial.

\paragraph{Conclusion.} We have introduced SLORR, a framework for directly regularizing weight matrices to be low rank. The idea is simple yet highly versatile: it does not require SVDs, remains efficient, operates directly on the matrix being regularized, and is stateless. Our experiments on ImageNet-1K image classification and LLM pretraining show that this regularizer is effective at producing more compressible models while maintaining training efficiency. More broadly, although we focused on weight compression, we are excited about the avenues that this versatility may open in other use cases.

\acknowledgements

\bibliography{main}
\bibliographystyle{bibstyle}

\newpage
\appendix

\section{Backpropagating Through the Regularizers}
\label{sec:app_backprop_computations}

As noted before, the SLORR regularizers are not differentiable everywhere. In practice, one can informally use ``natural'' backpropagation rules. We note that, for SLORR-Hoyer, the computed rule simply corresponds to backpropagating through the whole expression (and, when backpropagating through the nuclear norm, using the generalized polar factor). These correspond to minimum-norm choices in the generalized gradient of the corresponding functions, in the sense of Clarke \citep[Chapter 2]{clarkecalc}, which we denote by
\[
\partial f(\mW).
\]

Throughout this section, when $\mU$ and $\mV$ appear, they refer to the thin SVD of $\mW$, as in the rest of the paper. We also assume $\mW \neq \bm0$, so that $\|\mW\|_F > 0$.

First, note that both $\|\mW\|_*$ and $\|\mW\|_F$ are locally Lipschitz. Moreover, $\|\mW\|_F^2$ is continuously differentiable, with
\[
\nabla \|\mW\|_F^2 = 2\mW.
\]
By \citep[Proposition 2.2.4]{clarkecalc}, this implies
\[
\partial \|\mW\|_F^2 = \{2\mW\}.
\]

We use Clarke's notion of regularity \citep[Definition 2.3.4]{clarkecalc}. Continuously differentiable functions and convex functions are regular \citep[Proposition 2.3.6]{clarkecalc}. Therefore, $\|\mW\|_F^2$, $\|\mW\|_*$, and $\|\mW\|_*^2$ are regular.

The nuclear norm is convex. Hence, by \citep[Proposition 2.2.7]{clarkecalc}, its Clarke generalized gradient coincides with the subdifferential from convex analysis.

\paragraph{Generalized gradient of SLORR-Nuc.}

The SLORR-Nuc regularizer is the nuclear norm,
\[
\mathcal{L}_{\operatorname{nuc}}(\mW)=\|\mW\|_*.
\]
Let
\[
\mW = \mU \mSigma \mV^\top
\]
be the thin SVD of $\mW$. The generalized gradient is
\[
\partial \|\mW\|_*
=
\left\{
\mU \mV^\top + \mQ
:
\mU^\top \mQ = \bm 0,
\mQ \mV = \bm 0,
\|\mQ\|_2 \leq 1
\right\}.
\]
Since $\mQ$ is Frobenius-orthogonal to $\mU \mV^\top$, the minimum-norm element is obtained by taking $\mQ=\textbf{0}$. Thus, the backpropagation choice is
\[
\nabla \mathcal{L}_{\operatorname{nuc}}(\mW)
=
\mU \mV^\top
\in
\partial \mathcal{L}_{\operatorname{nuc}}(\mW).
\]

\paragraph{Generalized gradient of SLORR-Hoyer.}

The Hoyer-type SLORR regularizer is not convex. It can be written as
\[
\mathcal{L}_{\operatorname{hoyer}}(\mW)
=
\frac{a(\mW)}{b(\mW)},
\]
where
\[
a(\mW)=\|\mW\|_*^2,
\qquad
b(\mW)=\|\mW\|_F^2.
\]
Since $\mW \neq \bm 0$, we have $a(\mW)>0$ and $b(\mW)>0$.

Moreover, since $\|\mW\|_*^2$ is convex (and locally Lipschitz), its generalized gradient corresponds to the convex subdifferential \citep[Proposition 2.2.7]{clarkecalc}:
\[
\partial a(\mW)
=
2\|\mW\|_* \partial \|\mW\|_*.
\]
Also,
\[
\nabla b(\mW)=2\mW.
\]

By the quotient rule for generalized gradients \citep[Proposition 2.3.14]{clarkecalc},
\[
\partial\left[\frac{a(\mW)}{b(\mW)}\right]
=
\frac{
b(\mW)\partial a(\mW)
-
a(\mW)\nabla b(\mW)
}{
b(\mW)^2
}.
\]
Note that equality holds because $a$ and $-b$ are regular, and because $a(\mW)>0$ and $b(\mW)>0$.

Substituting, we obtain
\[
\partial \mathcal{L}_{\operatorname{hoyer}}(\mW)
=
\left\{
2\frac{\|\mW\|_*}{\|\mW\|_F^2}
\left(\mU \mV^\top + \mQ\right)
-
2\frac{\|\mW\|_*^2}{\|\mW\|_F^4}\mW
:
\mU^\top \mQ = \bm 0,\,
\mQ \mV = \bm 0,\,
\|\mQ\|_2 \leq 1
\right\}.
\]
The $\mQ$ term is again orthogonal to the remaining terms. Hence, the minimum-norm element is again obtained by taking $\mQ=\bm 0$. Thus, the backpropagation choice is
\[
\nabla \mathcal{L}_{\operatorname{hoyer}}(\mW)
=
2\frac{\|\mW\|_*}{\|\mW\|_F^2}\mU \mV^\top
-
2\frac{\|\mW\|_*^2}{\|\mW\|_F^4}\mW
\in
\partial \mathcal{L}_{\operatorname{hoyer}}(\mW).
\]
Equivalently,
\[
\nabla \mathcal{L}_{\operatorname{hoyer}}(\mW)
=
2\frac{\|\mW\|_*}{\|\mW\|_F}
\left(
\frac{1}{\|\mW\|_F}\mU \mV^\top
-
\frac{\|\mW\|_*}{\|\mW\|_F^3}\mW
\right).
\]

We then use these expressions when backpropagating through the regularizers. We remark that, intuitively, these correspond to backpropagating individually through the composition of functions in PyTorch, and setting the gradient of the nuclear norm to the generalized polar factor. We use this specific choice because it is the one that is generally (approximately) computed by polynomial approximations such as Polar Express.

\section{Analysis of SLORR-Hoyer}\label{sec:app_slorr_hoyer_theoretical_analysis}

Here, we provide proofs and additional discussion of the results shown in \cref{sec:understanding_slorr_hoyer_main}. We restate the results for the convenience of the reader.

We remark, following \cref{sec:understanding_slorr_hoyer_main}, that our analysis is performed in a regularizer-only gradient descent setting. $\eta>0$ denotes the step size of the regularizer-only update, with the training-time regularization strength absorbed into it. We let $\mW = \mU\operatorname{diag}{(\bm \sigma)}\mV^{\top}$ be a weight matrix and its thin SVD, with singular values $\bm{\sigma}$. Additionally, let $\mW^+$ be the weight matrix after a gradient step from $\mW$, and define $\bm{\sigma}^+$, $\mU^+$, and $\mV^+$ as its thin SVD. We assume ordered singular values in decreasing order ($\bm{\sigma}_1\ge\cdots\ge\bm{\sigma}_r>0$) and $\mW \neq \mathbf{0}$. 

\noGradBehaviorProp*
\begin{proof}
Define $C = 2\|\mW\|_*/\|\mW\|_F$ for brevity. First, write our gradient update as
\begin{align*}
\mW^+
&= \mW
- \eta \nabla\mathcal{L}_{\mathrm{hoyer}}(\mW)\\ &= 
\mW
- \eta\left[
\frac{C}{\|\mW\|_F}\mU\mV^{\top} - \frac{C\|\mW\|_*}{\|\mW\|_F^3}\mW
\right] \\ &= 
\mU\mSigma\mV^{\top}
- \eta \frac{C}{\|\mW\|_F}\mU\mV^{\top} + \eta \frac{C\|\mW\|_*}{\|\mW\|_F^3}\mU\mSigma\mV^{\top}
\\ &=  
\mU \left[ \mSigma - \eta \frac{C}{\|\mW\|_F} \mI + \eta \frac{C\|\mW\|_*}{\|\mW\|_F^3}\mSigma \right] \mV^{\top}.
\end{align*}

Since the matrix in the ``sandwich'' is diagonal (as it is a sum of two diagonal matrices), we can conclude that, for small enough updates such that no inner term becomes negative,
\[
\bm{\sigma}^+_i = \bm{\sigma}_i - \eta \frac{C}{\|\mW\|_F} + \eta \frac{C\|\mW\|_*}{\|\mW\|_F^3}\bm{\sigma}_i.
\]
Moreover, note that the order of singular values is preserved.

Now, 
\begin{align*}
\bm{\sigma}^+_i \diamond\bm{\sigma}_i
&\iff \bm{\sigma}_i - \eta \frac{C}{\|\mW\|_F}
        + \eta \frac{C\|\mW\|_*}{\|\mW\|_F^3}\bm{\sigma}_i \diamond\bm{\sigma}_i \\
&\iff - \eta \frac{C}{\|\mW\|_F}
      + \eta \frac{C\|\mW\|_*}{\|\mW\|_F^3}\bm{\sigma}_i \diamond0 \\
&\iff - \frac{C}{\|\mW\|_F}
      + \frac{C\|\mW\|_*}{\|\mW\|_F^3}\bm{\sigma}_i \diamond0
      \quad (\eta > 0) \\
&\iff - \frac{1}{\|\mW\|_F}
      + \frac{\|\mW\|_*}{\|\mW\|_F^3}\bm{\sigma}_i \diamond0
      \quad (C>0) \\
&\iff -\|\mW\|_F^2 + \|\mW\|_* \bm{\sigma}_i \diamond0 \\
&\iff \|\mW\|_* \bm{\sigma}_i \diamond\|\mW\|_F^2 \\
&\iff \bm{\sigma}_i \diamond\frac{\|\mW\|_F^2}{\|\mW\|_*},
\end{align*}
for $\diamond \in \{<, >, =\}$.

This concludes the proof.

\end{proof}

\energyRedistributionProp*
\begin{proof}
For brevity, define

\[
\varepsilon
=
\eta\frac{2\|\mW\|_*^2}{\|\mW\|_F^4}.
\]

Note that (see the proof of \cref{prop:shrinkandgrouth})
\[
\bm\sigma_i^+
=
\bm\sigma_i
-\eta\frac{2\|\mW\|_*}{\|\mW\|_F^2}
+
\eta\frac{2\|\mW\|_*^2}{\|\mW\|_F^4}\bm\sigma_i.
\]

Then

\[
\eta\frac{2\|\mW\|_*}{\|\mW\|_F^2}
=
\eta\frac{2\|\mW\|_*^2}{\|\mW\|_F^4}
\cdot
\frac{\|\mW\|_F^2}{\|\mW\|_*}
=
\varepsilon\tau.
\]

Therefore,

\[
\begin{aligned}
\bm\sigma_i^+
&=
\bm\sigma_i
-\eta\frac{2\|\mW\|_*}{\|\mW\|_F^2}
+
\eta\frac{2\|\mW\|_*^2}{\|\mW\|_F^4}\bm\sigma_i \\
&=
\bm\sigma_i-\varepsilon\tau+\varepsilon\bm\sigma_i \\
&=
\bm\sigma_i+\varepsilon(\bm\sigma_i-\tau).
\end{aligned}
\]

For convenience, now define

\[
\begin{aligned}
A^+
&=
\sum_{i\le k}
\left(
\bm\sigma_i+\varepsilon(\bm\sigma_i-\tau)
\right)^2
=
\sum_{i\le k}
\left[
\bm\sigma_i^2
+
2\varepsilon\bm\sigma_i(\bm\sigma_i-\tau)
+
\varepsilon^2(\bm\sigma_i-\tau)^2
\right]
\\
&=
\underbrace{\sum_{i\le k}\bm\sigma_i^2}_{A}
+
\sum_{i\le k}
\left[
2\varepsilon\bm\sigma_i(\bm\sigma_i-\tau)
+
\varepsilon^2(\bm\sigma_i-\tau)^2
\right]
=
A
+
\sum_{i\le k}
\left[
2\varepsilon\bm\sigma_i(\bm\sigma_i-\tau)
+
\varepsilon^2(\bm\sigma_i-\tau)^2
\right],
\end{aligned}
\]

\[
\begin{aligned}
B^+
&=
\sum_{i>k}
\left(
\bm\sigma_i+\varepsilon(\bm\sigma_i-\tau)
\right)^2
=
\sum_{i>k}
\left[
\bm\sigma_i^2
+
2\varepsilon\bm\sigma_i(\bm\sigma_i-\tau)
+
\varepsilon^2(\bm\sigma_i-\tau)^2
\right]
\\
&=
\underbrace{\sum_{i>k}\bm\sigma_i^2}_{B}
+
\sum_{i>k}
\left[
2\varepsilon\bm\sigma_i(\bm\sigma_i-\tau)
+
\varepsilon^2(\bm\sigma_i-\tau)^2
\right]
=
B
+
\sum_{i>k}
\left[
2\varepsilon\bm\sigma_i(\bm\sigma_i-\tau)
+
\varepsilon^2(\bm\sigma_i-\tau)^2
\right].
\end{aligned}
\]

\paragraph{The $A$ term. } Note that for the $A$ term,
\[
(\bm\sigma_i-\tau) > 0,
\]
as required in the statement of this result. Hence, $A^+ - A = \varepsilon a_1 + 
\varepsilon^2 a_2$ (as every other term is nonnegative), where $a_1, a_2 > 0$.

\paragraph{The $A + B$ term. } We will show that, for sufficiently small $\varepsilon$, the $A + B$ increases less than $A$. First note
\[
\sum_i 2\varepsilon \bm\sigma_i(\bm\sigma_i-\tau)
=
2\varepsilon\left(
\sum_i \bm\sigma_i^2
-
\tau\sum_i \bm\sigma_i
\right)
=
2\varepsilon\left(
\sum_i \bm\sigma_i^2
-
\frac{\|\mW\|_F^2}{\|\mW\|_*}\sum_i \bm\sigma_i
\right)
=
0.
\]

Now,
\begin{align*}
A^+ + B^+ - (A+B)
&=
\sum_i
\left[
2\varepsilon\bm\sigma_i(\bm\sigma_i-\tau)
+
\varepsilon^2(\bm\sigma_i-\tau)^2
\right] =
\sum_i
\left[
\varepsilon^2(\bm\sigma_i-\tau)^2
\right] = b\varepsilon^2,
\end{align*}
with  $b > 0$.

As the $A$ term has a strictly positive linear term in $\varepsilon$, while the
total increment is only quadratic, taking $\varepsilon$ small enough gives
\[
A^+ - A > (A^+ + B^+) - (A+B).
\]
Also note that $A^+-A>0$. Hence,
\[
\frac{A^+}{A^+ + B^+} > \frac{A}{A+B}.
\]

\paragraph{Final step. } Now, separate the sums
\[
\begin{aligned}
\frac{\sum_{i\le k} (\bm{\sigma}_i^+)^2}{\sum_i (\bm{\sigma}_i^+)^2}
&=
\frac{\sum_{i\le k} (\bm{\sigma}_i^+)^2}
{\sum_{i\le k} (\bm{\sigma}_i^+)^2 + \sum_{i > k} (\bm{\sigma}_i^+)^2}
=
\frac{A^+}{A^+ + B^+}, \\[0.5em]
\frac{\sum_{i\le k} \bm{\sigma}_i^2}{\sum_i \bm{\sigma}_i^2}
&=
\frac{\sum_{i\le k} \bm{\sigma}_i^2}
{\sum_{i\le k} \bm{\sigma}_i^2 + \sum_{i > k} \bm{\sigma}_i^2}
=
\frac{A}{A + B}.
\end{aligned}
\]

Finally, it follows that
\[
1-\frac{\|\mU^+_{:,:k}\operatorname{diag}(\bm{\sigma}_{:k}^+)\mV_{:,:k}^{+\top}-\mW^+\|_F^2}{\|\mW^+\|_F^2}
=
\frac{\sum_{i\le k} (\bm{\sigma}_i^+)^2}{\sum_i (\bm{\sigma}_i^+)^2}
>
\frac{\sum_{i\le k} \bm{\sigma}_i^2}{\sum_i \bm{\sigma}_i^2}
=
1-\frac{\|\mU_{:,:k}\operatorname{diag}(\bm{\sigma}_{:k})\mV_{:,:k}^\top-\mW\|_F^2}{\|\mW\|_F^2}.
\]

The result follows from subtracting 1 and swapping sides.
\end{proof}

\section{Approximation Error Guarantees}\label{sec:app_approx_guarantees}
We restate the result for the convenience of the reader.
\SLORRApproxGuarantees*

\begin{proof}
Given $\mW$, by assumption, a normalized version $\mM$ satisfies $\bm\sigma(\mM)\subseteq[\ell,1]$ (where only the nonzero singular values are taken into account). By Theorem 3.3 of \citet{amsel2026the}, 
\[
\left\|\widehat{\mP}- \mU\mV^\top\right\|_2 \leq \left|1-\ell^2\right|^{(q+1)^T},
\]
where $T$ is the number of Polar Express steps (6 in our main experiments) and $d = 2q + 1$ is the degree of the Polar Express polynomial (5 in our experiments). We can now extend it to the approximation of the nuclear norm. In particular, fix the PE parameters and let $\delta = \left|1-\ell^2\right|^{(q+1)^T}$.

First, we have
\begin{align*}
\left|\hat n-n\right|
&=
\left|
\operatorname{tr}\!\left(\mW^\top(\widehat{\mP}-\mU\mV^\top)\right)
\right| \\
&\leq
\left\|\mW\right\|_*\,\left\|\widehat{\mP}-\mU\mV^\top\right\|_2 \\
&\leq
\delta \left\|\mW\right\|_*.
\end{align*}
Therefore,
\[
\frac{\left|\hat n-n\right|}{n}\leq \delta.
\]

Similarly, we can bound the error in the SLORR-Hoyer value. Using
\[
\left|\hat n\right|\leq n+\left|\hat n-n\right|\leq (1+\delta)n,
\]
we obtain
\begin{align*}
\left|\hat n^2-n^2\right|
&=
\left|\hat n-n\right|\,\left|\hat n+n\right| \\
&\leq
\left|\hat n-n\right|\,\left(\left|\hat n\right|+n\right) \\
&\leq
\delta n\left((1+\delta)n+n\right) \\
&=
n^2\delta(2+\delta).
\end{align*}
Therefore, we get (dividing by $n^2$)
\[
\frac{\left|\hat n^2-n^2\right|}{n^2}
\leq
\delta(2+\delta),
\]
or equivalently,
\[
\frac{\left|\hat n^2/f^2-n^2/f^2\right|}{n^2/f^2}
\leq
\delta(2+\delta).
\]

Now, we can bound the error on the SLORR-Hoyer gradient. Recall that (for the exact polar factor):
\[
\mR
=
2\frac{n}{f}
\left(
\frac{1}{f}\mU\mV^\top
-
\frac{n}{f^3}\mW
\right),
\]
and note that its Polar Express approximation is 
\[
\widehat{\mR}
=
2\frac{\hat n}{f}
\left(
\frac{1}{f}\widehat{\mP}
-
\frac{\hat n}{f^3}\mW
\right).
\]

We have
\begin{align*}
\left\|\mR - \widehat \mR\right\|_2 &= \left\| 2\frac{n}{f}\left(\frac{1}{f}\mU\mV^{\top} - \frac{n}{f^3}\mW\right) - 2\frac{\hat n}{f}\left(\frac{1}{f}\widehat{\mP} - \frac{\hat n}{f^3}\mW\right) \right\|_2 \\ &= \left\| \frac{2n}{f^2}\mU\mV^{\top} - \frac{2n^2}{f^4}\mW - \frac{2\hat n}{f^2}\widehat{\mP} + \frac{2\hat n^2}{f^4}\mW \right\|_2 \\ &\leq \left\| \frac{2n}{f^2}\mU\mV^{\top} - \frac{2 \hat n}{f^2}\mU\mV^{\top} \right\|_2 +  \left\| \frac{2 \hat n}{f^2}\mU\mV^{\top} - \frac{2\hat n}{f^2}\widehat{\mP} \right\|_2 + \left\|- \frac{2n^2}{f^4}\mW  + \frac{2\hat n^2}{f^4}\mW \right\|_2 
\\ &\leq \delta \frac{2n}{f^2} + \frac{2 \left|\hat n\right|}{f^2} \delta + \frac{2\delta(2+\delta)n^2}{f^4}\left\|\mW\right\|_2 \leq \frac{2\delta (n + \left|\hat n\right|)}{f^2}
+ \frac{2\delta(2+\delta)n^2}{f^4}\left\|\mW\right\|_2.
\end{align*}

Now, to get rid of $\hat n$, note that
\[
\left|\hat n\right| \leq n + \left|\hat n - n\right| \leq (1+\delta)n.
\]
Hence,
\[
\begin{aligned}
\left\|\mR - \widehat \mR\right\|_2
&\leq
\frac{2\delta (n + (1+\delta)n)}{f^2}
+
\frac{2\delta(2+\delta)n^2}{f^4}\left\|\mW\right\|_2 \\
&=
\frac{2\delta(2+\delta)n}{f^2}
+
\frac{2\delta(2+\delta)n^2}{f^4}\left\|\mW\right\|_2 \\
&=
2\delta(2+\delta)
\left(
\frac{n}{f^2}
+
\frac{n^2}{f^4}\left\|\mW\right\|_2
\right).
\end{aligned}
\]
In particular, as $T \to \infty$, the error tends to 0, as expected.
\end{proof}

\paragraph{Remark about zero singular values.} The Polar Express guarantee we use only concerns the nonzero singular subspace, corresponding to the generalized polar factor (i.e., using the thin SVD) used throughout this paper. Exact zero singular values are therefore excluded from the result. In exact arithmetic, the Polar Express updates do not create components in this zero singular subspace, since the scalar polynomial maps used in their iteration satisfy $p(0)=0$. Thus, zero singular values remain zero under the idealized iteration, and hence our approximation guarantee concerns only the nonzero singular values.

\paragraph{Remark about normalization and the lower bound parameter.} The singular value range $[\ell,1]$ is inherited from the analysis in \citet{amsel2026the}. For a nonzero matrix $\mW$, we apply Polar Express to the normalized matrix
\[
\widetilde{\mW}=\frac{\mW}{\|\mW\|_F+\varepsilon}.
\]
This positive rescaling leaves the polar factor unchanged and ensures that the largest singular value of $\widetilde{\mW}$ is at most one. However, it does not by itself guarantee that every nonzero singular value is at least $\ell$. The parameter $\ell$ should therefore be interpreted as the lower bound parameter used by Polar Express. If the actual smallest nonzero normalized singular value is below $\ell$, the displayed worst-case bound does not apply literally to that singular direction. We note that \citet{amsel2026the} mention that inaccurate lower bound guesses are typically not severe in practice.

\paragraph{Remark about practical considerations.} The guarantee above is a worst-case bound for the idealized Polar Express iteration and should not be read as a tight estimate of the implemented approximation error. In our experiments, we use a small number of Polar Express iterations together with the practical adaptations of \citet{amsel2026the}. These choices slightly deviate from the idealized theorem, and floating point arithmetic can introduce additional error. Empirically, this setting works well in our experiments; see \cref{sec:app_polar_express_details} for an ablation and implementation details.

\clearpage

\section{General Experimental and Implementation Details}\label{sec:app_impl_details_general}

The code used to produce our experimental results is available at \url{\githuburl}. It includes the implementation of our method and baselines, as well as the scripts, hyperparameters, and configurations used for our experiments. Our code for both the vision and LLM experiments is based on the PyTorch framework \citep{paszke2019pytorchimperativestylehighperformance}.

\subsection{On our Use of Polar Express}\label{sec:app_polar_express_details}

Polar Express has several tunable knobs. The first is the lower bound on singular values, $\ell$, along with the degree and number of iterations. Following Implementation 1 in \citet{amsel2026the}, we set $\ell = 10^{-3}$ and the degree to $d=5$. They note that choosing $\ell$ too small can delay convergence for smaller singular values, but that inaccurate guesses are typically not severe in practice. We found these values to work well in our experiments. We use \texttt{bfloat16}, as recommended.

An important hyperparameter of Polar Express is the number of iterations. The authors recommend 5 or 6 iterations. Throughout our experiments, we used 6 Polar Express iterations, which add minimal overhead. In early experiments, we found that this quantity worked well. Here, we explicitly ablate the number of iterations. Following our ViT-B/16 experiments, we fix $\lambda=0.002$ and repeat our uniform compressibility experiments using runs with $\{1, 2, 4, 6, 8, 10\}$ Polar Express iterations. Results are presented in \cref{tab:polarexpress_iters_ablation_vitb16}. In this setting, after 6 iterations, results across compression ratios change little, suggesting diminishing returns from additional Polar Express iterations in this setting.

\begin{table}[h]
\centering
\caption{\textbf{Effect of Polar Express iterations (ViT-B/16).} We repeat the main ViT-B/16 SLORR-Hoyer experiment with $\lambda=0.002$ while varying only the number of Polar Express iterations. Columns denote retained parameter ratios, and entries are ImageNet-1K top-1 accuracy.}
\label{tab:polarexpress_iters_ablation_vitb16}
\small
\setlength{\tabcolsep}{4pt}
\begin{tabular}{ccccccccccc}
\toprule
\textbf{Steps} & \textbf{21\%} & \textbf{31\%} & \textbf{41\%} & \textbf{51\%} & \textbf{60\%} & \textbf{70\%} & \textbf{80\%} & \textbf{90\%} & \multicolumn{1}{|c}{\textbf{Full}} \\
\midrule
1 & 0.1 & 0.1 & 0.6 & 14.4 & 49.0 & 67.9 & 77.1 & 81.0 & \multicolumn{1}{|c}{84.9} \\
2 & 0.1 & 0.1 & 0.2 & 0.3 & 1.6 & 32.3 & 63.7 & 72.6 & \multicolumn{1}{|c}{83.4} \\
4 & 0.1 & 0.1 & 0.1 & 0.2 & 3.8 & 18.6 & 82.6 & 83.3 & \multicolumn{1}{|c}{83.7} \\
6 & 0.1 & 0.1 & 6.6 & 66.2 & 79.5 & 81.1 & 81.5 & 81.7 & \multicolumn{1}{|c}{81.7} \\
8 & 0.1 & 0.1 & 5.2 & 67.7 & 79.5 & 80.9 & 81.3 & 81.5 & \multicolumn{1}{|c}{81.6} \\
10 & 0.1 & 0.1 & 7.0 & 68.9 & 79.4 & 80.9 & 81.2 & 81.4 & \multicolumn{1}{|c}{81.5} \\
\bottomrule
\end{tabular}
\end{table}

\subsection{Implementation Details for the Decoupled Variant}\label{sec:app_decoupled_impl_details}
\cref{alg:decoupled-SLORR} details how the gradient of the SLORR regularizer can be applied in a decoupled form. This follows the rationale of AdamW \citep{loshchilov2018decoupled}, where weight decay is applied based on the pre-update weights. SLORR-Hoyer-D follows this approach, but note that this approach can also be applied to SLORR-Nuc. Note that, in practice, the regularizer value (not only the gradient) can also be computed (at the same time as the gradient) and, for example, logged.

\begin{algorithm}[H]
\caption{Decoupled SLORR update}
\label{alg:decoupled-SLORR}
\begin{algorithmic}[1]
\Require Weight set $\mathcal{W}$, regularized weight set $\mathcal{W}_{\mathrm{reg}} \subseteq \mathcal{W}$, learning rate $\eta$, regularization strength $\lambda$
\Require Task gradients $\nabla_{\mathcal{W}} \mathcal{L}_{\mathrm{task}}$
\ForAll{$\mW \in \mathcal{W} \setminus \mathcal{W}_{\mathrm{reg}}$}
    \If{$\nabla_{\mW} \mathcal{L}_{\mathrm{task}}$ exists}
        \State Apply AdamW update to $\mW$ using $\nabla_{\mW} \mathcal{L}_{\mathrm{task}}$
    \EndIf
\EndFor
\ForAll{$\mW \in \mathcal{W}_{\mathrm{reg}}$}
    \State $\widehat{\mG}_{\mathrm{SLORR}}^{\mathrm{pre}} \gets \widehat{\mG}_{\mathrm{SLORR}}(\mW)$
    \If{$\nabla_{\mW} \mathcal{L}_{\mathrm{task}}$ exists}
        \State Apply AdamW update to $\mW$ using $\nabla_{\mW} \mathcal{L}_{\mathrm{task}}$
    \EndIf
    \State $\mW \gets \mW - \eta \lambda \widehat{\mG}_{\mathrm{SLORR}}^{\mathrm{pre}}$ \Comment{uses the pre-update SLORR gradient}
\EndFor
\end{algorithmic}
\end{algorithm}

\subsection{Implementation Details for the Memory-Efficient Variant}\label{sec:app_memory_efficient_approach}
\Cref{alg:iterative-reg} details a more memory-efficient approach for PyTorch environments. When implemented as a loss function (i.e., not in decoupled form, as detailed in \cref{sec:app_decoupled_impl_details}), using an autograd function is easy to implement. However, if one computes each layer-wise loss, sums them, and then backpropagates, PyTorch keeps intermediate values for all layers. This can consume additional memory proportional to the total size of the regularized layers. Again, this can be applied to both SLORR-Nuc and SLORR-Hoyer. Similar to \cref{sec:app_decoupled_impl_details}, the regularizer value can also be computed at the same time, if needed.

\begin{algorithm}[H]
\caption{Iterative memory-efficient SLORR implementation}
\label{alg:iterative-reg}
\begin{algorithmic}[1]
\Require Regularized weight set $\mathcal{W}_{\mathrm{reg}}$, regularization strength $\lambda$
\ForAll{$\mW \in \mathcal{W}_{\mathrm{reg}}$}
    \State Compute $\widehat{\mG}_{\mathrm{SLORR}}(\mW)$ from the current value of $\mW$
    \If{$\mW.\mathrm{grad}$ exists}
        \State $\mW.\mathrm{grad} \gets \mW.\mathrm{grad} + \lambda \widehat{\mG}_{\mathrm{SLORR}}(\mW)$
    \Else
        \State $\mW.\mathrm{grad} \gets \lambda \widehat{\mG}_{\mathrm{SLORR}}(\mW)$
    \EndIf
\EndFor
\end{algorithmic}
\end{algorithm}

\clearpage
\section{Vision Experimental and Implementation Details}\label{sec:app_vision_experiment_details}

For the vision experiments, we use standard ImageNet training pipelines built on the model implementations provided by \texttt{timm} \citep{Wightman_PyTorch_Image_Models}.

\subsection{Interpreting Convolutional Layers}\label{sec:app_interpreting_conv_layers}

For factorization and regularization, we interpret each convolutional kernel as a matrix by reshaping
\[
\tW \in \mathbb{R}^{C_o \times C_i \times H_k \times W_k}
\]
as $\operatorname{reshape}(\tW; (C_o,-1))$. A low-rank factorization under this interpretation corresponds to replacing the original convolution with two sequential convolutions. This is the standard channel-wise decomposition used in prior low-rank factorization work, including LoRITa.

\subsection{Implementation Details for Baselines}\label{sec:app_baseline_impl_details}

\subsubsection{LoRITa}

We implement LoRITa largely as described in \citet{alkhouri2024structurepreserving}, with a few adaptations. We fix $N=2$, where $N$ denotes the number of factorized layers used to replace each original layer. Since we initialize from pretrained checkpoints, we choose the factors so that the initial network is functionally equivalent to the original model. Specifically, for a weight matrix $\mW$, we compute its thin SVD and initialize
\[
\mW_0 = \mU \mSigma^{1/2}, \qquad
\mW_1 = \mSigma^{1/2} \mV^{\top}.
\]
This corresponds to the ``spectral initialization'' of \citet{khodak2021initialization} in the case where all singular values are preserved.

For ResNet-18 pretraining, we first initialize each corresponding dense layer using the same random initializer as in the dense baseline. We then compute its factorization and initialize the LoRITa factors using the spectral initialization above. This matches the initial function of the dense baseline while using LoRITa's factorized parameterization, allowing us to control for initialization effects.

Following \citet{ghosh2025qr} and \citet{alkhouri2024structurepreserving}, we implement the LoRITa regularizer through the optimizer's weight decay term. The value reported for LoRITa in the result tables is this coefficient. As we use AdamW, it is applied in a decoupled way. Parameters that are not replaced by LoRITa factors use the default weight decay for the corresponding experiment, while normalization parameters and biases are not decayed. This keeps the LoRITa penalty separate from the ordinary optimizer weight decay used for the rest of the model.

For convolutional layers, we follow the construction described in \citet{alkhouri2024structurepreserving}, which coincides with the general interpretation of convolutional layers discussed in \cref{sec:app_interpreting_conv_layers}.

We make one implementation modification to improve memory efficiency, which allows us to run LoRITa on larger models. In the original sequential implementation\footnote{\url{https://github.com/XitongSystem/LoRITa}}, LoRITa replaces a layer by a sequence of factorized layers. With $N=2$, this introduces an additional intermediate activation and can substantially increase memory use at realistic batch sizes. Even when memory is sufficient, the sequential implementation can remain computationally expensive; see the LoRITa authors' comments at \url{https://openreview.net/forum?id=1KCrVMJoJ9}.

We describe the modification for a linear layer without bias; the convolutional case follows the same matrix interpretation as in \cref{sec:app_interpreting_conv_layers}. Let the original layer be parameterized by $\mW\in\mathbb{R}^{I\times O}$, and let $R=\min\{I,O\}$. LoRITa replaces this layer with two trainable matrices,
\[
\mW_0 \in \mathbb{R}^{I \times R}, \qquad
\mW_1 \in \mathbb{R}^{R \times O}.
\]
For an input $\mX\in\mathbb{R}^{B\times I}$, with batch and token dimensions folded into a single effective batch dimension, the original computation is
\[
\mY=\mX\mW,
\]
while the sequential LoRITa implementation computes
\[
\mY=(\mX\mW_0)\mW_1.
\]
Instead, in each forward pass we first merge the factors and then apply the merged matrix:
\[
\mY=\mX(\mW_0\mW_1).
\]
This avoids storing the intermediate activation $\mX\mW_0$ and reduces LoRITa's overhead to effectively zero in our experiments. The merged computation defines the same function of $\mW_0$ and $\mW_1$ as the sequential implementation, so the forward pass and gradients are equivalent up to floating point differences.

Moreover, the original LoRITa implementation uses $R=I$. We use $R=\min\{I,O\}$, which is the natural full-rank dimension for the matrix factorization above and enables the checkpoint-preserving initialization described here. The main theoretical results supporting LoRITa still apply (see Section 4.3 of \citet{alkhouri2024structurepreserving}).

We restrict LoRITa to $N=2$. Extending the initialization to $N>2$ would require choosing how to split the singular values across more than two factors, introducing an additional experimental choice. Since the LoRITa paper reports only marginal ImageNet ResNet gains from $N=3$, including for ResNet-18, we keep $N=2$ throughout and leave larger $N$ as a limitation of our study.

\subsubsection{Q3R}

Following the authors' recommended setup, we use their AdamQ3R optimizer. AdamQ3R combines Adam-style optimization with the Q3R regularizer, applying the regularization effect in a decoupled form motivated by AdamW \citep{loshchilov2018decoupled}. We copied the optimizer and regularizer implementation from the authors' official codebase\footnote{\url{https://github.com/ThatE10/Q3R}} and left the linear-layer path unchanged, except that we added support for decoupled weight decay to match our AdamW baselines.

We use a refresh interval of 5 for all main experiments, unless noted otherwise, following the setting used by the Q3R authors in their main experiments. We also found that larger, cheaper refresh intervals can work well in some settings; see \cref{sec:app_q3r_refresh_period_ablation}.

The original implementation does not support convolutional layers. We extend it to convolutions by interpreting convolutional layers as matrices, as described in \cref{sec:app_interpreting_conv_layers}. This extension only inserts the necessary reshape operations and leaves other parts of the code untouched.

When we refer to Q3R's ``target rank'', we mean the value passed to the AdamQ3R optimizer. This value can be a ratio between 0 and 1 rather than an absolute rank.

\subsection{Training Details and Hyperparameters}\label{sec:training_details_app}

We first describe the general training setup and then list specific hyperparameters for each model.

Regularization and factorization are applied to layers in their native parameterization. In particular, the \texttt{timm} ViT implementations use fused QKV layers, and we regularize and factorize these fused matrices without splitting them. This follows LoRITa's ViT implementation. Q3R's experiments split QKV layers, but we note this is an experimental choice rather than a mechanism inherent to the method (we remark that, in any case, this choice must be consistent across all methods). Hence, for simplicity, we use the default fused QKV implementation. We did not explore split QKV layers.

We keep non-regularizer hyperparameters fixed across methods whenever possible. For LoRITa, the factor weight decay is part of the method and is varied as its regularization strength. For Q3R, our AdamQ3R implementation supports the same decoupled weight decay used by the AdamW baselines and SLORR runs. Across methods, ordinary weight decay is applied only to weight matrices; normalization parameters and biases are not decayed.

For continued training experiments, we choose learning rates so that an unregularized baseline largely preserves the accuracy of the pretrained checkpoint. On ViT-B/16, Q3R tended to perform poorly with the smaller learning rate, so we tested a larger one. LoRITa also improved slightly in that setting, so we include both learning rates. We did not explore this higher-learning-rate setting for other models or for ResNet-18 pretraining (for ResNet-18, the learning rate was again chosen based on unregularized performance).

We remark that even though our sweeps are significantly more extensive than those of prior work, differences in training configurations might benefit different techniques. Accordingly, our comparisons should be interpreted as a broad best-effort study under mostly matched training recipes rather than as an exhaustive sweep, as mentioned in \cref{sec:discussion}.

All runs use \texttt{bfloat16} automatic mixed precision (weights are stored in \texttt{float32}).

\paragraph{Checkpoints. } For runs starting from checkpoints, we use the default ones on \texttt{timm}, except for ResNet-50 for which we use the TorchVision \citep{TorchVision} checkpoint, which is the most common one.

\paragraph{Hyperparameters for ViT-B/16. } For ViT-B/16, we use learning rates of $1\times 10^{-5}$ and $5 \times 10^{-5}$, a batch size of 1024, a weight decay of 0.1, and a cosine schedule with 1 epoch of linear warmup \citep{goyal2018accuratelargeminibatchsgd,loshchilov2017sgdr} with an initial factor of $10^{-3}$, followed by decay to 0.1 times the original learning rate. We use standard ViT-B/16 augmentation through the \texttt{timm} library. We detail the augmentation parameters in \cref{tab:augs}.

\paragraph{Hyperparameters for ViT-L/16. } We use the same setup as for ViT-B/16, but with a batch size of 896. With a batch size of 1024, LoRITa produced out-of-memory errors, even with our optimizations. We therefore use a batch size of 896 for all methods in this setting to ensure comparable conditions. We only used a learning rate of $1\times 10^{-5}$.

\paragraph{Hyperparameters for ResNet-50.} For ResNet-50, we use a learning rate of $5\times 10^{-5}$. We use the same scheduler as in the ViT experiments, a batch size of 1024, and a weight decay of 0.01. We apply standard data augmentation, consisting of random resizing and cropping, followed by horizontal flipping with probability 0.5, and normalization using the standard ImageNet-1K mean and standard deviation.

\paragraph{Hyperparameters for ResNet-18 (training from scratch).} We use a learning rate of $10^{-3}$ and train for 110 epochs. The rest is the same as for ResNet-50.

All vision experiments use A100 GPUs, except ViT-L/16 ones, where we use B200 GPUs.

\paragraph{Method hyperparameter selection.}
Method-specific hyperparameters were selected through exploratory runs and manual coarse-to-fine tuning. SLORR and LoRITa each have a single main regularization coefficient, whereas Q3R has two interacting hyperparameters, excluding the refresh interval, making exhaustive grid search impractical. For Q3R, we found that values larger than the authors' recommended range of $[0.001,0.01]$ worked substantially better in our settings.

For all methods, stronger regularization generally improved compressibility but could reduce uncompressed accuracy or cause training collapse. Conversely, too little regularization had little effect on compressibility. Runs were discarded only when they collapsed or produced unusable uncompressed accuracy, not based on post-compression performance. All included configurations are listed in \cref{sec:app_additional_vision_results}.

\subsection{Factorization Details}

Our factorization implementation is adapted from the BALF \citep{gonzalezmartinez2025balfbudgetedactivationawarelowrank} repository. For every model, we factorize all layers except the first and last, typically the classifier. As mentioned in the main text, we perform experiments with both uniform truncation and energy truncation. We describe both below; our implementation is included in our code.

\paragraph{Uniform truncation.}
Given a target retention ratio $c\in(0,1]$, with $c=1$ denoting no compression, each eligible layer is truncated to approximately retain a fraction $c$ of its original parameters. Under our matrix interpretation, the same rank choice also gives approximately the same retained FLOPs fraction for both linear and convolutional layers. Thus, uniform truncation applies the same compression ratio across eligible layers, although different layers might still retain different absolute numbers of parameters and FLOPs.

\paragraph{Energy truncation.} Given a single matrix, the retained energy metric when keeping up to rank $P$ is defined as
\begin{equation*}\label{eq:sval_energy}
E(P) = \frac{\sum_{i=1}^P\bm{\sigma}_i^2}{\sum_{i=1}^R\bm{\sigma}_i^2} = 1 - \frac{\|\mW - \widehat{\mW}\|_F^2}{\|\mW\|_F^2},
\end{equation*}
where $R = \min \{M, N\}$, $M$ and $N$ are the dimensions of $\mW$, and $\widehat{\mW}$ denotes the reconstruction of the compressed matrix. Similar formulations are used in \citet{10.5555/3294771.3294853,Yang2020DeepHoyer,liebenwein2021compressing}. 

Given a fixed retained energy threshold $\rho$, the energy criterion selects a per-layer rank
\begin{equation*}
P^* = \min \{P : E(P) \geq \rho\},
\end{equation*}
i.e., the lowest rank that maintains a relative squared error of at most $1 - \rho$. In particular, for each model, we sweep
$\{0.7,\allowbreak0.75,\allowbreak0.8,\allowbreak0.85,\allowbreak0.9,\allowbreak
0.925,\allowbreak0.95,\allowbreak0.975,\allowbreak0.99,\allowbreak
0.995,\allowbreak0.999,\allowbreak0.9995,\allowbreak0.9999,\allowbreak
0.99995,\allowbreak0.99996,\allowbreak0.99997,\allowbreak0.99998,\allowbreak
0.99999\}$.
    
\begin{table}[t]
\caption{Training augmentation for the ViT-B runs.}
\label{tab:augs}
\centering
\small
\setlength{\tabcolsep}{8pt}
\renewcommand{\arraystretch}{1.15}
\begin{tabular}{lll}
\toprule
\textbf{Component} & \textbf{Parameter} & \textbf{Value} \\
\midrule
\multirow{10}{*}{\texttt{create\_transform}} 
 & \texttt{input\_size}      & \texttt{(3, 224, 224)} \\
 & \texttt{is\_training}     & \texttt{True} \\
 & \texttt{color\_jitter}    & \texttt{0.3} \\
 & \texttt{auto\_augment}    & \texttt{"rand-m9-mstd0.5-inc1"} \\
 & \texttt{interpolation}    & \texttt{"bicubic"} \\
 & \texttt{re\_prob}         & \texttt{0.25} \\
 & \texttt{re\_mode}         & \texttt{"pixel"} \\
 & \texttt{re\_count}        & \texttt{1} \\
 & \texttt{mean}             & \texttt{(0.5, 0.5, 0.5)} \\
 & \texttt{std}              & \texttt{(0.5, 0.5, 0.5)} \\
\midrule
\multirow{7}{*}{\texttt{Mixup}}
 & \texttt{mixup\_alpha}     & \texttt{0.8} \\
 & \texttt{cutmix\_alpha}    & \texttt{1.0} \\
 & \texttt{prob}             & \texttt{1.0} \\
 & \texttt{switch\_prob}     & \texttt{0.5} \\
 & \texttt{mode}             & \texttt{"batch"} \\
 & \texttt{label\_smoothing} & \texttt{0.1} \\
 & \texttt{num\_classes}     & \texttt{1000} \\
\bottomrule
\end{tabular}
\end{table}

\section{Q3R Refresh Period Ablation}\label{sec:app_q3r_refresh_period_ablation}
Throughout our experiments, we use a refresh period of 5 for Q3R, following their main experiments. Sweeping the period in addition to the target rank and regularization strength would have been impractical, so we use their recommended configuration.

However, for completeness, we also swept the refresh interval over a small number of runs to probe whether one could use a larger period, which is more efficient, and what the corresponding accuracy tradeoff would be. We limit ourselves to ViT-B/16.

Results are presented in \cref{tab:q3r_steps_results}. Q3R appears to be surprisingly robust in the cases we tested. Note, however, that this might differ in other models, scales, or training horizons.

\begin{table}[h]
\caption{\textbf{Q3R refresh interval ablation on ViT-B/16.} ImageNet-1K top-1 accuracy under uniform retained parameter ratios and different configurations.}
\small
\label{tab:q3r_steps_results}
\centering
\setlength{\tabcolsep}{5.5pt}
\begin{tabular}{@{}lrrrrrrrrrr@{}}
\toprule
\textbf{Setting} & \textbf{Interval} & \textbf{21\%} & \textbf{31\%} & \textbf{41\%} & \textbf{51\%} & \textbf{60\%} & \textbf{70\%} & \textbf{80\%} & \textbf{90\%} & \multicolumn{1}{|c}{\textbf{Full}} \\
\midrule
\multirow{5}{*}{\begin{tabular}[c]{@{}c@{}}$\lambda=8.0$ \\ target rank 0.05 \\ learning rate $5\times10^{-5}$\end{tabular}} & 5 & 29.6 & 75.2 & 76.0 & 76.1 & 76.2 & 76.3 & 76.3 & 76.3 & \multicolumn{1}{|c}{76.3} \\
 & 20 & 34.8 & 75.2 & 75.8 & 76.1 & 76.3 & 76.4 & 76.4 & 76.4 & \multicolumn{1}{|c}{76.4} \\
 & 50 & 33.3 & 75.1 & 75.8 & 76.1 & 76.3 & 76.3 & 76.3 & 76.4 & \multicolumn{1}{|c}{76.4} \\
 & 100 & 32.9 & 75.0 & 75.8 & 76.0 & 76.2 & 76.2 & 76.3 & 76.3 & \multicolumn{1}{|c}{76.3} \\
 & 300 & 44.4 & 74.8 & 75.5 & 75.8 & 75.9 & 76.0 & 76.0 & 76.0 & \multicolumn{1}{|c}{76.0} \\
\midrule
\multirow{5}{*}{\begin{tabular}[c]{@{}c@{}}$\lambda=3.0$ \\ target rank 0.15 \\ learning rate $5\times10^{-5}$\end{tabular}} & 5 & 0.1 & 0.2 & 21.2 & 71.9 & 80.7 & 82.5 & 82.7 & 82.8 & \multicolumn{1}{|c}{82.8} \\
 & 20 & 0.1 & 0.2 & 19.3 & 72.0 & 80.7 & 82.5 & 82.7 & 82.8 & \multicolumn{1}{|c}{82.8} \\
 & 50 & 0.1 & 0.1 & 20.1 & 72.2 & 80.8 & 82.5 & 82.8 & 82.8 & \multicolumn{1}{|c}{82.9} \\
 & 100 & 0.1 & 0.2 & 18.7 & 72.0 & 80.8 & 82.6 & 82.8 & 82.8 & \multicolumn{1}{|c}{82.9} \\
 & 300 & 0.1 & 0.2 & 22.1 & 72.2 & 80.6 & 82.6 & 82.7 & 82.8 & \multicolumn{1}{|c}{82.9} \\
\midrule
\multirow{5}{*}{\begin{tabular}[c]{@{}c@{}}$\lambda=5.0$ \\ target rank 0.10 \\ learning rate $1\times10^{-5}$\end{tabular}} & 5 & 0.1 & 0.1 & 0.1 & 9.4 & 53.5 & 73.0 & 77.3 & 78.7 & \multicolumn{1}{|c}{81.0} \\
 & 20 & 0.1 & 0.1 & 0.1 & 10.0 & 53.5 & 73.0 & 77.3 & 78.7 & \multicolumn{1}{|c}{81.1} \\
 & 50 & 0.1 & 0.1 & 0.1 & 8.7 & 54.1 & 72.8 & 77.2 & 78.7 & \multicolumn{1}{|c}{81.0} \\
 & 100 & 0.1 & 0.1 & 0.2 & 9.1 & 47.5 & 72.6 & 77.4 & 78.7 & \multicolumn{1}{|c}{81.1} \\
 & 300 & 0.1 & 0.1 & 0.1 & 8.6 & 48.8 & 70.9 & 77.2 & 78.5 & \multicolumn{1}{|c}{81.0} \\
\bottomrule
\end{tabular}
\end{table}

\section{Additional Vision Model Results}\label{sec:app_additional_vision_results}

Here, we report the complete uniform compression results for the vision models. Results are shown for ViT-B/16 in \cref{tab:vitb16_imagenet_acc}, ResNet-50 in \cref{tab:rn50_imagenet_acc}, ViT-L/16 in \cref{tab:vitl16_imagenet_acc}, and ResNet-18 in \cref{tab:rn18_scratch_imagenet_acc}.

For each table, the different columns indicate the retention ratio (and Full indicates the uncompressed model). Top results are bolded, and second-top results are underlined. For the ViT-B/16 table, where we swept learning rates, $\diamond$ is used for $10^{-5}$, and $\dagger$ for $5 \times 10^{-5}$.

\paragraph{Notation. } For Q3R, the first argument denotes its regularization strength, and the second its target rank. For LoRITa, its only argument denotes the regularization strength (in the form of weight decay). For all SLORR variants, the only argument denotes regularization strength.

\begin{table}[h]
\centering
\small
\caption{Complete uniform compression results for ViT-L/16 continued training.}
\label{tab:vitl16_imagenet_acc}
\setlength{\tabcolsep}{3pt}
\begin{tabular}{lccccccccccc}
\toprule
\textbf{Method} & \textbf{10\%} & \textbf{20\%} & \textbf{30\%} & \textbf{40\%} & \textbf{50\%} & \textbf{60\%} & \textbf{70\%} & \textbf{80\%} & \textbf{90\%} & \textbf{95\%} & \multicolumn{1}{|c}{\textbf{Full}} \\
\midrule
Baseline & \textbf{0.2} & \underline{0.6} & 7.5 & 38.9 & 63.5 & 75.2 & 80.6 & 83.2 & 84.4 & 84.8 & \multicolumn{1}{|c}{\underline{85.8}} \\
\midrule
LoRITa(0.0001) & \underline{0.1} & \textbf{0.7} & \underline{12.5} & 45.8 & 67.2 & 76.9 & 81.3 & 83.5 & 84.5 & 84.9 & \multicolumn{1}{|c}{85.5} \\
LoRITa(0.001) & \underline{0.1} & \textbf{0.7} & \underline{12.5} & 45.8 & 67.2 & 76.9 & 81.3 & 83.5 & 84.5 & 84.9 & \multicolumn{1}{|c}{85.5} \\
LoRITa(0.01) & \underline{0.1} & \textbf{0.7} & \textbf{12.7} & 44.9 & 66.8 & 76.9 & 81.2 & 83.4 & 84.5 & 84.8 & \multicolumn{1}{|c}{85.5} \\
LoRITa(0.05) & \underline{0.1} & \textbf{0.7} & 11.5 & 44.6 & 66.7 & 76.8 & 81.2 & 83.4 & 84.4 & 84.8 & \multicolumn{1}{|c}{85.5} \\
LoRITa(0.3) & \textbf{0.2} & 0.4 & 7.5 & 38.8 & 63.9 & 75.8 & 80.7 & 83.0 & 84.3 & 84.5 & \multicolumn{1}{|c}{85.5} \\
LoRITa(1) & \textbf{0.2} & 0.2 & 0.3 & 3.6 & 36.4 & 66.5 & 77.0 & 81.1 & 83.1 & 83.7 & \multicolumn{1}{|c}{85.1} \\
\midrule
SLORR-Nuc(0.00002) & \textbf{0.2} & \underline{0.6} & 9.1 & 42.4 & 65.7 & 76.5 & 81.3 & 83.5 & 84.6 & 84.9 & \multicolumn{1}{|c}{85.7} \\
SLORR-Nuc(0.0002) & \textbf{0.2} & 0.4 & 8.5 & 43.8 & 70.0 & 79.1 & 82.5 & 84.1 & 84.8 & 85.0 & \multicolumn{1}{|c}{85.5} \\
SLORR-Nuc(0.001) & \underline{0.1} & \underline{0.6} & 12.2 & 54.0 & 77.0 & 80.3 & 81.1 & 81.5 & 81.5 & 81.5 & \multicolumn{1}{|c}{81.5} \\
SLORR-Nuc(0.00005) & \textbf{0.2} & \underline{0.6} & 7.8 & 41.5 & 66.6 & 77.1 & 81.6 & 83.8 & 84.7 & 84.9 & \multicolumn{1}{|c}{85.7} \\
\midrule
SLORR-Hoyer(0.00005) & \underline{0.1} & 0.4 & 7.0 & 38.2 & 64.1 & 76.2 & 81.4 & 83.8 & 84.8 & \underline{85.2} & \multicolumn{1}{|c}{\underline{85.8}} \\
SLORR-Hoyer(0.0002) & \underline{0.1} & 0.2 & 7.0 & 42.8 & 65.5 & 77.5 & 82.8 & \underline{84.6} & \textbf{85.3} & \textbf{85.5} & \multicolumn{1}{|c}{\textbf{85.9}} \\
SLORR-Hoyer(0.0005) & \underline{0.1} & 0.2 & 7.9 & 52.2 & 73.4 & 82.0 & \textbf{84.1} & \textbf{84.9} & \underline{85.1} & \underline{85.2} & \multicolumn{1}{|c}{85.4} \\
SLORR-Hoyer(0.001) & \underline{0.1} & 0.2 & 3.3 & \underline{54.7} & \underline{78.0} & \underline{82.4} & \underline{83.6} & 84.1 & 84.3 & 84.3 & \multicolumn{1}{|c}{84.4} \\
SLORR-Hoyer(0.0015) & \underline{0.1} & 0.2 & 5.8 & \textbf{62.1} & \textbf{80.1} & \textbf{82.6} & \underline{83.6} & 83.9 & 84.1 & 84.1 & \multicolumn{1}{|c}{84.1} \\
\bottomrule

\end{tabular}
\end{table}

\begin{table}[h]
\centering
\small
\caption{Complete uniform compression results for ResNet-18 pretraining.}
\label{tab:rn18_scratch_imagenet_acc}
\setlength{\tabcolsep}{3pt}
\begin{tabular}{lccccccccccc}
\toprule
\textbf{Method} & \textbf{14\%} & \textbf{23\%} & \textbf{33\%} & \textbf{42\%} & \textbf{52\%} & \textbf{62\%} & \textbf{71\%} & \textbf{81\%} & \textbf{90\%} & \textbf{95\%} & \multicolumn{1}{|c}{\textbf{Full}} \\
\midrule
Baseline & 0.1 & 0.1 & 0.2 & 0.2 & 1.5 & 4.4 & 19.1 & 25.7 & 45.8 & 48.7 & \multicolumn{1}{|c}{66.2} \\
\midrule
Q3R(1, 0.0025) & 0.1 & 0.2 & 0.9 & 4.6 & 12.8 & 26.9 & 39.4 & 48.4 & 52.2 & 55.4 & \multicolumn{1}{|c}{60.0} \\
Q3R(1, 0.01) & 0.1 & 0.3 & 1.6 & 15.7 & 32.6 & 42.4 & 48.6 & 54.0 & 56.0 & 57.8 & \multicolumn{1}{|c}{62.1} \\
Q3R(5, 0.01) & \textbf{0.3} & \underline{5.3} & 18.7 & 34.3 & 33.2 & 47.0 & 50.2 & 52.7 & 53.4 & 54.2 & \multicolumn{1}{|c}{55.2} \\
Q3R(1, 0.025) & \underline{0.2} & 0.6 & \underline{30.8} & 53.2 & 57.0 & 60.0 & 62.8 & 63.5 & 63.6 & 63.9 & \multicolumn{1}{|c}{64.0} \\
Q3R(5, 0.025) & 0.1 & 0.2 & 26.2 & \textbf{62.1} & \textbf{63.3} & \underline{63.3} & 63.3 & 63.3 & 63.3 & 63.3 & \multicolumn{1}{|c}{63.3} \\
Q3R(1, 0.05) & 0.1 & 0.3 & 1.2 & 17.4 & 42.5 & \textbf{64.0} & \textbf{65.6} & \underline{65.9} & 65.8 & 65.9 & \multicolumn{1}{|c}{65.9} \\
Q3R(2, 0.05) & 0.1 & 0.2 & 0.9 & 18.0 & 46.3 & 61.4 & 64.9 & 65.6 & 65.7 & 65.7 & \multicolumn{1}{|c}{65.7} \\
Q3R(5, 0.05) & 0.1 & 0.2 & 0.5 & 3.5 & 19.4 & 50.0 & 63.7 & 65.4 & 65.8 & 65.9 & \multicolumn{1}{|c}{65.9} \\
Q3R(2, 0.1) & 0.1 & 0.1 & 0.4 & 1.7 & 8.4 & 24.3 & 41.4 & 51.2 & 59.8 & 62.8 & \multicolumn{1}{|c}{66.3} \\
Q3R(0.01, 0.15) & 0.1 & 0.1 & 0.3 & 1.2 & 3.3 & 6.0 & 21.6 & 35.6 & 49.4 & 55.6 & \multicolumn{1}{|c}{66.5} \\
Q3R(2, 0.2) & 0.1 & 0.1 & 0.1 & 0.4 & 0.9 & 2.0 & 15.9 & 28.0 & 40.2 & 46.2 & \multicolumn{1}{|c}{59.8} \\
\midrule
LoRITa(0.0001) & 0.1 & 0.2 & 1.0 & 6.6 & 26.2 & 45.6 & 53.9 & 62.9 & \underline{66.2} & 66.5 & \multicolumn{1}{|c}{67.1} \\
LoRITa(0.001) & 0.1 & 0.2 & 0.6 & 7.9 & 27.0 & 46.9 & 59.9 & 64.0 & 66.0 & 66.5 & \multicolumn{1}{|c}{67.0} \\
LoRITa(0.01) & 0.1 & 0.1 & 0.3 & 2.5 & 22.0 & 43.1 & 56.8 & 64.0 & 65.9 & 66.6 & \multicolumn{1}{|c}{67.3} \\
LoRITa(0.05) & 0.1 & 0.2 & 0.3 & 6.9 & 37.2 & 58.7 & \underline{65.1} & \textbf{67.1} & \textbf{67.5} & \textbf{67.7} & \multicolumn{1}{|c}{\textbf{67.9}} \\
LoRITa(0.3) & 0.1 & 0.1 & 1.2 & 23.9 & 55.5 & 63.2 & 65.0 & 65.8 & 66.0 & 66.0 & \multicolumn{1}{|c}{66.0} \\
LoRITa(0.5) & 0.1 & 0.1 & 1.4 & 43.0 & 55.7 & 63.1 & 64.0 & 64.1 & 64.2 & 64.2 & \multicolumn{1}{|c}{64.2} \\
LoRITa(1) & 0.1 & 0.2 & 3.8 & 38.1 & 48.8 & 59.5 & 60.7 & 60.7 & 60.7 & 60.7 & \multicolumn{1}{|c}{60.8} \\
\midrule
SLORR-Nuc(0.00001) & 0.1 & 0.2 & 0.5 & 1.5 & 5.6 & 16.0 & 25.8 & 51.9 & 59.1 & 62.1 & \multicolumn{1}{|c}{66.6} \\
SLORR-Nuc(0.00005) & 0.1 & 0.3 & 0.4 & 1.6 & 17.6 & 34.8 & 43.5 & 62.2 & 65.8 & 66.6 & \multicolumn{1}{|c}{\textbf{67.9}} \\
SLORR-Nuc(0.0001) & 0.1 & 0.1 & 0.3 & 1.9 & 11.1 & 48.3 & 58.7 & 63.9 & 66.0 & \underline{66.8} & \multicolumn{1}{|c}{\underline{67.8}} \\
SLORR-Nuc(0.0005) & 0.1 & 0.1 & 0.2 & 0.2 & 0.7 & 4.9 & 13.6 & 43.9 & 54.5 & 55.4 & \multicolumn{1}{|c}{60.3} \\
SLORR-Nuc(0.001) & 0.1 & 0.1 & 0.2 & 0.1 & 0.4 & 2.2 & 37.6 & 52.2 & 54.4 & 55.5 & \multicolumn{1}{|c}{56.2} \\
\midrule
SLORR-Hoyer(0.0002) & 0.1 & 0.2 & 0.3 & 0.9 & 3.5 & 17.2 & 37.9 & 48.8 & 58.0 & 61.2 & \multicolumn{1}{|c}{66.5} \\
SLORR-Hoyer(0.0005) & 0.1 & 0.2 & 0.6 & 0.6 & 2.1 & 13.6 & 31.1 & 50.1 & 57.7 & 61.3 & \multicolumn{1}{|c}{66.7} \\
SLORR-Hoyer(0.001) & 0.1 & 0.2 & 0.1 & 0.6 & 6.2 & 13.8 & 36.4 & 53.5 & 57.9 & 61.4 & \multicolumn{1}{|c}{66.4} \\
SLORR-Hoyer(0.002) & 0.1 & 0.1 & 0.1 & 0.4 & 0.8 & 11.3 & 30.0 & 43.5 & 54.9 & 59.0 & \multicolumn{1}{|c}{66.2} \\
SLORR-Hoyer(0.005) & 0.1 & 0.1 & 0.1 & 0.3 & 1.5 & 7.4 & 23.6 & 46.8 & 58.6 & 62.2 & \multicolumn{1}{|c}{65.2} \\
SLORR-Hoyer(0.007) & 0.1 & 0.1 & 0.1 & 0.2 & 0.8 & 5.2 & 36.1 & 54.5 & 58.8 & 61.3 & \multicolumn{1}{|c}{64.4} \\
SLORR-Hoyer(0.01) & 0.1 & 0.2 & 0.2 & 0.1 & 0.7 & 8.5 & 27.4 & 40.8 & 57.0 & 60.9 & \multicolumn{1}{|c}{63.3} \\
\midrule
SLORR-Hoyer-D(0.05) & 0.1 & 0.2 & 0.5 & 1.0 & 1.6 & 3.0 & 8.3 & 23.8 & 44.6 & 53.9 & \multicolumn{1}{|c}{66.3} \\
SLORR-Hoyer-D(0.1) & 0.1 & 0.1 & 0.3 & 1.5 & 3.8 & 14.6 & 31.3 & 49.3 & 58.2 & 61.0 & \multicolumn{1}{|c}{66.2} \\
SLORR-Hoyer-D(0.3) & 0.1 & 0.5 & 1.3 & 4.2 & 10.6 & 32.9 & 49.8 & 58.2 & 62.5 & 63.7 & \multicolumn{1}{|c}{65.7} \\
SLORR-Hoyer-D(0.5) & 0.1 & 0.2 & 1.2 & 3.3 & 19.8 & 46.9 & 55.6 & 60.8 & 63.6 & 64.3 & \multicolumn{1}{|c}{65.9} \\
SLORR-Hoyer-D(1) & 0.1 & 0.2 & 3.9 & 26.7 & 45.5 & 54.9 & 60.1 & 62.4 & 63.9 & 64.5 & \multicolumn{1}{|c}{65.1} \\
SLORR-Hoyer-D(5) & 0.1 & 0.6 & 28.3 & 53.0 & \underline{58.9} & 61.1 & 61.6 & 61.9 & 61.9 & 61.9 & \multicolumn{1}{|c}{61.9} \\
SLORR-Hoyer-D(10) & 0.1 & \textbf{8.8} & \textbf{38.8} & \underline{54.9} & 58.7 & 59.3 & 59.4 & 59.5 & 59.6 & 59.6 & \multicolumn{1}{|c}{59.6} \\
\bottomrule

\end{tabular}
\end{table}

\begin{table}[h]
\centering
\small
\caption{Complete uniform compression results for ViT-B/16 continued training.}
\label{tab:vitb16_imagenet_acc}
\setlength{\tabcolsep}{3pt}
\begin{tabular}{lccccccccccc}
\toprule
\textbf{Method} & \textbf{11\%} & \textbf{21\%} & \textbf{31\%} & \textbf{41\%} & \textbf{51\%} & \textbf{60\%} & \textbf{70\%} & \textbf{80\%} & \textbf{90\%} & \textbf{95\%} & \multicolumn{1}{|c}{\textbf{Full}} \\
\midrule
Baseline$^{\diamond}$ & \underline{0.1} & 0.1 & 0.1 & 0.7 & 12.1 & 46.5 & 67.0 & 76.8 & 80.9 & 81.9 & \multicolumn{1}{|c}{\textbf{84.9}} \\
Baseline$^{\dagger}$ & \underline{0.1} & 0.1 & 0.2 & 1.1 & 17.8 & 52.4 & 69.4 & 77.3 & 80.7 & 81.6 & \multicolumn{1}{|c}{84.1} \\
\midrule
Q3R(1, 0.05)$^{\diamond}$ & \underline{0.1} & 0.1 & 0.1 & 0.1 & 0.2 & 5.1 & 36.0 & 65.6 & 75.5 & 78.0 & \multicolumn{1}{|c}{84.4} \\
Q3R(2, 0.05)$^{\diamond}$ & \underline{0.1} & 0.1 & 0.1 & 0.2 & 0.1 & 0.1 & 0.1 & 0.1 & 0.1 & 0.2 & \multicolumn{1}{|c}{83.4} \\
Q3R(3, 0.05)$^{\diamond}$ & \underline{0.1} & 0.1 & 0.2 & 0.1 & 0.1 & 0.1 & 0.1 & 0.1 & 0.1 & 0.1 & \multicolumn{1}{|c}{81.8} \\
Q3R(5, 0.05)$^{\diamond}$ & \underline{0.1} & 0.1 & 0.1 & 0.1 & 0.7 & 3.8 & 17.0 & 45.1 & 53.8 & 58.9 & \multicolumn{1}{|c}{76.6} \\
Q3R(1, 0.1)$^{\diamond}$ & \underline{0.1} & 0.1 & 0.1 & 0.1 & 0.9 & 25.9 & 57.4 & 73.4 & 79.0 & 80.5 & \multicolumn{1}{|c}{84.5} \\
Q3R(2, 0.1)$^{\diamond}$ & \underline{0.1} & 0.1 & 0.1 & 0.2 & 0.4 & 9.2 & 34.2 & 50.8 & 66.1 & 70.9 & \multicolumn{1}{|c}{83.9} \\
Q3R(3, 0.1)$^{\diamond}$ & \underline{0.1} & 0.1 & 0.1 & 0.2 & 0.4 & 2.2 & 1.5 & 10.2 & 37.1 & 58.2 & \multicolumn{1}{|c}{83.1} \\
Q3R(5, 0.1)$^{\diamond}$ & \underline{0.1} & 0.1 & 0.1 & 0.1 & 9.3 & 53.4 & 73.3 & 77.3 & 78.7 & 79.1 & \multicolumn{1}{|c}{81.1} \\
Q3R(0.01, 0.15)$^{\diamond}$ & \underline{0.1} & 0.1 & 0.1 & 0.6 & 12.3 & 47.0 & 67.3 & 76.7 & 80.8 & 81.9 & \multicolumn{1}{|c}{\textbf{84.9}} \\
Q3R(1, 0.15)$^{\diamond}$ & \underline{0.1} & 0.1 & 0.1 & 0.1 & 3.6 & 41.1 & 65.9 & 76.5 & 80.7 & 81.8 & \multicolumn{1}{|c}{84.7} \\
Q3R(3, 0.15)$^{\diamond}$ & \underline{0.1} & 0.1 & 0.1 & 0.2 & 1.7 & 41.8 & 51.5 & 70.6 & 77.2 & 79.8 & \multicolumn{1}{|c}{83.8} \\
Q3R(2, 0.2)$^{\diamond}$ & \underline{0.1} & 0.1 & 0.1 & 0.1 & 3.4 & 44.8 & 69.0 & 78.4 & 81.7 & 82.5 & \multicolumn{1}{|c}{84.5} \\
Q3R(3, 0.2)$^{\diamond}$ & \underline{0.1} & 0.1 & 0.1 & 0.2 & 3.1 & 45.4 & 70.7 & 78.8 & 81.9 & 82.6 & \multicolumn{1}{|c}{84.2} \\
Q3R(3, 0.05)$^{\dagger}$ & \underline{0.1} & 0.1 & 70.8 & \underline{76.0} & 77.4 & 77.9 & 78.2 & 78.4 & 78.5 & 78.5 & \multicolumn{1}{|c}{78.6} \\
Q3R(5, 0.05)$^{\dagger}$ & \underline{0.1} & 2.3 & \underline{75.1} & \textbf{76.3} & 76.9 & 77.2 & 77.3 & 77.3 & 77.3 & 77.4 & \multicolumn{1}{|c}{77.4} \\
Q3R(8, 0.05)$^{\dagger}$ & \underline{0.1} & \underline{29.6} & \textbf{75.2} & \underline{76.0} & 76.1 & 76.2 & 76.3 & 76.3 & 76.3 & 76.3 & \multicolumn{1}{|c}{76.3} \\
Q3R(10, 0.05)$^{\dagger}$ & \underline{0.1} & \textbf{44.5} & 75.0 & 75.5 & 75.7 & 75.8 & 75.9 & 75.9 & 75.9 & 75.9 & \multicolumn{1}{|c}{75.9} \\
Q3R(3, 0.1)$^{\dagger}$ & \underline{0.1} & 0.1 & 0.3 & 65.4 & \underline{80.9} & \textbf{81.3} & 81.5 & 81.6 & 81.7 & 81.8 & \multicolumn{1}{|c}{81.8} \\
Q3R(5, 0.1)$^{\dagger}$ & \underline{0.1} & 0.1 & 3.3 & 71.9 & \textbf{81.0} & \underline{81.2} & 81.3 & 81.4 & 81.4 & 81.4 & \multicolumn{1}{|c}{81.4} \\
Q3R(3, 0.15)$^{\dagger}$ & \underline{0.1} & 0.1 & 0.2 & 21.4 & 71.9 & 80.6 & \textbf{82.5} & \underline{82.7} & 82.7 & 82.8 & \multicolumn{1}{|c}{82.8} \\
Q3R(5, 0.15)$^{\dagger}$ & \underline{0.1} & 0.1 & 0.3 & 34.5 & 74.6 & 81.1 & \textbf{82.5} & \underline{82.7} & 82.7 & 82.8 & \multicolumn{1}{|c}{82.8} \\
Q3R(3, 0.2)$^{\dagger}$ & \underline{0.1} & 0.1 & 0.2 & 4.9 & 57.8 & 75.6 & 80.7 & 82.3 & 83.1 & 83.2 & \multicolumn{1}{|c}{83.3} \\
Q3R(5, 0.2)$^{\dagger}$ & \underline{0.1} & 0.1 & 0.3 & 8.7 & 61.7 & 76.7 & 81.1 & 82.6 & 83.2 & 83.3 & \multicolumn{1}{|c}{83.3} \\
\midrule
LoRITa(0.0001)$^{\diamond}$ & \underline{0.1} & 0.1 & 0.2 & 1.0 & 17.8 & 52.2 & 70.0 & 77.7 & 81.2 & 82.2 & \multicolumn{1}{|c}{84.7} \\
LoRITa(0.001)$^{\diamond}$ & \underline{0.1} & 0.1 & 0.2 & 1.0 & 17.8 & 52.2 & 70.0 & 77.7 & 81.2 & 82.2 & \multicolumn{1}{|c}{84.7} \\
LoRITa(0.01)$^{\diamond}$ & \underline{0.1} & 0.1 & 0.1 & 1.0 & 17.2 & 51.9 & 70.1 & 77.9 & 81.3 & 82.1 & \multicolumn{1}{|c}{\underline{84.8}} \\
LoRITa(0.05)$^{\diamond}$ & \underline{0.1} & 0.1 & 0.1 & 1.1 & 17.7 & 52.6 & 70.3 & 77.8 & 81.2 & 82.2 & \multicolumn{1}{|c}{\underline{84.8}} \\
LoRITa(0.3)$^{\diamond}$ & \underline{0.1} & 0.1 & 0.1 & 0.6 & 12.3 & 48.4 & 68.3 & 77.0 & 80.8 & 81.9 & \multicolumn{1}{|c}{84.7} \\
LoRITa(0.5)$^{\diamond}$ & \underline{0.1} & 0.1 & 0.2 & 0.4 & 5.9 & 40.6 & 65.1 & 75.9 & 80.2 & 81.4 & \multicolumn{1}{|c}{84.6} \\
LoRITa(1)$^{\diamond}$ & \underline{0.1} & 0.1 & 0.1 & 0.1 & 0.2 & 8.3 & 42.7 & 68.1 & 76.2 & 78.7 & \multicolumn{1}{|c}{84.4} \\
LoRITa(0.0001)$^{\dagger}$ & \underline{0.1} & 0.2 & 1.0 & 15.4 & 47.1 & 67.7 & 76.4 & 80.2 & 81.9 & 82.6 & \multicolumn{1}{|c}{84.0} \\
LoRITa(0.001)$^{\dagger}$ & \underline{0.1} & 0.1 & 0.9 & 13.2 & 47.2 & 67.6 & 76.2 & 80.1 & 81.9 & 82.6 & \multicolumn{1}{|c}{84.0} \\
LoRITa(0.01)$^{\dagger}$ & \underline{0.1} & 0.1 & 1.1 & 14.0 & 47.4 & 68.0 & 76.5 & 80.0 & 81.8 & 82.5 & \multicolumn{1}{|c}{84.1} \\
LoRITa(0.05)$^{\dagger}$ & \underline{0.1} & 0.1 & 0.7 & 15.4 & 48.3 & 68.5 & 76.6 & 80.2 & 82.0 & 82.6 & \multicolumn{1}{|c}{84.0} \\
LoRITa(0.3)$^{\dagger}$ & \underline{0.1} & 0.1 & 0.2 & 3.3 & 35.8 & 64.9 & 75.0 & 79.5 & 81.5 & 82.2 & \multicolumn{1}{|c}{83.7} \\
LoRITa(0.5)$^{\dagger}$ & \underline{0.1} & 0.1 & 0.2 & 0.6 & 10.7 & 47.7 & 70.0 & 77.2 & 80.2 & 81.4 & \multicolumn{1}{|c}{83.5} \\
LoRITa(1)$^{\dagger}$ & \underline{0.1} & 0.1 & 0.1 & 0.1 & 0.2 & 0.2 & 1.4 & 4.5 & 11.6 & 15.4 & \multicolumn{1}{|c}{81.8} \\
\midrule
SLORR-Nuc(0.001)$^{\diamond}$ & \underline{0.1} & 0.1 & 0.2 & 1.2 & 30.6 & 64.9 & 75.5 & 79.9 & 81.7 & 82.1 & \multicolumn{1}{|c}{82.8} \\
SLORR-Nuc(0.002)$^{\diamond}$ & \underline{0.1} & 0.1 & 0.2 & 1.1 & 36.9 & 68.8 & 75.9 & 78.0 & 78.9 & 79.1 & \multicolumn{1}{|c}{79.2} \\
SLORR-Nuc(0.003)$^{\diamond}$ & \underline{0.1} & 0.1 & 0.1 & 6.7 & 47.8 & 69.2 & 73.4 & 74.6 & 75.0 & 75.1 & \multicolumn{1}{|c}{75.1} \\
SLORR-Nuc(0.001)$^{\dagger}$ & \underline{0.1} & 0.5 & 66.4 & 73.7 & 74.9 & 75.3 & 75.3 & 75.3 & 75.4 & 75.3 & \multicolumn{1}{|c}{75.3} \\
SLORR-Nuc(0.0002)$^{\dagger}$ & \underline{0.1} & 0.2 & 0.3 & 16.7 & 55.4 & 74.8 & 79.4 & 81.8 & 82.5 & 82.8 & \multicolumn{1}{|c}{83.2} \\
SLORR-Nuc(0.0001)$^{\dagger}$ & \underline{0.1} & 0.1 & 0.2 & 7.6 & 52.1 & 72.1 & 78.5 & 81.3 & 82.6 & 83.0 & \multicolumn{1}{|c}{83.7} \\
SLORR-Nuc(0.0005)$^{\dagger}$ & \underline{0.1} & 0.2 & 4.0 & 63.7 & 77.2 & 79.7 & 80.4 & 80.6 & 80.7 & 80.7 & \multicolumn{1}{|c}{80.7} \\
\midrule
SLORR-Hoyer(0.0002)$^{\diamond}$ & \underline{0.1} & 0.1 & 0.1 & 1.1 & 21.3 & 58.2 & 74.1 & 80.4 & 82.4 & 83.2 & \multicolumn{1}{|c}{\underline{84.8}} \\
SLORR-Hoyer(0.0005)$^{\diamond}$ & \textbf{0.2} & 0.1 & 0.1 & 1.4 & 24.9 & 67.1 & 78.9 & 82.0 & \textbf{83.4} & \textbf{83.8} & \multicolumn{1}{|c}{84.4} \\
SLORR-Hoyer(0.001)$^{\diamond}$ & \underline{0.1} & 0.1 & 0.1 & 1.3 & 32.3 & 75.3 & 81.3 & \textbf{82.8} & \textbf{83.4} & \underline{83.5} & \multicolumn{1}{|c}{83.5} \\
SLORR-Hoyer(0.002)$^{\diamond}$ & \underline{0.1} & 0.1 & 0.1 & 6.6 & 66.2 & 79.5 & 81.1 & 81.5 & 81.7 & 81.6 & \multicolumn{1}{|c}{81.7} \\
SLORR-Hoyer(0.003)$^{\diamond}$ & \underline{0.1} & 0.1 & 0.1 & 34.7 & 74.7 & 78.3 & 79.2 & 79.6 & 79.6 & 79.6 & \multicolumn{1}{|c}{79.8} \\
SLORR-Hoyer(0.005)$^{\diamond}$ & \underline{0.1} & 0.2 & 1.8 & 61.5 & 73.0 & 75.4 & 75.9 & 75.9 & 76.0 & 76.0 & \multicolumn{1}{|c}{76.1} \\
SLORR-Hoyer(0.0002)$^{\dagger}$ & \underline{0.1} & 0.1 & 0.1 & 0.7 & 10.1 & 67.9 & 80.8 & \textbf{82.8} & \underline{83.3} & \underline{83.5} & \multicolumn{1}{|c}{83.8} \\
SLORR-Hoyer(0.0005)$^{\dagger}$ & \underline{0.1} & 0.1 & 0.1 & 6.7 & 70.6 & 80.6 & \underline{82.1} & \underline{82.7} & 82.8 & 82.9 & \multicolumn{1}{|c}{83.0} \\
SLORR-Hoyer(0.0007)$^{\dagger}$ & \underline{0.1} & 0.1 & 0.2 & 15.9 & 75.7 & 81.0 & 82.0 & 82.3 & 82.6 & 82.6 & \multicolumn{1}{|c}{82.6} \\
SLORR-Hoyer(0.001)$^{\dagger}$ & \underline{0.1} & 0.1 & 0.1 & 62.3 & 78.7 & 80.8 & 81.5 & 81.7 & 81.9 & 81.9 & \multicolumn{1}{|c}{82.0} \\
SLORR-Hoyer(0.00125)$^{\dagger}$ & \underline{0.1} & 0.1 & 0.3 & 65.8 & 78.8 & 80.4 & 81.1 & 81.3 & 81.3 & 81.4 & \multicolumn{1}{|c}{81.4} \\
\bottomrule

\end{tabular}
\end{table}

\begin{table}[h]
\centering
\small
\caption{Complete uniform compression results for ResNet-50 continued training.}
\label{tab:rn50_imagenet_acc}
\setlength{\tabcolsep}{3pt}
\begin{tabular}{lccccccccccc}
\toprule
\textbf{Method} & \textbf{17\%} & \textbf{26\%} & \textbf{35\%} & \textbf{45\%} & \textbf{54\%} & \textbf{63\%} & \textbf{72\%} & \textbf{81\%} & \textbf{90\%} & \textbf{95\%} & \multicolumn{1}{|c}{\textbf{Full}} \\
\midrule
Baseline & \underline{0.1} & 0.2 & 0.4 & 9.4 & 40.3 & 62.5 & 67.9 & 71.9 & 74.0 & 74.5 & \multicolumn{1}{|c}{\underline{76.1}} \\
\midrule
Q3R(8, 0.0025) & \underline{0.1} & 0.3 & 1.4 & 14.3 & 53.8 & 64.6 & 68.6 & 69.5 & 70.2 & 70.5 & \multicolumn{1}{|c}{71.4} \\
Q3R(20, 0.0025) & \textbf{0.2} & 0.6 & 11.3 & 46.7 & 60.2 & 63.1 & 64.0 & 64.6 & 64.8 & 65.0 & \multicolumn{1}{|c}{65.6} \\
Q3R(8, 0.01) & \underline{0.1} & 1.0 & 3.2 & 36.3 & 60.7 & 68.0 & 69.4 & 70.1 & 70.4 & 70.5 & \multicolumn{1}{|c}{71.1} \\
Q3R(20, 0.01) & \underline{0.1} & \textbf{5.3} & \underline{41.9} & 59.1 & 63.2 & 64.6 & 65.1 & 65.4 & 65.6 & 65.6 & \multicolumn{1}{|c}{65.9} \\
Q3R(5, 0.025) & \underline{0.1} & 0.4 & 7.5 & 49.7 & 65.6 & 70.2 & 71.3 & 71.9 & 72.2 & 72.5 & \multicolumn{1}{|c}{73.1} \\
Q3R(8, 0.025) & \textbf{0.2} & 0.8 & 29.0 & 63.1 & 67.4 & 70.0 & 70.9 & 71.2 & 71.4 & 71.5 & \multicolumn{1}{|c}{71.9} \\
Q3R(1, 0.05) & \underline{0.1} & 0.3 & 0.7 & 25.8 & 58.0 & 69.4 & 72.5 & 73.7 & 74.4 & 74.6 & \multicolumn{1}{|c}{75.3} \\
Q3R(2, 0.05) & \textbf{0.2} & 0.3 & 4.4 & 38.4 & 67.2 & 71.8 & 73.3 & 73.5 & 74.1 & 74.3 & \multicolumn{1}{|c}{74.6} \\
Q3R(3, 0.05) & \textbf{0.2} & 0.4 & 8.1 & 49.9 & 69.9 & 73.0 & 73.5 & 73.7 & 74.2 & 74.2 & \multicolumn{1}{|c}{74.5} \\
Q3R(5, 0.05) & \underline{0.1} & 0.3 & 11.7 & 60.8 & 69.5 & 73.0 & 73.4 & 73.6 & 73.6 & 73.7 & \multicolumn{1}{|c}{74.1} \\
Q3R(8, 0.05) & \textbf{0.2} & 0.5 & 8.5 & 63.7 & 72.5 & 73.2 & 73.4 & 73.5 & 73.5 & 73.6 & \multicolumn{1}{|c}{73.7} \\
Q3R(12, 0.05) & \underline{0.1} & 0.4 & 11.3 & 65.6 & \underline{72.7} & 72.9 & 73.1 & 73.2 & 73.3 & 73.3 & \multicolumn{1}{|c}{73.4} \\
Q3R(20, 0.05) & \underline{0.1} & 0.4 & 9.1 & 63.7 & \underline{72.7} & 72.8 & 72.8 & 72.9 & 72.9 & 72.9 & \multicolumn{1}{|c}{73.0} \\
Q3R(1, 0.1) & \underline{0.1} & 0.2 & 1.0 & 32.5 & 61.9 & 70.8 & 73.4 & 74.6 & 75.0 & \underline{75.3} & \multicolumn{1}{|c}{75.5} \\
Q3R(2, 0.1) & \underline{0.1} & 0.2 & 3.1 & 40.5 & 65.1 & 72.4 & 74.0 & \textbf{74.9} & \textbf{75.2} & \textbf{75.4} & \multicolumn{1}{|c}{75.4} \\
Q3R(0.01, 0.15) & \textbf{0.2} & 0.2 & 0.3 & 10.4 & 43.2 & 62.2 & 67.7 & 71.8 & 73.9 & 74.5 & \multicolumn{1}{|c}{76.0} \\
Q3R(1, 0.15) & \underline{0.1} & 0.2 & 1.2 & 18.6 & 54.2 & 68.5 & 72.4 & 74.5 & 75.0 & 75.2 & \multicolumn{1}{|c}{75.6} \\
Q3R(2, 0.2) & \underline{0.1} & 0.1 & 0.5 & 19.2 & 50.6 & 68.0 & 71.8 & 74.1 & \underline{75.1} & \underline{75.3} & \multicolumn{1}{|c}{75.6} \\
\midrule
LoRITa(0.00001) & \textbf{0.2} & 0.2 & 0.4 & 7.0 & 37.6 & 61.9 & 67.5 & 71.7 & 73.8 & 74.3 & \multicolumn{1}{|c}{76.0} \\
LoRITa(0.0001) & \textbf{0.2} & 0.2 & 0.4 & 7.0 & 37.6 & 61.9 & 67.5 & 71.7 & 73.8 & 74.3 & \multicolumn{1}{|c}{76.0} \\
LoRITa(0.001) & \underline{0.1} & 0.2 & 0.4 & 6.6 & 39.8 & 61.5 & 67.5 & 71.8 & 73.9 & 74.5 & \multicolumn{1}{|c}{\underline{76.1}} \\
LoRITa(0.01) & \textbf{0.2} & 0.2 & 0.4 & 5.9 & 39.5 & 61.8 & 67.3 & 71.5 & 73.8 & 74.5 & \multicolumn{1}{|c}{76.0} \\
LoRITa(0.05) & \textbf{0.2} & 0.2 & 0.3 & 5.9 & 40.3 & 61.7 & 67.5 & 71.7 & 74.0 & 74.7 & \multicolumn{1}{|c}{\textbf{76.2}} \\
LoRITa(0.3) & \underline{0.1} & 0.2 & 0.4 & 6.0 & 38.8 & 61.5 & 67.5 & 71.5 & 73.8 & 74.4 & \multicolumn{1}{|c}{\underline{76.1}} \\
LoRITa(0.5) & \textbf{0.2} & 0.1 & 0.4 & 6.6 & 38.5 & 61.2 & 67.8 & 71.7 & 73.9 & 74.3 & \multicolumn{1}{|c}{\underline{76.1}} \\
\midrule
SLORR-Nuc(0.0002) & \underline{0.1} & 0.2 & 0.5 & 13.0 & 52.8 & 67.0 & 71.5 & 74.0 & 75.0 & 75.2 & \multicolumn{1}{|c}{75.7} \\
SLORR-Nuc(0.0005) & \textbf{0.2} & 0.1 & 0.2 & 20.5 & 64.0 & 72.1 & \underline{74.3} & \textbf{74.9} & \underline{75.1} & 75.2 & \multicolumn{1}{|c}{75.3} \\
SLORR-Nuc(0.001) & \underline{0.1} & 0.1 & 0.2 & 37.1 & 71.6 & \textbf{74.0} & \textbf{74.5} & 74.6 & 74.6 & 74.6 & \multicolumn{1}{|c}{74.7} \\
SLORR-Nuc(0.002) & \underline{0.1} & 0.1 & 0.1 & 58.1 & \textbf{72.9} & 73.2 & 73.3 & 73.3 & 73.3 & 73.3 & \multicolumn{1}{|c}{73.3} \\
SLORR-Nuc(0.003) & \underline{0.1} & 0.2 & 1.0 & 62.0 & 71.8 & 72.0 & 72.0 & 72.0 & 72.0 & 72.0 & \multicolumn{1}{|c}{72.0} \\
SLORR-Nuc(0.005) & \underline{0.1} & 0.2 & 1.9 & 66.2 & 69.4 & 69.6 & 69.6 & 69.6 & 69.6 & 69.6 & \multicolumn{1}{|c}{69.6} \\
\midrule
SLORR-Hoyer(0.0002) & \underline{0.1} & 0.2 & 0.4 & 23.4 & 56.8 & 69.2 & 73.4 & \underline{74.7} & \textbf{75.2} & \textbf{75.4} & \multicolumn{1}{|c}{75.5} \\
SLORR-Hoyer(0.0005) & \underline{0.1} & 0.2 & 0.6 & 44.1 & 68.2 & 72.5 & 74.1 & 74.5 & 74.6 & 74.6 & \multicolumn{1}{|c}{74.6} \\
SLORR-Hoyer(0.001) & \underline{0.1} & 0.2 & 1.6 & 46.8 & 69.9 & 72.8 & 73.5 & 73.6 & 73.7 & 73.7 & \multicolumn{1}{|c}{73.6} \\
SLORR-Hoyer(0.002) & \underline{0.1} & 0.1 & 0.9 & 55.5 & 70.7 & 72.0 & 72.2 & 72.4 & 72.3 & 72.3 & \multicolumn{1}{|c}{72.4} \\
SLORR-Hoyer(0.003) & \underline{0.1} & 0.1 & 1.5 & 61.1 & 70.6 & 71.2 & 71.2 & 71.3 & 71.3 & 71.3 & \multicolumn{1}{|c}{71.3} \\
SLORR-Hoyer(0.005) & \underline{0.1} & 0.1 & 8.2 & 68.1 & 69.7 & 69.8 & 69.9 & 69.9 & 69.9 & 69.9 & \multicolumn{1}{|c}{69.9} \\
\midrule
SLORR-Hoyer-D(0.1) & \textbf{0.2} & 0.2 & 0.6 & 17.5 & 49.8 & 67.7 & 71.9 & 74.1 & \underline{75.1} & \textbf{75.4} & \multicolumn{1}{|c}{75.9} \\
SLORR-Hoyer-D(0.25) & \underline{0.1} & 0.3 & 0.6 & 29.0 & 59.3 & 71.7 & 74.1 & \textbf{74.9} & \textbf{75.2} & \textbf{75.4} & \multicolumn{1}{|c}{75.5} \\
SLORR-Hoyer-D(0.5) & \underline{0.1} & 0.4 & 0.7 & 45.0 & 69.5 & \underline{73.7} & \underline{74.3} & 74.4 & 74.5 & 74.4 & \multicolumn{1}{|c}{74.5} \\
SLORR-Hoyer-D(1) & \textbf{0.2} & 0.6 & 16.6 & 68.1 & 72.4 & 73.0 & 73.0 & 73.0 & 73.0 & 73.0 & \multicolumn{1}{|c}{73.0} \\
SLORR-Hoyer-D(1.5) & \underline{0.1} & 0.7 & 32.4 & \textbf{69.2} & 70.9 & 70.9 & 70.9 & 70.9 & 70.9 & 70.9 & \multicolumn{1}{|c}{70.9} \\
SLORR-Hoyer-D(2) & \underline{0.1} & \underline{1.1} & \textbf{46.5} & \underline{68.2} & 68.5 & 68.5 & 68.5 & 68.6 & 68.6 & 68.5 & \multicolumn{1}{|c}{68.6} \\
\bottomrule

\end{tabular}
\end{table}

\clearpage

\section{Overhead on Vision Models}\label{sec:app_overhead_details}
For our main experiments on ResNet-50 and ViT-B/16, we used NVIDIA A100-80GB GPUs. For the ViT-L/16 experiments, we used B200 GPUs. For the ResNet-18 experiments, we used NVIDIA A100-40GB GPUs. To obtain controlled overhead measurements, we do not rely on end-to-end times from full training runs. Instead, our procedure is as follows. For each model, we conduct 10 short benchmark runs on the same hardware as in the main experiments. For each run, we first perform 10 warmup training steps and then measure time and peak memory over 300 training steps under the same setup as in our training runs. The measurement takes every part of training into account: data loading and augmentation, transfer to the device, training steps, regularization, etc. In the case of Q3R, this includes periodic SVDs. This does not include, for example, evaluation or checkpointing. Each benchmark run repeats this for all methods, including the unregularized baseline. We also tried using autotuning during the Polar Express compilation step, but this always yielded worse performance, so we do not use it.

For SLORR, we also vary the number of Polar Express steps; for Q3R, we vary the refresh interval and the target rank.

For each run, we compute the normalized time as the time taken with a given method or regularizer divided by the time taken with no regularizer in the same run. More precisely, if $T_{m,i}$ is the time taken by method $m$ in run $i$, measured over the 300 timed training steps, and $T_{\mathrm{base},i}$ is the corresponding time for the unregularized baseline, then
\[
\mathrm{NormTime}_{m,i} = \frac{T_{m,i}}{T_{\mathrm{base},i}}.
\]
Thus, a normalized time of $\times 1.037$ means that the method takes $3.7\%$ more time than unregularized training in that setting. Normalized memory is measured similarly, using peak memory instead of time. We show the means and standard deviations of both the normalized and raw measurements in \cref{tab:efficiency_measures_per_method}.

\begin{table}[h]
\centering
\scriptsize
\setlength{\tabcolsep}{6pt}
\caption{\textbf{Training overhead.} Average wall-clock time and peak memory per 300 optimization steps. Means and standard deviations over 10 jobs for each method are reported.}
\label{tab:efficiency_measures_per_method}
\begin{tabular}{llrrrr}
\toprule
Model & Method & Time/300 steps (s) & Peak Mem (GB) & Norm. Time & Norm. Mem \\
\midrule
\multirow{16}{*}{ViT-B/16} & Baseline & 258.975 $\pm$ 6.500 & 71.584 $\pm$ 0.001 & $\times1.000\pm0.000$ & $\times1.000\pm0.000$ \\
 & SLORR-Hoyer (steps=6) & 268.447 $\pm$ 6.470 & 71.923 $\pm$ 0.000 & $\times1.037\pm0.003$ & $\times1.005\pm0.000$ \\
 & SLORR-Hoyer (steps=8) & 271.520 $\pm$ 6.112 & 71.926 $\pm$ 0.000 & $\times1.049\pm0.004$ & $\times1.005\pm0.000$ \\
 & SLORR-Hoyer (steps=10) & 274.345 $\pm$ 6.506 & 71.924 $\pm$ 0.001 & $\times1.059\pm0.003$ & $\times1.005\pm0.000$ \\
 & SLORR-Hoyer-D (steps=6) & 274.687 $\pm$ 6.255 & 71.582 $\pm$ 0.002 & $\times1.061\pm0.003$ & $\times1.000\pm0.000$ \\
 & SLORR-Hoyer-D (steps=8) & 277.429 $\pm$ 6.678 & 71.582 $\pm$ 0.002 & $\times1.071\pm0.003$ & $\times1.000\pm0.000$ \\
 & SLORR-Hoyer-D (steps=10) & 280.014 $\pm$ 6.140 & 71.582 $\pm$ 0.002 & $\times1.081\pm0.005$ & $\times1.000\pm0.000$ \\
 & LoRITa & 260.670 $\pm$ 6.372 & 72.153 $\pm$ 0.001 & $\times1.007\pm0.002$ & $\times1.008\pm0.000$ \\
 & Q3R (interval=5, rank=0.01) & 426.005 $\pm$ 6.311 & 72.729 $\pm$ 0.001 & $\times1.645\pm0.016$ & $\times1.016\pm0.000$ \\
 & Q3R (interval=5, rank=0.05) & 426.984 $\pm$ 6.009 & 72.728 $\pm$ 0.002 & $\times1.649\pm0.019$ & $\times1.016\pm0.000$ \\
 & Q3R (interval=5, rank=0.1) & 428.857 $\pm$ 6.109 & 72.729 $\pm$ 0.001 & $\times1.656\pm0.018$ & $\times1.016\pm0.000$ \\
 & Q3R (interval=5, rank=0.15) & 431.551 $\pm$ 6.355 & 72.727 $\pm$ 0.002 & $\times1.667\pm0.018$ & $\times1.016\pm0.000$ \\
 & Q3R (interval=20, rank=0.1) & 310.177 $\pm$ 6.220 & 72.728 $\pm$ 0.002 & $\times1.198\pm0.006$ & $\times1.016\pm0.000$ \\
 & Q3R (interval=50, rank=0.1) & 286.478 $\pm$ 6.248 & 72.726 $\pm$ 0.001 & $\times1.106\pm0.004$ & $\times1.016\pm0.000$ \\
 & Q3R (interval=100, rank=0.1) & 278.859 $\pm$ 6.291 & 72.725 $\pm$ 0.001 & $\times1.077\pm0.004$ & $\times1.016\pm0.000$ \\
 & Q3R (interval=300, rank=0.1) & 273.238 $\pm$ 6.271 & 72.724 $\pm$ 0.001 & $\times1.055\pm0.003$ & $\times1.016\pm0.000$ \\
\midrule
\multirow{16}{*}{ResNet-50} & Baseline & 238.173 $\pm$ 5.457 & 46.221 $\pm$ 0.001 & $\times1.000\pm0.000$ & $\times1.000\pm0.000$ \\
 & SLORR-Hoyer (steps=6) & 249.511 $\pm$ 5.716 & 46.316 $\pm$ 0.002 & $\times1.048\pm0.003$ & $\times1.002\pm0.000$ \\
 & SLORR-Hoyer (steps=8) & 252.558 $\pm$ 5.616 & 46.315 $\pm$ 0.001 & $\times1.060\pm0.002$ & $\times1.002\pm0.000$ \\
 & SLORR-Hoyer (steps=10) & 256.021 $\pm$ 5.647 & 46.315 $\pm$ 0.001 & $\times1.075\pm0.004$ & $\times1.002\pm0.000$ \\
 & SLORR-Hoyer-D (steps=6) & 256.658 $\pm$ 5.717 & 46.221 $\pm$ 0.001 & $\times1.078\pm0.004$ & $\times1.000\pm0.000$ \\
 & SLORR-Hoyer-D (steps=8) & 260.724 $\pm$ 5.618 & 46.221 $\pm$ 0.001 & $\times1.095\pm0.006$ & $\times1.000\pm0.000$ \\
 & SLORR-Hoyer-D (steps=10) & 264.580 $\pm$ 6.477 & 46.222 $\pm$ 0.001 & $\times1.111\pm0.016$ & $\times1.000\pm0.000$ \\
 & LoRITa & 239.302 $\pm$ 5.506 & 46.350 $\pm$ 0.001 & $\times1.005\pm0.006$ & $\times1.003\pm0.000$ \\
 & Q3R (interval=5, rank=0.01) & 291.274 $\pm$ 5.665 & 46.532 $\pm$ 0.003 & $\times1.223\pm0.007$ & $\times1.007\pm0.000$ \\
 & Q3R (interval=5, rank=0.05) & 291.162 $\pm$ 5.735 & 46.533 $\pm$ 0.002 & $\times1.223\pm0.008$ & $\times1.007\pm0.000$ \\
 & Q3R (interval=5, rank=0.1) & 290.995 $\pm$ 5.594 & 46.531 $\pm$ 0.002 & $\times1.222\pm0.006$ & $\times1.007\pm0.000$ \\
 & Q3R (interval=5, rank=0.15) & 290.084 $\pm$ 5.973 & 46.531 $\pm$ 0.001 & $\times1.218\pm0.012$ & $\times1.007\pm0.000$ \\
 & Q3R (interval=20, rank=0.1) & 258.970 $\pm$ 5.592 & 46.530 $\pm$ 0.002 & $\times1.087\pm0.002$ & $\times1.007\pm0.000$ \\
 & Q3R (interval=50, rank=0.1) & 253.010 $\pm$ 5.494 & 46.530 $\pm$ 0.002 & $\times1.062\pm0.002$ & $\times1.007\pm0.000$ \\
 & Q3R (interval=100, rank=0.1) & 251.038 $\pm$ 5.510 & 46.529 $\pm$ 0.001 & $\times1.054\pm0.006$ & $\times1.007\pm0.000$ \\
 & Q3R (interval=300, rank=0.1) & 249.735 $\pm$ 5.463 & 46.527 $\pm$ 0.001 & $\times1.049\pm0.005$ & $\times1.007\pm0.000$ \\
\midrule
\multirow{16}{*}{ViT-L/16} & Baseline & 192.797 $\pm$ 1.644 & 164.376 $\pm$ 0.000 & $\times1.000\pm0.000$ & $\times1.000\pm0.000$ \\
 & SLORR-Hoyer (steps=6) & 203.345 $\pm$ 1.227 & 165.582 $\pm$ 0.001 & $\times1.055\pm0.003$ & $\times1.007\pm0.000$ \\
 & SLORR-Hoyer (steps=8) & 205.978 $\pm$ 1.196 & 165.583 $\pm$ 0.001 & $\times1.068\pm0.004$ & $\times1.007\pm0.000$ \\
 & SLORR-Hoyer (steps=10) & 208.406 $\pm$ 1.064 & 165.582 $\pm$ 0.001 & $\times1.081\pm0.005$ & $\times1.007\pm0.000$ \\
 & SLORR-Hoyer-D (steps=6) & 207.288 $\pm$ 1.125 & 164.375 $\pm$ 0.001 & $\times1.075\pm0.004$ & $\times1.000\pm0.000$ \\
 & SLORR-Hoyer-D (steps=8) & 210.050 $\pm$ 1.210 & 164.375 $\pm$ 0.001 & $\times1.090\pm0.004$ & $\times1.000\pm0.000$ \\
 & SLORR-Hoyer-D (steps=10) & 212.514 $\pm$ 1.081 & 164.375 $\pm$ 0.001 & $\times1.102\pm0.004$ & $\times1.000\pm0.000$ \\
 & LoRITa & 194.632 $\pm$ 1.502 & 166.391 $\pm$ 0.000 & $\times1.010\pm0.001$ & $\times1.012\pm0.000$ \\
 & Q3R (interval=5, rank=0.01) & 580.708 $\pm$ 1.323 & 168.420 $\pm$ 0.001 & $\times3.012\pm0.023$ & $\times1.025\pm0.000$ \\
 & Q3R (interval=5, rank=0.05) & 585.406 $\pm$ 1.353 & 168.422 $\pm$ 0.001 & $\times3.037\pm0.022$ & $\times1.025\pm0.000$ \\
 & Q3R (interval=5, rank=0.1) & 589.815 $\pm$ 1.282 & 168.421 $\pm$ 0.002 & $\times3.059\pm0.023$ & $\times1.025\pm0.000$ \\
 & Q3R (interval=5, rank=0.15) & 594.638 $\pm$ 1.470 & 168.421 $\pm$ 0.001 & $\times3.084\pm0.022$ & $\times1.025\pm0.000$ \\
 & Q3R (interval=20, rank=0.1) & 307.126 $\pm$ 1.312 & 168.419 $\pm$ 0.001 & $\times1.593\pm0.007$ & $\times1.025\pm0.000$ \\
 & Q3R (interval=50, rank=0.1) & 250.827 $\pm$ 1.454 & 168.418 $\pm$ 0.002 & $\times1.301\pm0.004$ & $\times1.025\pm0.000$ \\
 & Q3R (interval=100, rank=0.1) & 232.288 $\pm$ 1.213 & 168.416 $\pm$ 0.000 & $\times1.205\pm0.006$ & $\times1.025\pm0.000$ \\
 & Q3R (interval=300, rank=0.1) & 219.497 $\pm$ 1.362 & 168.415 $\pm$ 0.002 & $\times1.139\pm0.003$ & $\times1.025\pm0.000$ \\
\midrule
\multirow{16}{*}{ResNet-18} & Baseline & 169.288 $\pm$ 7.488 & 13.146 $\pm$ 0.000 & $\times1.000\pm0.000$ & $\times1.000\pm0.000$ \\
 & SLORR-Hoyer (steps=6) & 169.822 $\pm$ 6.864 & 13.192 $\pm$ 0.000 & $\times1.003\pm0.022$ & $\times1.003\pm0.000$ \\
 & SLORR-Hoyer (steps=8) & 171.127 $\pm$ 8.464 & 13.191 $\pm$ 0.000 & $\times1.011\pm0.025$ & $\times1.003\pm0.000$ \\
 & SLORR-Hoyer (steps=10) & 169.576 $\pm$ 6.405 & 13.191 $\pm$ 0.000 & $\times1.002\pm0.027$ & $\times1.003\pm0.000$ \\
 & SLORR-Hoyer-D (steps=6) & 168.660 $\pm$ 5.460 & 13.144 $\pm$ 0.000 & $\times0.997\pm0.028$ & $\times1.000\pm0.000$ \\
 & SLORR-Hoyer-D (steps=8) & 170.404 $\pm$ 9.644 & 13.144 $\pm$ 0.000 & $\times1.006\pm0.022$ & $\times1.000\pm0.000$ \\
 & SLORR-Hoyer-D (steps=10) & 170.196 $\pm$ 8.941 & 13.144 $\pm$ 0.000 & $\times1.005\pm0.031$ & $\times1.000\pm0.000$ \\
 & LoRITa & 167.907 $\pm$ 4.230 & 13.191 $\pm$ 0.000 & $\times0.993\pm0.032$ & $\times1.003\pm0.000$ \\
 & Q3R (interval=5, rank=0.01) & 174.902 $\pm$ 16.476 & 13.288 $\pm$ 0.001 & $\times1.031\pm0.051$ & $\times1.011\pm0.000$ \\
 & Q3R (interval=5, rank=0.05) & 170.511 $\pm$ 9.189 & 13.289 $\pm$ 0.001 & $\times1.008\pm0.047$ & $\times1.011\pm0.000$ \\
 & Q3R (interval=5, rank=0.1) & 169.531 $\pm$ 5.638 & 13.289 $\pm$ 0.000 & $\times1.002\pm0.022$ & $\times1.011\pm0.000$ \\
 & Q3R (interval=5, rank=0.15) & 170.471 $\pm$ 6.208 & 13.288 $\pm$ 0.000 & $\times1.008\pm0.032$ & $\times1.011\pm0.000$ \\
 & Q3R (interval=20, rank=0.1) & 171.572 $\pm$ 9.018 & 13.289 $\pm$ 0.000 & $\times1.014\pm0.038$ & $\times1.011\pm0.000$ \\
 & Q3R (interval=50, rank=0.1) & 170.862 $\pm$ 7.709 & 13.288 $\pm$ 0.001 & $\times1.010\pm0.026$ & $\times1.011\pm0.000$ \\
 & Q3R (interval=100, rank=0.1) & 171.271 $\pm$ 8.197 & 13.288 $\pm$ 0.000 & $\times1.012\pm0.034$ & $\times1.011\pm0.000$ \\
 & Q3R (interval=300, rank=0.1) & 170.911 $\pm$ 7.846 & 13.288 $\pm$ 0.000 & $\times1.010\pm0.038$ & $\times1.011\pm0.000$ \\
\bottomrule
\end{tabular}
\end{table}

\subsection{Scaling Up with Vision Transformers}\label{sec:app_scaling_vit}

To support our scalability claims, we also measure overhead at different ViT scales. In addition to standard configurations (ViT-T/B/L/H), we include interpolated variants to obtain a smooth overhead curve across model sizes. We use the same settings as in our ViT overhead measurements, but vary the batch size so that all regularizers fit into a B200 GPU, with a maximum of 1024. These models span from the ViT-T scale (on the order of 6M parameters) to ViT-H (over 0.6B parameters). We detail the configurations used in \cref{tab:vit_configs}. We then measure overhead following the protocol described in \cref{sec:app_overhead_details}: normalized time is computed by dividing the measured time of each method by the measured time of the corresponding unregularized run under the same model and benchmark setup, and normalized memory is computed analogously using peak memory. Results are shown in \cref{fig:vit_gradual_overhead}. For Q3R, the target rank is fixed to $0.1$, and different refresh intervals are shown for completeness.

\begin{table}[H]
\centering
\caption{Vision Transformer (ViT) model configurations and parameter counts used in our overhead measurements.}
\label{tab:vit_configs}
\setlength{\tabcolsep}{4pt}
\begin{tabular}{lccccccc}
\toprule
Model &
Patch Size &
Embed Dim &
Depth &
\# Heads &
MLP Ratio &
Batch Size &
Params (M) \\
\midrule
\texttt{vit\_ti}        & 16 & 192  & 12 & 3  & 4.0 & 1024 & 5.72  \\
\texttt{vit\_256}       & 16 & 256  & 12 & 4  & 4.0 & 1024 & 9.98  \\
\texttt{vit\_s}         & 16 & 384  & 12 & 6  & 4.0 & 1024 & 22.05 \\
\texttt{vit\_512}       & 16 & 512  & 12 & 8  & 4.0 & 1024 & 38.84 \\
\texttt{vit\_b}         & 16 & 768  & 12 & 12 & 4.0 & 1024 & 86.57 \\
\texttt{vit\_b24}       & 16 & 768  & 24 & 12 & 4.0 & 1024 & 171.62 \\
\texttt{vit\_960\_24}   & 16 & 960  & 24 & 12 & 4.0 & 1024 & 267.61 \\
\texttt{vit\_l}         & 16 & 1024 & 24 & 16 & 4.0 & 896  & 304.33 \\
\texttt{vit\_1152\_28}  & 16 & 1152 & 28 & 16 & 4.0 & 720  & 448.60 \\
\texttt{vit\_h}         & 16 & 1280 & 32 & 16 & 4.0 & 512  & 632.20 \\
\bottomrule
\end{tabular}
\end{table}

\clearpage

\section{LLM Pretraining Experimental and Implementation Details}\label{sec:app_llm_training_details}

\paragraph{Implementation.} For our LLM pretraining experiments, we use the OLMo \citep{olmo20242olmo2furious} codebase\footnote{\url{https://github.com/allenai/olmo}}. For all training experiments, which use multiple GPUs, we use distributed data parallelism (DDP). Aside from adding the SLORR regularizer and benchmarking utilities, the main codebase remains mostly unchanged. SLORR is implemented as an additional loss term, similar to \cref{lst:lowrank_reg}. To distribute its computational cost, we partition the regularized layers across DDP ranks, so that each rank computes SLORR only for its assigned subset.

This does not introduce any extra communication. Instead, SLORR reuses the gradient synchronization already performed by DDP. Since DDP averages gradients across ranks during its standard all-reduce, each rank scales its locally computed SLORR gradients by the world size before synchronization. After the DDP all-reduce, every regularized parameter therefore receives exactly the gradient it would have obtained if the full SLORR regularizer had been computed redundantly on every rank. In this way, the computational overhead of SLORR is divided across GPUs without any additional communication cost.

\paragraph{Experimental details.} We use a Llama-like architecture \citep{touvron2023llamaopenefficientfoundation} and the T5 tokenizer \citep{JMLR:v21:20-074}, available at \url{https://huggingface.co/google-t5/t5-base}. The results reported use the original \texttt{OLMoLLamaBlock}, which computes attention manually. We note, however, that for the overhead measurements in \cref{sec:app_llm_training_overhead}, we also include results for fused attention implementations to provide a practical overhead estimate. We use the AdamW optimizer with learning rate $10^{-3}$, $\beta_1 = 0.9$, $\beta_2 = 0.95$, and weight decay set to $0.1$. Gradients are clipped using a maximum global norm of 1.0. We use a cosine scheduler with 10\% of the training steps dedicated to a linear warmup from 0 to the base learning rate, which is then decayed to 10\% of the base learning rate. We chose these settings based on common practice for standard training and did not try other settings.

Training is performed on a subsample of the FineWeb-Edu \citep{penedo2024finewebdatasetsdecantingweb} 100BT sample\footnote{\url{https://huggingface.co/datasets/HuggingFaceFW/fineweb-edu/viewer/sample-100BT}}; evaluation and calibration sets (for SVD-LLM) are also selected from this sample but are not included in the training set.

We summarize the size configurations used in \cref{tab:llm_model_configs}. All models use a global batch size of 512 and a sequence length of 1024. The 135M and 560M rows correspond to compute-optimal training budgets. The multipliers indicate the amount of data on which the model was trained relative to compute optimality \citep{hoffmann2022trainingcomputeoptimallargelanguage}. The 135M models and their overtrained versions were trained on 4 H100 GPUs, while the 560M models were trained on 8 H100 GPUs.

We regularize (and factorize after training) every linear layer inside transformer blocks (we exclude the embedding matrix and the final projection layer).

\begin{table}[H]
\centering
\caption{LLM model configurations and training budgets. All models use a global batch size of 512 and sequence length 1024.}
\label{tab:llm_model_configs}
\begin{tabular}{lccccc}
\toprule
Model & Hidden & Intermediate & Heads & Layers & Training steps \\
\midrule
135M & 768 & 2048 & 12 & 12 & 5{,}200 \\
135M$\times$4 & 768 & 2048 & 12 & 12 & 20{,}600 \\
135M$\times$8 & 768 & 2048 & 12 & 12 & 41{,}200 \\
560M & 1280 & 3456 & 20 & 24 & 21{,}400 \\
\bottomrule
\end{tabular}
\end{table}

As in our vision experiments, all runs use a default seed. For transparency, we note that the 135M$\times$8 Llama run with regularization strength $10^{-5}$ encountered a numerical explosion and terminated before completion. We monitored different metrics and did not observe preceding indicators of instability. We believe the crash may have been due to the particular trajectory taken by the model under this seed and hyperparameter setting, rather than due to the regularizer itself. We therefore reran this configuration with another seed and report the completed rerun.

\paragraph{Method hyperparameters. } The range of hyperparameters was not exhaustively explored and was mainly based on early experimentation. With the chosen hyperparameters, training was mostly stable. Among the completed runs, the $\times8$ run with the highest regularization strength reported (see \cref{sec:llm_overtraining}) had a loss spike early in training. This can also destabilize compression, as can be seen in \cref{tab:llama135mx8-plain-ppl}.

\subsection{Compression}

For compression, we adapt the SVD-LLM \citep{wang2025svdllm} codebase\footnote{\url{https://github.com/AIoT-MLSys-Lab/SVD-LLM}} to support OLMo checkpoints and the FineWeb dataset. We substantially simplified the codebase to specialize it for our setting, and also added support for plain SVD compression. For SVD-LLM, we use 256 samples of sequence length 1024 (same as in the training procedure) of held-out FineWeb-Edu data.

For activation-aware (whitened) truncation as used in SVD-LLM, we observe that regularized models often yield rank-deficient activations, meaning that their second-moment matrices are singular. SVD-LLM uses Cholesky-based whitening, which does not support rank-deficient matrices; in practice, this is addressed by adding diagonal shift. We instead modify the codebase to use BALF-like whitening \citep{gonzalezmartinez2025balfbudgetedactivationawarelowrank}, which admits rank-deficient activations in closed form. We did not observe notable end-to-end gains from this modification, though.

\subsection{Evaluation}

After training and compressing each model, we evaluate them according to standard practice. For perplexity, we report results on a held-out set of approximately 2M tokens. For downstream evaluations, we use the \texttt{lm-evaluation-harness} library \citep{eval-harness} throughout and report zero-shot metrics. When available, we use \texttt{acc\_norm}; otherwise, we use the \texttt{acc} field.

\paragraph{Tasks.}
We evaluate on ARC-Easy and ARC-Challenge \citep{clark2018thinksolvedquestionanswering}, HellaSwag \citep{zellers-etal-2019-hellaswag}, LAMBADA \citep{paperno-etal-2016-lambada}, OpenBookQA \citep{mihaylov-etal-2018-suit}, and PIQA \citep{bisk2019piqareasoningphysicalcommonsense}. We also initially explored WinoGrande \citep{sakaguchi2019winograndeadversarialwinogradschema}, but accuracy appeared to be mostly random in all cases, so we did not end up using it.

\section{Beyond Compute-Optimal LLM Pretraining}\label{sec:llm_overtraining}
Here, we report results for 135M models trained for $4\times$ and $8\times $ the compute-optimal token budget \citep{hoffmann2022trainingcomputeoptimallargelanguage}. As noted in the main text, we find that the compressibility of all models, including regularized and unregularized ones, appears to be reduced as they ingest more data. However, in all cases, regularized models are still more compressible.

\begin{figure}[H]
    \centering

    \begin{subfigure}[t]{\linewidth}
        \centering
        \includegraphics[width=\linewidth]{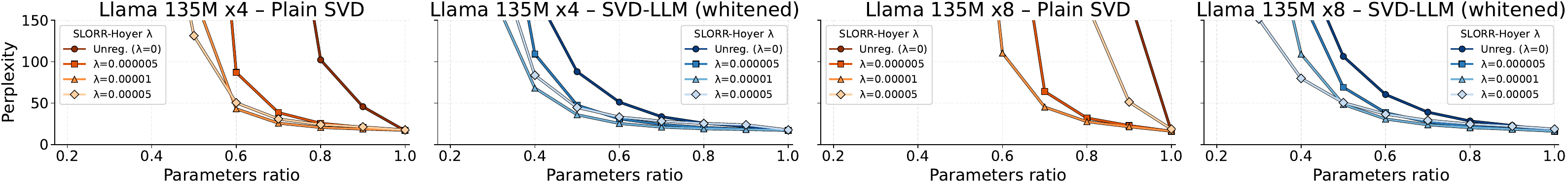}
    \end{subfigure}

    \caption{Llama models trained for $4\times $ and $8\times $ the compute-optimal token budget. Perplexity values are clipped to 150, as in our main text figures. For complete tabular results, see \cref{sec:app_complete_llm_results}.}
    \label{fig:compressibility_llm_longer_training}
\end{figure}

\section{Complete LLM Results}\label{sec:app_complete_llm_results}
In this section, we include tabular results for the LLM figures in the main text. We include one table for perplexity and, for the 560M variants, one table for each downstream task. Each row corresponds to a regularization parameter $\lambda$, including the unregularized baseline $\lambda=0$. The columns denote the retained parameter ratio in the transformer blocks.

\subsection{Llama 135M Results}

{
\setlength{\tabcolsep}{3pt}

\begin{table}[H]
    \centering
    \caption{Llama 135M -- Plain SVD -- FineWeb-Edu perplexity}
    \label{tab:llama135m-plain-ppl}
    \footnotesize
  \begingroup
  \renewenvironment{tabular}[1]{%
    \begin{tabular*}{0.95\textwidth}{@{\extracolsep{\fill}}#table/llama_135m/plain_ppl.tex@{}}%
  }{%
    \end{tabular*}%
  }%
  \begin{tabular}{lrrrrrrrrrr}
\toprule
$\lambda$ & 0.100 & 0.200 & 0.300 & 0.400 & 0.500 & 0.600 & 0.700 & 0.800 & 0.900 & 1.000 \\
\midrule
Unreg. ($\lambda=0$) & 148924.22 & 96542.02 & 94756.84 & 48167.34 & 6204.80 & 745.65 & 120.20 & 43.56 & 30.31 & \textbf{20.03} \\
$\lambda=0.000005$ & 82613.87 & 89934.46 & 121806.95 & 10169.25 & 620.10 & 62.31 & 30.94 & 24.71 & 22.56 & 20.29 \\
$\lambda=0.00001$ & 244959.88 & 126294.20 & 17418.12 & 2114.17 & 101.18 & 31.82 & 24.55 & \textbf{22.72} & \textbf{21.73} & 20.79 \\
$\lambda=0.00005$ & \textbf{19597.55} & \textbf{4195.52} & \textbf{144.97} & \textbf{30.38} & \textbf{25.98} & \textbf{24.73} & \textbf{24.17} & 23.85 & 23.68 & 23.51 \\
\bottomrule
\end{tabular}
  \endgroup

\end{table}

\begin{table}[H]
    \centering
    \caption{Llama 135M -- SVD-LLM (whitened) -- FineWeb-Edu perplexity}
    \label{tab:llama135m-whitening-ppl}
    \footnotesize
  \begingroup
  \renewenvironment{tabular}[1]{%
    \begin{tabular*}{0.95\textwidth}{@{\extracolsep{\fill}}#table/llama_135m/whitening_ppl.tex@{}}%
  }{%
    \end{tabular*}%
  }%
  \begin{tabular}{lrrrrrrrrrr}
\toprule
$\lambda$ & 0.100 & 0.200 & 0.300 & 0.400 & 0.500 & 0.600 & 0.700 & 0.800 & 0.900 & 1.000 \\
\midrule
Unreg. ($\lambda=0$) & 1860.53 & 508.60 & 185.68 & 90.66 & 52.75 & 36.50 & 29.00 & 25.01 & 22.86 & \textbf{20.03} \\
$\lambda=0.000005$ & 4931.33 & 659.68 & 158.93 & 66.69 & 38.35 & 28.79 & 24.71 & 22.65 & 21.52 & 20.29 \\
$\lambda=0.00001$ & 4820.77 & 586.07 & 125.66 & 51.09 & 31.66 & 25.63 & \textbf{23.13} & \textbf{21.93} & \textbf{21.33} & 20.79 \\
$\lambda=0.00005$ & \textbf{1855.93} & \textbf{123.38} & \textbf{38.59} & \textbf{27.87} & \textbf{25.40} & \textbf{24.45} & 24.00 & 23.77 & 23.64 & 23.51 \\
\bottomrule
\end{tabular}
  \endgroup

\end{table}

\begin{table}[H]
    \centering
    \caption{Llama 135M$\times4$ -- Plain SVD -- FineWeb-Edu perplexity}
    \label{tab:llama135mx4-plain-ppl}
    \footnotesize
  \begingroup
  \renewenvironment{tabular}[1]{%
    \begin{tabular*}{0.95\textwidth}{@{\extracolsep{\fill}}#table/llama_135m_x4/plain_ppl.tex@{}}%
  }{%
    \end{tabular*}%
  }%
  \begin{tabular}{lrrrrrrrrrr}
\toprule
$\lambda$ & 0.100 & 0.200 & 0.300 & 0.400 & 0.500 & 0.600 & 0.700 & 0.800 & 0.900 & 1.000 \\
\midrule
Unreg. ($\lambda=0$) & 256446.09 & 163660.19 & 38166.82 & 15480.69 & 9899.26 & 2142.90 & 410.64 & 102.25 & 45.71 & \textbf{16.69} \\
$\lambda=0.000005$ & 120145.84 & 70002.43 & 173048.14 & 26394.15 & 664.70 & 86.99 & 38.58 & 25.66 & 20.70 & 16.98 \\
$\lambda=0.00001$ & 481274.56 & 260275.45 & 547987.81 & 12598.17 & 181.30 & \textbf{43.16} & \textbf{25.81} & \textbf{20.62} & \textbf{18.68} & 17.29 \\
$\lambda=0.00005$ & \textbf{31923.13} & \textbf{9960.79} & \textbf{1046.29} & \textbf{379.80} & \textbf{131.44} & 50.42 & 30.91 & 24.03 & 21.29 & 17.63 \\
\bottomrule
\end{tabular}
  \endgroup

\end{table}

\begin{table}[H]
    \centering
    \caption{Llama 135M$\times4$ -- SVD-LLM (whitened) -- FineWeb-Edu perplexity}
    \label{tab:llama135mx4-whitening-ppl}
    \footnotesize
  \begingroup
  \renewenvironment{tabular}[1]{%
    \begin{tabular*}{0.95\textwidth}{@{\extracolsep{\fill}}#table/llama_135m_x4/whitening_ppl.tex@{}}%
  }{%
    \end{tabular*}%
  }%
  \begin{tabular}{lrrrrrrrrrr}
\toprule
$\lambda$ & 0.100 & 0.200 & 0.300 & 0.400 & 0.500 & 0.600 & 0.700 & 0.800 & 0.900 & 1.000 \\
\midrule
Unreg. ($\lambda=0$) & 33472.52 & 4275.64 & 704.20 & 183.61 & 87.98 & 51.24 & 33.59 & 25.79 & 21.79 & \textbf{16.69} \\
$\lambda=0.000005$ & 10243.44 & 1826.38 & 385.01 & 109.28 & 47.59 & 31.06 & 24.29 & 20.58 & 18.64 & 16.98 \\
$\lambda=0.00001$ & 6497.80 & \textbf{1064.54} & \textbf{181.37} & \textbf{68.22} & \textbf{35.97} & \textbf{25.45} & \textbf{21.04} & \textbf{19.05} & \textbf{18.13} & 17.29 \\
$\lambda=0.00005$ & \textbf{5202.02} & 1210.30 & 241.95 & 83.69 & 44.83 & 33.21 & 28.31 & 25.47 & 23.85 & 17.63 \\
\bottomrule
\end{tabular}
  \endgroup

\end{table}

\begin{table}[H]
    \centering
    \caption{Llama 135M$\times8$ -- Plain SVD -- FineWeb-Edu perplexity}
    \label{tab:llama135mx8-plain-ppl}
    \footnotesize
    \begingroup
    \setlength{\tabcolsep}{0.5pt}
  \begingroup
  \renewenvironment{tabular}[1]{%
    \begin{tabular*}{0.95\textwidth}{@{\extracolsep{\fill}}#table/llama_135m_x8/plain_ppl.tex@{}}%
  }{%
    \end{tabular*}%
  }%
  \begin{tabular}{lrrrrrrrrrr}
\toprule
$\lambda$ & 0.100 & 0.200 & 0.300 & 0.400 & 0.500 & 0.600 & 0.700 & 0.800 & 0.900 & 1.000 \\
\midrule
Unreg. ($\lambda=0$) & 458317.91 & 145537.27 & 46363.39 & 91373.03 & 175450.86 & 96216.00 & 15849.05 & 1432.41 & 176.94 & \textbf{15.85} \\
$\lambda=0.000005$ & 61717.04 & \textbf{24633.72} & 83167.53 & 59771.51 & 21030.25 & 391.34 & 64.09 & 32.02 & 22.94 & 16.06 \\
$\lambda=0.00001$ & 146102.70 & 133060.38 & 37594.02 & \textbf{2437.90} & \textbf{573.78} & \textbf{110.50} & \textbf{45.29} & \textbf{27.67} & \textbf{21.48} & 16.17 \\
$\lambda=0.00005$ & \textbf{30970.36} & 35753.34 & \textbf{21145.99} & 10263.97 & 6197.59 & 1838.47 & 590.69 & 169.88 & 51.43 & 18.67 \\
\bottomrule
\end{tabular}
  \endgroup

    \endgroup
\end{table}

\begin{table}[H]
    \centering
    \caption{Llama 135M$\times8$ -- SVD-LLM (whitened) -- FineWeb-Edu perplexity}
    \label{tab:llama135mx8-whitening-ppl}
    \footnotesize
  \begingroup
  \renewenvironment{tabular}[1]{%
    \begin{tabular*}{0.95\textwidth}{@{\extracolsep{\fill}}#table/llama_135m_x8/whitening_ppl.tex@{}}%
  }{%
    \end{tabular*}%
  }%
  \begin{tabular}{lrrrrrrrrrr}
\toprule
$\lambda$ & 0.100 & 0.200 & 0.300 & 0.400 & 0.500 & 0.600 & 0.700 & 0.800 & 0.900 & 1.000 \\
\midrule
Unreg. ($\lambda=0$) & 9130.10 & 5885.35 & 781.80 & 223.97 & 106.42 & 60.41 & 39.07 & 28.43 & 22.96 & \textbf{15.85} \\
$\lambda=0.000005$ & 11983.68 & 2819.26 & 587.96 & 178.03 & 69.14 & 38.42 & 27.14 & 21.92 & 19.04 & 16.06 \\
$\lambda=0.00001$ & 5442.37 & 1502.64 & 328.19 & 109.19 & \textbf{48.44} & \textbf{30.83} & \textbf{23.87} & \textbf{20.34} & \textbf{18.45} & 16.17 \\
$\lambda=0.00005$ & \textbf{3653.86} & \textbf{457.59} & \textbf{151.60} & \textbf{79.93} & 50.61 & 36.68 & 29.26 & 24.77 & 22.12 & 18.67 \\
\bottomrule
\end{tabular}
  \endgroup

\end{table}

}

\subsection{Llama 560M Results}

{
\setlength{\tabcolsep}{3pt}

\begin{table}[H]
    \centering
    \caption{Llama 560M -- Plain SVD -- ARC-Challenge}
    \label{tab:llama560m-plain-arc-challenge}
    \footnotesize
  \begingroup
  \renewenvironment{tabular}[1]{%
    \begin{tabular*}{0.95\textwidth}{@{\extracolsep{\fill}}#table/llama_560m/plain_downstream_arc_challenge.tex@{}}%
  }{%
    \end{tabular*}%
  }%
  \begin{tabular}{lrrrrrrrrrr}
\toprule
$\lambda$ & 0.100 & 0.200 & 0.300 & 0.400 & 0.500 & 0.600 & 0.700 & 0.800 & 0.900 & 1.000 \\
\midrule
Unreg. ($\lambda=0$) & \textbf{27.39} & 25.68 & 23.72 & 25.34 & 24.49 & 28.58 & \textbf{29.52} & \textbf{31.14} & \textbf{30.63} & \textbf{30.55} \\
$\lambda=0.000005$ & 26.28 & \textbf{26.19} & 23.55 & \textbf{27.90} & \textbf{28.33} & \textbf{29.44} & 29.27 & 29.35 & 30.12 & 29.69 \\
$\lambda=0.00001$ & 26.37 & 24.23 & \textbf{26.02} & 27.30 & 27.39 & 28.24 & 28.50 & 28.07 & 28.67 & 28.33 \\
\bottomrule
\end{tabular}
  \endgroup

\end{table}

\begin{table}[H]
    \centering
    \caption{Llama 560M -- Plain SVD -- ARC-Easy}
    \label{tab:llama560m-plain-arc-easy}
    \footnotesize
  \begingroup
  \renewenvironment{tabular}[1]{%
    \begin{tabular*}{0.95\textwidth}{@{\extracolsep{\fill}}#table/llama_560m/plain_downstream_arc_easy.tex@{}}%
  }{%
    \end{tabular*}%
  }%
  \begin{tabular}{lrrrrrrrrrr}
\toprule
$\lambda$ & 0.100 & 0.200 & 0.300 & 0.400 & 0.500 & 0.600 & 0.700 & 0.800 & 0.900 & 1.000 \\
\midrule
Unreg. ($\lambda=0$) & 26.09 & 25.72 & 26.14 & 28.79 & 34.64 & 42.21 & 47.52 & 52.78 & 54.08 & \textbf{56.82} \\
$\lambda=0.000005$ & \textbf{26.43} & 26.22 & 39.02 & \textbf{50.08} & \textbf{52.40} & \textbf{53.41} & \textbf{54.04} & \textbf{54.12} & \textbf{54.38} & 54.55 \\
$\lambda=0.00001$ & 24.92 & \textbf{27.61} & \textbf{45.50} & 49.45 & 52.02 & 52.78 & 52.95 & 53.32 & 53.49 & 53.54 \\
\bottomrule
\end{tabular}
  \endgroup

\end{table}

\begin{table}[H]
    \centering
    \caption{Llama 560M -- Plain SVD -- HellaSwag}
    \label{tab:llama560m-plain-hellaswag}
    \footnotesize
  \begingroup
  \renewenvironment{tabular}[1]{%
    \begin{tabular*}{0.95\textwidth}{@{\extracolsep{\fill}}#table/llama_560m/plain_downstream_hellaswag.tex@{}}%
  }{%
    \end{tabular*}%
  }%
  \begin{tabular}{lrrrrrrrrrr}
\toprule
$\lambda$ & 0.100 & 0.200 & 0.300 & 0.400 & 0.500 & 0.600 & 0.700 & 0.800 & 0.900 & 1.000 \\
\midrule
Unreg. ($\lambda=0$) & 26.16 & 26.26 & 26.39 & 26.80 & 30.80 & 35.87 & 40.84 & \textbf{42.88} & \textbf{43.93} & \textbf{44.32} \\
$\lambda=0.000005$ & 25.70 & 26.44 & 31.60 & \textbf{37.18} & \textbf{39.74} & \textbf{40.63} & \textbf{41.31} & 41.85 & 42.02 & 42.45 \\
$\lambda=0.00001$ & \textbf{26.21} & \textbf{26.50} & \textbf{34.22} & 37.00 & 38.74 & 39.69 & 40.15 & 40.32 & 40.47 & 40.71 \\
\bottomrule
\end{tabular}
  \endgroup

\end{table}

\begin{table}[H]
    \centering
    \caption{Llama 560M -- Plain SVD -- LAMBADA}
    \label{tab:llama560m-plain-lambada-openai}
    \footnotesize
  \begingroup
  \renewenvironment{tabular}[1]{%
    \begin{tabular*}{0.95\textwidth}{@{\extracolsep{\fill}}#table/llama_560m/plain_downstream_lambada.tex@{}}%
  }{%
    \end{tabular*}%
  }%
  \begin{tabular}{lrrrrrrrrrr}
\toprule
$\lambda$ & 0.100 & 0.200 & 0.300 & 0.400 & 0.500 & 0.600 & 0.700 & 0.800 & 0.900 & 1.000 \\
\midrule
Unreg. ($\lambda=0$) & \textbf{0.00} & 0.00 & 0.00 & 0.37 & 3.28 & 16.51 & 26.14 & 31.19 & 33.67 & \textbf{36.43} \\
$\lambda=0.000005$ & \textbf{0.00} & 0.00 & 9.37 & 23.91 & 30.39 & \textbf{34.14} & \textbf{35.09} & \textbf{35.07} & \textbf{34.80} & 35.94 \\
$\lambda=0.00001$ & \textbf{0.00} & \textbf{0.02} & \textbf{22.69} & \textbf{29.40} & \textbf{31.96} & 33.13 & 33.48 & 32.58 & 32.78 & 33.40 \\
\bottomrule
\end{tabular}
  \endgroup

\end{table}

\begin{table}[H]
    \centering
    \caption{Llama 560M -- Plain SVD -- OpenBookQA}
    \label{tab:llama560m-plain-openbookqa}
    \footnotesize
  \begingroup
  \renewenvironment{tabular}[1]{%
    \begin{tabular*}{0.95\textwidth}{@{\extracolsep{\fill}}#table/llama_560m/plain_downstream_openbookqa.tex@{}}%
  }{%
    \end{tabular*}%
  }%
  \begin{tabular}{lrrrrrrrrrr}
\toprule
$\lambda$ & 0.100 & 0.200 & 0.300 & 0.400 & 0.500 & 0.600 & 0.700 & 0.800 & 0.900 & 1.000 \\
\midrule
Unreg. ($\lambda=0$) & 28.20 & 25.40 & 23.00 & 25.60 & 27.60 & 33.60 & \textbf{35.80} & \textbf{37.20} & \textbf{35.20} & 34.80 \\
$\lambda=0.000005$ & \textbf{29.00} & \textbf{26.60} & 27.80 & 31.00 & \textbf{33.80} & \textbf{34.00} & 35.00 & 34.40 & 34.60 & 34.40 \\
$\lambda=0.00001$ & 26.20 & 24.60 & \textbf{31.20} & \textbf{31.20} & 32.20 & 33.20 & 34.20 & 34.80 & 34.60 & \textbf{35.20} \\
\bottomrule
\end{tabular}
  \endgroup

\end{table}

\begin{table}[H]
    \centering
    \caption{Llama 560M -- Plain SVD -- PIQA}
    \label{tab:llama560m-plain-piqa}
    \footnotesize
  \begingroup
  \renewenvironment{tabular}[1]{%
    \begin{tabular*}{0.95\textwidth}{@{\extracolsep{\fill}}#table/llama_560m/plain_downstream_piqa.tex@{}}%
  }{%
    \end{tabular*}%
  }%
  \begin{tabular}{lrrrrrrrrrr}
\toprule
$\lambda$ & 0.100 & 0.200 & 0.300 & 0.400 & 0.500 & 0.600 & 0.700 & 0.800 & 0.900 & 1.000 \\
\midrule
Unreg. ($\lambda=0$) & \textbf{50.54} & 50.16 & 50.54 & 53.54 & 56.64 & 60.07 & 64.15 & 66.43 & \textbf{68.12} & \textbf{69.04} \\
$\lambda=0.000005$ & 50.05 & 48.20 & 58.16 & 62.79 & 65.18 & \textbf{66.49} & \textbf{67.52} & \textbf{67.85} & 67.79 & 68.23 \\
$\lambda=0.00001$ & 49.40 & \textbf{51.74} & \textbf{61.32} & \textbf{64.09} & \textbf{65.72} & 65.89 & 66.54 & 67.03 & 66.97 & 66.92 \\
\bottomrule
\end{tabular}
  \endgroup

\end{table}

\begin{table}[H]
    \centering
    \caption{Llama 560M -- Plain SVD -- FineWeb-Edu perplexity}
    \label{tab:llama560m-plain-ppl}
    \footnotesize
  \begingroup
  \renewenvironment{tabular}[1]{%
    \begin{tabular*}{0.95\textwidth}{@{\extracolsep{\fill}}#table/llama_560m/plain_ppl.tex@{}}%
  }{%
    \end{tabular*}%
  }%
  \begin{tabular}{lrrrrrrrrrr}
\toprule
$\lambda$ & 0.100 & 0.200 & 0.300 & 0.400 & 0.500 & 0.600 & 0.700 & 0.800 & 0.900 & 1.000 \\
\midrule
Unreg. ($\lambda=0$) & \textbf{21391.07} & 40587.82 & 21837.16 & 30551.34 & 581.20 & 55.90 & 28.62 & 20.61 & 17.10 & \textbf{12.61} \\
$\lambda=0.000005$ & 42353.88 & 14071.56 & 93.47 & 24.59 & 17.41 & 15.06 & \textbf{14.11} & \textbf{13.71} & \textbf{13.54} & 13.38 \\
$\lambda=0.00001$ & 72488.98 & \textbf{2113.69} & \textbf{31.16} & \textbf{19.55} & \textbf{16.39} & \textbf{15.05} & 14.43 & 14.15 & 13.97 & 13.76 \\
\bottomrule
\end{tabular}
  \endgroup

\end{table}

\begin{table}[H]
    \centering
    \caption{Llama 560M -- SVD-LLM (whitened) -- ARC-Challenge}
    \label{tab:llama560m-whitening-arc-challenge}
    \footnotesize
  \begingroup
  \renewenvironment{tabular}[1]{%
    \begin{tabular*}{0.95\textwidth}{@{\extracolsep{\fill}}#table/llama_560m/whitening_downstream_arc_challenge.tex@{}}%
  }{%
    \end{tabular*}%
  }%
  \begin{tabular}{lrrrrrrrrrr}
\toprule
$\lambda$ & 0.100 & 0.200 & 0.300 & 0.400 & 0.500 & 0.600 & 0.700 & 0.800 & 0.900 & 1.000 \\
\midrule
Unreg. ($\lambda=0$) & 25.17 & 23.38 & 23.21 & 23.72 & 24.91 & 25.34 & 28.07 & 27.56 & 28.33 & \textbf{30.55} \\
$\lambda=0.000005$ & \textbf{25.68} & \textbf{23.55} & 24.74 & \textbf{27.47} & \textbf{27.82} & \textbf{28.33} & \textbf{29.01} & \textbf{29.61} & \textbf{29.78} & 29.69 \\
$\lambda=0.00001$ & 23.46 & 22.27 & \textbf{27.39} & 27.39 & 27.22 & 28.07 & 28.33 & 28.50 & 28.50 & 28.33 \\
\bottomrule
\end{tabular}
  \endgroup

\end{table}

\begin{table}[H]
    \centering
    \caption{Llama 560M -- SVD-LLM (whitened) -- ARC-Easy}
    \label{tab:llama560m-whitening-arc-easy}
    \footnotesize
  \begingroup
  \renewenvironment{tabular}[1]{%
    \begin{tabular*}{0.95\textwidth}{@{\extracolsep{\fill}}#table/llama_560m/whitening_downstream_arc_easy.tex@{}}%
  }{%
    \end{tabular*}%
  }%
  \begin{tabular}{lrrrrrrrrrr}
\toprule
$\lambda$ & 0.100 & 0.200 & 0.300 & 0.400 & 0.500 & 0.600 & 0.700 & 0.800 & 0.900 & 1.000 \\
\midrule
Unreg. ($\lambda=0$) & 25.25 & 27.36 & 31.65 & 35.52 & 42.34 & 47.39 & 51.35 & 54.08 & \textbf{55.60} & \textbf{56.82} \\
$\lambda=0.000005$ & 25.88 & 31.99 & 41.25 & \textbf{50.08} & 50.97 & \textbf{53.16} & \textbf{53.83} & \textbf{54.17} & 54.59 & 54.55 \\
$\lambda=0.00001$ & \textbf{27.69} & \textbf{37.67} & \textbf{46.42} & 49.96 & \textbf{52.10} & 52.82 & 53.62 & 53.24 & 53.28 & 53.54 \\
\bottomrule
\end{tabular}
  \endgroup

\end{table}

\begin{table}[H]
    \centering
    \caption{Llama 560M -- SVD-LLM (whitened) -- HellaSwag}
    \label{tab:llama560m-whitening-hellaswag}
    \footnotesize
  \begingroup
  \renewenvironment{tabular}[1]{%
    \begin{tabular*}{0.95\textwidth}{@{\extracolsep{\fill}}#table/llama_560m/whitening_downstream_hellaswag.tex@{}}%
  }{%
    \end{tabular*}%
  }%
  \begin{tabular}{lrrrrrrrrrr}
\toprule
$\lambda$ & 0.100 & 0.200 & 0.300 & 0.400 & 0.500 & 0.600 & 0.700 & 0.800 & 0.900 & 1.000 \\
\midrule
Unreg. ($\lambda=0$) & 25.65 & 26.16 & 27.06 & 29.72 & 33.40 & 36.82 & 39.28 & 41.12 & \textbf{42.32} & \textbf{44.32} \\
$\lambda=0.000005$ & 25.35 & 27.64 & 32.36 & 36.63 & \textbf{38.54} & \textbf{40.49} & \textbf{41.30} & \textbf{41.59} & 41.92 & 42.45 \\
$\lambda=0.00001$ & \textbf{25.85} & \textbf{28.91} & \textbf{33.84} & \textbf{36.73} & 38.17 & 39.11 & 39.75 & 40.19 & 40.43 & 40.71 \\
\bottomrule
\end{tabular}
  \endgroup

\end{table}

\begin{table}[H]
    \centering
    \caption{Llama 560M -- SVD-LLM (whitened) -- LAMBADA}
    \label{tab:llama560m-whitening-lambada-openai}
    \footnotesize
  \begingroup
  \renewenvironment{tabular}[1]{%
    \begin{tabular*}{0.95\textwidth}{@{\extracolsep{\fill}}#table/llama_560m/whitening_downstream_lambada.tex@{}}%
  }{%
    \end{tabular*}%
  }%
  \begin{tabular}{lrrrrrrrrrr}
\toprule
$\lambda$ & 0.100 & 0.200 & 0.300 & 0.400 & 0.500 & 0.600 & 0.700 & 0.800 & 0.900 & 1.000 \\
\midrule
Unreg. ($\lambda=0$) & 0.00 & 0.00 & 0.62 & 4.95 & 13.86 & 20.98 & 25.67 & 28.59 & 31.85 & \textbf{36.43} \\
$\lambda=0.000005$ & \textbf{0.06} & 3.36 & 21.42 & \textbf{29.05} & \textbf{32.84} & \textbf{35.16} & \textbf{35.44} & \textbf{35.30} & \textbf{35.30} & 35.94 \\
$\lambda=0.00001$ & 0.00 & \textbf{4.99} & \textbf{22.34} & 28.76 & 29.58 & 31.79 & 32.56 & 32.85 & 33.28 & 33.40 \\
\bottomrule
\end{tabular}
  \endgroup

\end{table}

\begin{table}[H]
    \centering
    \caption{Llama 560M -- SVD-LLM (whitened) -- OpenBookQA}
    \label{tab:llama560m-whitening-openbookqa}
    \footnotesize
  \begingroup
  \renewenvironment{tabular}[1]{%
    \begin{tabular*}{0.95\textwidth}{@{\extracolsep{\fill}}#table/llama_560m/whitening_downstream_openbookqa.tex@{}}%
  }{%
    \end{tabular*}%
  }%
  \begin{tabular}{lrrrrrrrrrr}
\toprule
$\lambda$ & 0.100 & 0.200 & 0.300 & 0.400 & 0.500 & 0.600 & 0.700 & 0.800 & 0.900 & 1.000 \\
\midrule
Unreg. ($\lambda=0$) & \textbf{29.60} & \textbf{28.60} & 25.60 & 27.20 & 27.40 & 31.60 & 33.00 & \textbf{34.60} & 34.60 & 34.80 \\
$\lambda=0.000005$ & 25.80 & 26.80 & 28.20 & 30.60 & \textbf{34.40} & \textbf{34.20} & \textbf{33.80} & \textbf{34.60} & \textbf{34.80} & 34.40 \\
$\lambda=0.00001$ & 27.20 & 24.40 & \textbf{29.60} & \textbf{31.80} & 31.00 & 33.00 & \textbf{33.80} & 34.40 & 34.00 & \textbf{35.20} \\
\bottomrule
\end{tabular}
  \endgroup

\end{table}

\begin{table}[H]
    \centering
    \caption{Llama 560M -- SVD-LLM (whitened) -- PIQA}
    \label{tab:llama560m-whitening-piqa}
    \footnotesize
  \begingroup
  \renewenvironment{tabular}[1]{%
    \begin{tabular*}{0.95\textwidth}{@{\extracolsep{\fill}}#table/llama_560m/whitening_downstream_piqa.tex@{}}%
  }{%
    \end{tabular*}%
  }%
  \begin{tabular}{lrrrrrrrrrr}
\toprule
$\lambda$ & 0.100 & 0.200 & 0.300 & 0.400 & 0.500 & 0.600 & 0.700 & 0.800 & 0.900 & 1.000 \\
\midrule
Unreg. ($\lambda=0$) & 50.49 & 50.82 & 52.83 & 54.79 & 59.96 & 63.28 & 65.40 & 66.38 & 67.46 & \textbf{69.04} \\
$\lambda=0.000005$ & \textbf{50.54} & 54.30 & 60.01 & 63.87 & \textbf{65.61} & \textbf{67.03} & \textbf{66.81} & \textbf{67.63} & \textbf{68.01} & 68.23 \\
$\lambda=0.00001$ & 50.38 & \textbf{55.77} & \textbf{61.64} & \textbf{63.93} & 64.64 & 66.27 & 66.59 & 66.59 & 66.92 & 66.92 \\
\bottomrule
\end{tabular}
  \endgroup

\end{table}

\begin{table}[H]
    \centering
    \caption{Llama 560M -- SVD-LLM (whitened) -- FineWeb-Edu perplexity}
    \label{tab:llama560m-whitening-ppl}
    \footnotesize
  \begingroup
  \renewenvironment{tabular}[1]{%
    \begin{tabular*}{0.95\textwidth}{@{\extracolsep{\fill}}#table/llama_560m/whitening_ppl.tex@{}}%
  }{%
    \end{tabular*}%
  }%
  \begin{tabular}{lrrrrrrrrrr}
\toprule
$\lambda$ & 0.100 & 0.200 & 0.300 & 0.400 & 0.500 & 0.600 & 0.700 & 0.800 & 0.900 & 1.000 \\
\midrule
Unreg. ($\lambda=0$) & 89505.28 & 7759.18 & 475.48 & 82.92 & 36.62 & 23.30 & 18.17 & 15.68 & 14.33 & \textbf{12.61} \\
$\lambda=0.000005$ & 9702.74 & 187.39 & 35.58 & 20.72 & 16.35 & \textbf{14.67} & \textbf{13.97} & \textbf{13.65} & \textbf{13.52} & 13.38 \\
$\lambda=0.00001$ & \textbf{1080.51} & \textbf{76.44} & \textbf{25.46} & \textbf{18.30} & \textbf{15.93} & 14.87 & 14.37 & 14.10 & 13.93 & 13.76 \\
\bottomrule
\end{tabular}
  \endgroup

\end{table}

}

\section{Overhead on LLM training}\label{sec:app_llm_training_overhead}

As noted before, our main results use the original OLMo Llama-like implementation, which uses \texttt{OLMoLlamaBlock} and computes attention manually. To provide a more realistic benchmark, we also measure overhead under several implementation choices. Specifically, we benchmark \texttt{OLMoLlamaBlock}, which uses separate Q, K, and V projections, with both manual attention and fused attention; we add the fused attention variant by modifying the block. We also benchmark \texttt{OLMoSequentialBlock}, which uses a single shared QKV projection and only uses fused attention. Here, fused attention refers to \texttt{torch.nn.functional.scaled\_dot\_product\_attention}.

For each model/configuration, we run four matched pairs of jobs. Each pair consists of two independent training runs with identical model size, attention implementation, block type, microbatch size, data, and benchmark window. One baseline is run without regularization and one run with regularization. Within a pair, the two runs are performed on the exact same node, as we observed slight overall performance variations per node. For these, we use autotuning in the Polar Express compilation step, which we observed to yield slight gains in terms of overhead (unlike in the vision benchmarks).

Each individual run measures the wall-clock time for steps 101--700. The 135M experiments use 4 H100 GPUs, and the 560M experiments use 8 H100 GPUs, following our main experiments.

We aggregate results by first computing the overhead for each matched pair,
\[
100 \times \left(\frac{T_{\mathrm{reg},i}}{T_{\mathrm{base},i}} - 1\right),
\]
where $T_{\mathrm{reg},i}$ and $T_{\mathrm{base},i}$ are the 600-step wall times for pair $i$. We report the mean and standard deviation of these paired overheads. Seconds and tokens/sec are reported as arithmetic means over the corresponding baseline or regularized runs. Peak memory is computed similarly, and was stable across runs. The reported quantities are per device, and the peak memory is taken as the maximum over all GPUs.

Results are shown in \cref{tab:llm_overhead}. MBS denotes the microbatch size per GPU. Note that, for the sake of fairness, it is adjusted for fused attention runs because each sequence consumes less memory. On average, overhead remains below 1\% across the different settings, although individual runs can slightly exceed 1\%. Peak memory overhead is negligible in these benchmarks.

\begin{table}[H]
\footnotesize
\centering
\setlength{\tabcolsep}{4pt}
\caption{LLM overhead benchmark summary.}
\label{tab:llm_overhead}
\begin{tabular}{lllcccccc}
\toprule
Size & Attention & Block & MBS & Reg. & Seconds & Tokens/sec & Overhead & Peak mem. (GB) \\
\midrule
135M & Manual & Llama & 32 & no & 735.031 & 106,992.99 & 0.000\% & 58.890 \\
135M & Manual & Llama & 32 & yes & 738.614 & 106,474.08 & 0.487\% $\pm$ 0.042\% & 59.010 \\
\midrule
135M & SDPA & Llama & 64 & no & 355.179 & 221,418.77 & 0.000\% & 56.810 \\
135M & SDPA & Llama & 64 & yes & 358.113 & 219,606.97 & 0.826\% $\pm$ 0.275\% & 56.930 \\
\midrule
135M & SDPA & Sequential & 64 & no & 345.173 & 227,837.61 & 0.000\% & 56.780 \\
135M & SDPA & Sequential & 64 & yes & 347.225 & 226,491.57 & 0.594\% $\pm$ 0.122\% & 56.930 \\
\midrule
560M & Manual & Llama & 12 & no & 1,267.318 & 31,027.41 & 0.000\% & 72.880 \\
560M & Manual & Llama & 12 & yes & 1,276.090 & 30,814.13 & 0.692\% $\pm$ 0.003\% & 72.890 \\
\midrule
560M & SDPA & Llama & 24 & no & 616.515 & 63,780.88 & 0.000\% & 61.010 \\
560M & SDPA & Llama & 24 & yes & 622.627 & 63,154.74 & 0.991\% $\pm$ 0.014\% & 61.020 \\
\midrule
560M & SDPA & Sequential & 24 & no & 596.743 & 65,894.00 & 0.000\% & 61.030 \\
560M & SDPA & Sequential & 24 & yes & 602.191 & 65,297.77 & 0.913\% $\pm$ 0.079\% & 61.040 \\
\bottomrule
\end{tabular}
\end{table}

\clearpage
\section{All Llama 135M Singular Values}
\begin{figure}[!htbp]
\centering
\scriptsize
\includegraphics[width=1.00\textwidth]{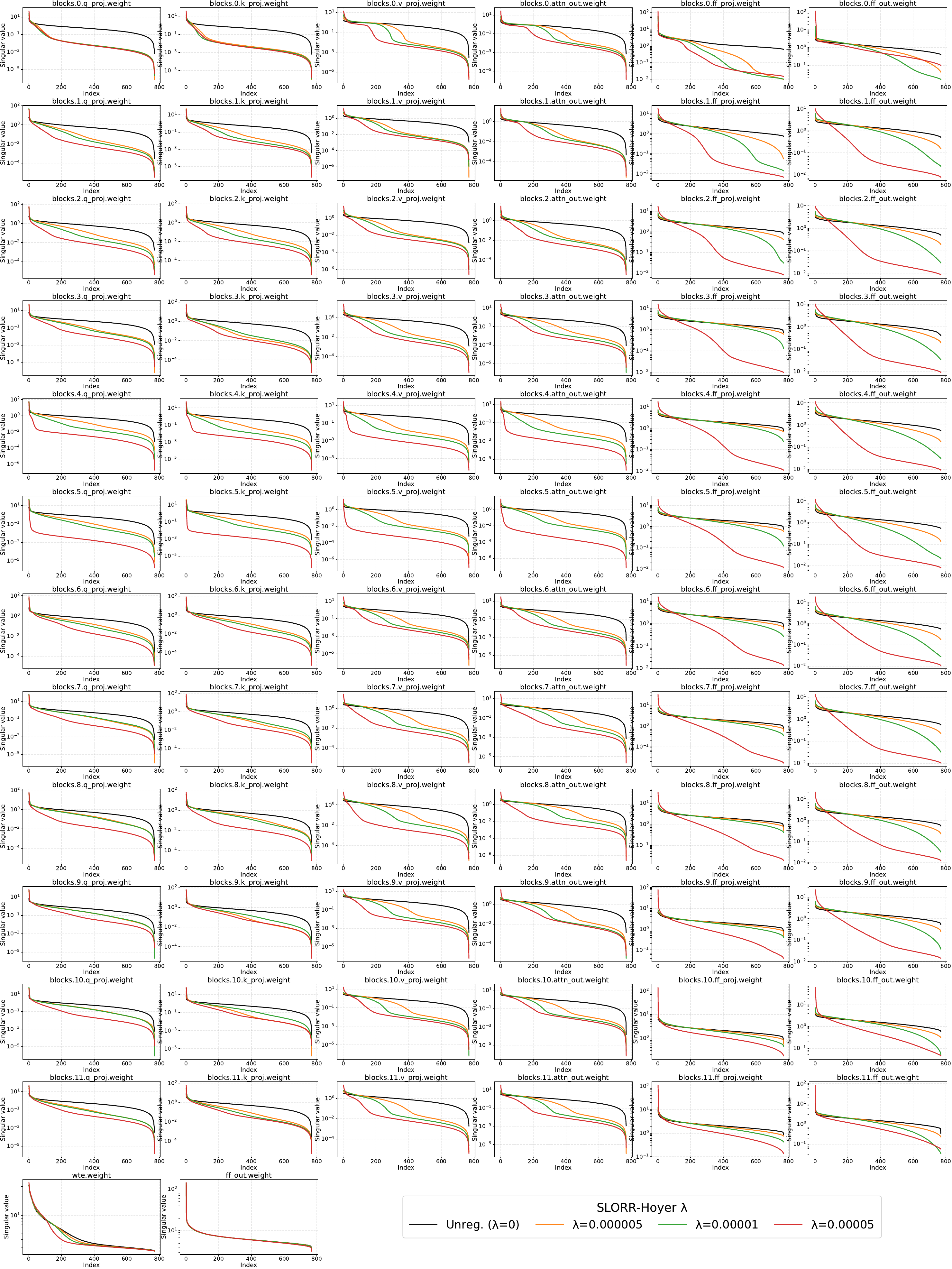}%
\caption{Singular value spectra for all layers of Llama 135M. The different curves correspond to different values of $\lambda$. As noted in the main text, only layers inside transformer blocks are regularized, but we also include weight embeddings and the final projection layer for completeness.}
\label{fig:llama135m_singular_values}
\end{figure}

\section*{LLM Usage Disclosure}
We used LLM tools in limited supporting roles during this work. In particular, they helped identify potentially relevant related work, in addition to manual search. All cited works were selected and read by the authors; LLMs primarily assisted with search. We also used LLMs to discuss our research ideas, suggest possible proof steps, check the presentation of existing technical arguments, support code development, and proofread or help polish parts of the writing. All results presented were checked and validated by the authors.

\end{document}